\def\name{\texttt{TCMF}\xspace}
\def\inneralg{\texttt{JIMF}\xspace}
\def\perpca{\texttt{PerPCA}\xspace}
\def\hmf{\texttt{HMF}\xspace}
\def\robustpca{Robuct PCA\xspace}
\newcommand{\revise}[1]{\textcolor{black}{ #1}}
\newcommand{\mO}{\mathcal{O}}
\newcommand{\abs}[1]{\left\vert #1\right\vert}
\newcommand{\norm}[1]{\left\lVert#1\right\rVert}
\newcommand{\supp}[1]{\text{supp}\left(#1\right)}
\newcommand{\tr}[1]{\text{Tr}\left(#1\right)}
\newcommand{\Pj}[1]{\mathbf{P}_{#1}}
\newcommand{\hard}[2]{\text{Hard}_{#1}\left[#2\right]}
\newcommand{\grof}[2]{\mathcal{GR}_{#1}\left(#2\right)}
\newcommand{\matA}{\mathbf{A}}
\newcommand{\matB}{\mathbf{B}}
\newcommand{\matC}{\mathbf{C}}
\newcommand{\matE}{\mathbf{E}}
\newcommand{\matF}{\mathbf{F}}
\newcommand{\matG}{\mathbf{G}}
\newcommand{\matH}{\mathbf{H}}
\newcommand{\hmatHst}{{\hat{\mathbf{H}}}^{\star}{}}
\newcommand{\hmatH}{{\hat{\mathbf{H}}}{}}
\newcommand{\matHst}{{\mathbf{H}^\star}}
\newcommand{\matI}{\mathbf{I}}
\newcommand{\matL}{\mathbf{L}}
\newcommand{\matLst}{{\mathbf{L}^\star}}
\newcommand{\matM}{\mathbf{M}}
\newcommand{\matMst}{{\mathbf{M}^\star}}
\newcommand{\matP}{\mathbf{P}}
\newcommand{\matQ}{\mathbf{Q}}
\newcommand{\matR}{\mathbf{R}}
\newcommand{\matSigma}{\mathbf{\Sigma}}
\newcommand{\matSigmast}{{\mathbf{\Sigma}^\star}}
\newcommand{\matT}{\mathbf{T}}
\newcommand{\matLambda}{\mathbf{\Lambda}}
\newcommand{\matLambdast}{{\mathbf{\Lambda}^\star}}
\newcommand{\matS}{\mathbf{S}}
\newcommand{\matSst}{{\mathbf{S}^\star}}
\newcommand{\matU}{\mathbf{U}}
\newcommand{\matUst}{{\mathbf{U}^\star}}
\newcommand{\dtmatH}{\delta\mathbf{H}}
\newcommand{\matV}{\mathbf{V}}
\newcommand{\matVst}{{\mathbf{V}^\star}}
\newcommand{\matW}{\mathbf{W}}
\newcommand{\matWst}{{\mathbf{W}^\star}}
\newcommand{\matX}{\mathbf{X}}
\newcommand{\matY}{\mathbf{Y}}
\newcommand{\hmatL}{\hat{\mathbf{L}}}
\newcommand{\hmatLeps}{\hat{\mathbf{L}}^{\epsilon}}
\newcommand{\hmatM}{\hat{\mathbf{M}}}
\newcommand{\hmatS}{\hat{\mathbf{S}}}
\newcommand{\hmatU}{\hat{\mathbf{U}}}
\newcommand{\hmatV}{\hat{\mathbf{V}}}
\newcommand{\hmatUeps}{\hat{\mathbf{U}}^{\epsilon}{}}
\newcommand{\hmatVeps}{\hat{\mathbf{V}}^{\epsilon}{}}
\newcommand{\vecv}{\mathbf{v}}
\newcommand{\vech}{\mathbf{h}}
\newcommand{\vece}{\mathbf{e}}
\newcommand{\vecx}{\mathbf{x}}
\newcommand{\tf}{\tilde{f}}
\newcommand{\gop}{\sigma_{\max}}
\newcommand{\fracud}{\frac{1}{2}}
\newcommand{\bn}{{\overline{n}}}
\newcommand{\abnb}{\alpha \bn\max_{i}\norm{\matE_{(i)}}_{\infty}}
\newcommand{\sabnb}{\sqrt{\alpha} \bn\max_{i}\norm{\matE_{(i)}}_{\infty}}
\newcommand{\B}{\norm{\matE}_{\infty}}
\newcommand{\Bi}{\norm{\matE_{(i)}}_{\infty}}
\newcommand{\murnone}{\frac{\mu^2 r}{n_1}}
\newcommand{\murntwo}{\frac{\mu^2 r}{n_2}}
\newcommand{\murnb}{\frac{\mu^2 r}{\overline{n}}}
\newcommand{\sm}{\sigma_{\min}}
\newcommand{\sms}{\sigma_{\min}^2}
\newcommand{\smq}{\sigma_{\min}^4}
\newcommand{\rhom}{\rho_{\min}}
\newcommand{\bcnum}{\left(\frac{\gop}{\sm}\right)}
\newcommand{\invcnum}{\frac{\sm}{\gop}}
\newcommand{\tkappa}{\frac{2\gop^2}{\frac{3}{4}\sms}}
\newcommand{\brtkappa}{\left(\frac{2\gop^2}{\frac{3}{4}\sms}\right)}
\newcommand{\hgzerodef}{\matT_{(0)}\hmatH_g\matLambda_6^{-1}+\sum_{i=1}^N\matT_{(i)}\hmatH_{(i),l}\matLambda_{2,(i)}^{-1}\matLambda_{5,(i)}\matLambda_6^{-1}}
\newcommand{\hilzerodef}{\matT_{(i)}\matH_{(i),l}\matLambda_{2,(i)}^{-1}+\hmatH_{g,0}\matLambda_{4,(i)}\matLambda_{2,(i)}^{-1}}
\begin{document}

\jmlrheading{25}{2024}{1-\pageref{LastPage}}{3/24; Revised
10/24}{11/24}{24-0400}{Naichen Shi, Salar Fattahi, Raed Al Kontar}
\ShortHeadings{TCMF}{Shi, Fattahi, and Al Kontar}


\title{Triple Component Matrix Factorization: Untangling Global, Local, and Noisy Components}

\author{\name Naichen Shi \email naichens@umich.edu \\
       \addr Department of Industrial \& Operations Engineering\\
       University of Michigan\\
       Ann Arbor, MI 48109, USA       
       \AND
       \name Salar Fattahi \email fattahi@umich.edu \\
       \addr Department of Industrial \& Operations Engineering\\
       University of Michigan\\
       Ann Arbor, MI 48109, USA
       \AND
       \name Raed Al Kontar \thanks{Corresponding author} \email alkontar@umich.edu \\
       \addr Department of Industrial \& Operations Engineering\\
       University of Michigan\\
       Ann Arbor, MI 48109, USA}

\editor{Mahdi Soltanolkotabi}
\maketitle

\begin{abstract}
In this work, we study the problem of common and unique feature extraction from noisy data. When we have $N$ observation matrices from $N$ different and associated sources corrupted by sparse and potentially gross noise, can we recover the common and unique components from these noisy observations? This is a challenging task as the number of parameters to estimate is approximately thrice the number of observations. Despite the difficulty, we propose an intuitive alternating minimization algorithm called triple component matrix factorization (\name) to recover the three components exactly. \name is distinguished from existing works in literature thanks to two salient features. First, \name is a principled method to separate the three components given noisy observations provably. Second, the bulk of the computation in \name can be distributed. 
On the technical side, we formulate the problem as a constrained nonconvex nonsmooth optimization problem. Despite the intricate nature of the problem, we provide a Taylor series characterization of its solution by solving the corresponding Karush–Kuhn–Tucker conditions.
Using this characterization, we can show that the alternating minimization algorithm makes significant progress at each iteration and converges into the ground truth at a linear rate. Numerical experiments in video segmentation and anomaly detection highlight the superior feature extraction abilities of \name.
\end{abstract}

\begin{keywords}
  Matrix Factorization, Heterogeneity, Alternating minimization, Sparse noise, Outlier identification
\end{keywords}

\section{Introduction}
In the era of Big Data, an important task is to find low-rank features from high-dimensional observations. Methods including principal component analysis  \citep{hotellingpca}, low-rank matrix factorization \citep{mfrecommender}, and dictionary learning \citep{ksvd}, have found success in numerous fields of statistics and machine learning \citep{wright2022high}. Among them, matrix factorization (MF) is an efficient method to identify the features that best explain the observation matrices. 

Despite the wide popularity, standard MF methods are known to be brittle in the presence of outliers with huge noise \citep{robustpca}. These noises are often sparse but can have large norms.
 A series of methods (e.g., \citep{robustpca,nonconvexrobustpca,robustmatrixcompletion1, fattahi2020exact, chenfan}) have been developed to estimate low-rank features from data that contain outliers. When the portion of outliers is not too large, one can \textit{provably} identify the outliers and the low-rank components with convex programming \citep{robustpca} or nonconvex optimization algorithms equipped with convergence guarantees \citep{nonconvexrobustpca}.

Recently, there has been a growing number of applications where data are acquired from diverse but connected sources, such as smartphones, car sensors, or medical records from different patients. This type of data displays both mutual and individual characteristics. For instance, in biostatistics, the measurements of different miRNA and gene expressions from the same set of samples can reveal co-varying patterns yet exhibit heterogeneous trends \citep{jive}. Statistical modeling of the common information among all data sources and the specific information for each source is of central interest in these applications. Multiple works propose to recover such common and unique features by minimizing the square norm of the residuals of fitting \citep{jive,cobe, slide,bidifac, groupnmf,inmf, personalizedpca, hmf,perdl}. These methods prove to be useful in aligning genetics features \citep{jive}, visualizing bone and soft tissues in X-ray images \citep{cobe}, functional magnetic resonance imaging \citep{fmri}, surveillance video segmentation \citep{personalizedpca}, stocks market analysis \citep{hmf}, and many more.

Though these algorithms achieve decent performance on multiple applications, they rely on least square estimates, which are not robust to outliers in data. Real-world data are commonly corrupted by outliers \citep{sensornoise}. Factors including measurement errors or sensor malfunctions can give rise to large noise in data. These outliers can substantially skew the estimation of low-rank features. As such, we attempt to answer the following question.

\begin{center}
    \noindent\fcolorbox{white}[rgb]{0.95,0.95,0.95}{\begin{minipage}{0.98\columnwidth}
	\begin{center}
Question: How can one \textit{provably} identify low-rank common and unique information robustly from data corrupted by outlier noise?
\end{center}
\end{minipage}}
\end{center}

A natural thought is to borrow techniques in robust PCA to handle outlier noise. Indeed, there exist a few heuristic methods in literature \citep{rjive,rcica,rajive} to find robust estimates of shared and unique features. These methods often use $\ell_1$ regularization \citep{rjive,rcica} or Huber loss \citep{rajive} to accommodate the sparsity of noise. However, these algorithms are mainly based on heuristics and lack theoretical guarantees, thus potentially compromising the quality of their outputs. A theoretically justifiable method to identify low-rank shared and unique components from outlier noise is still lacking. In this paper, we will study the question rigorously and develop an efficient algorithm to solve it.

\section{Problem Statement}
\label{sec:problemstatement}
We consider the framework where $N$ observation matrices $\matM_{(1)}, \matM_{(2)},\cdots,\matM_{(N)}$ come from $N\in \mathbb{N}^+$ different but associated sources. These matrices $\matM_{(i)}\in \mathbb{R}^{n_1\times n_{2,(i)}}$ have the same number of features $n_1$. To model their commonality and uniqueness, we assume each matrix is driven by $r_1$ shared factors and $r_{2,(i)}$ unique factors and contaminated by potentially gross noise. More specifically, we consider the model where the observation $\matM_{(i)}$ from source $i$ is generated by,
\begin{equation}
\label{eqn:matrixmodel}
    \matM_{(i)} = \matUst_g\matVst_{(i),g}^T+\matUst_{(i),l}\matVst_{(i),l}^T + \matSst_{(i)},
\end{equation} 
where $\matUst_g\in \mathbb{R}^{n_1\times r_1}$, $\matVst_{(i),g}\in \mathbb{R}^{n_{2,(i)}\times r_1}$, $\matUst_{(i),l}\in \mathbb{R}^{n_1\times r_{2,(i)}}$, $\matVst_{(i),l}\in \mathbb{R}^{n_{2,(i)}\times r_{2,(i)}}$, $\matSst_{(i)}\in \mathbb{R}^{n_1\times n_{2,(i)}}$. We use ${}^{\star}$ to denote the ground truth. $r_1$ is the rank of global (shared) feature matrices, and $r_{2,(i)}$ is the rank of local (unique) feature matrix from source $i$. The matrix $\matUst_g\matVst_{(i),g}^T$ models the shared low-rank part of the observation matrix, as the column space is the same across different sources. $\matUst_{(i),l}\matVst_{(i),l}^T$ models the unique low-rank part. 
$\matSst_{(i)}$ models the noise from source $i$.

In matrix factorization problems, the representations $\matUst$ and $\matVst$ often correspond to latent data features. For instance, in recommender systems, $\matUst$ can be interpreted as user features that reveal their preferences on different items in the latent space \citep{mfrecommender}. For better interpretability, it is often desirable to have the underlying features disentangled so that each feature can vary independently of others \citep{betavae}. Under this rationale, we consider the model where shared and unique factors are orthogonal,

\begin{equation}
\label{eqn:matrixorthogonal}
     \matUst_g^T\matUst_{(i),l}=0, \ \forall i\in[N],
\end{equation} 
where $[N]$ denotes the set $\{1,2,\cdots, N\}$. The orthogonality of features implies that the shared and unique features span different subspaces, thus describing different patterns in the observation. The orthogonal condition \eqref{eqn:matrixorthogonal} is thus an inductive bias that reflects our prior belief about the independence between common and unique factors and naturally models a diverse range of applications, such as miRNA and gene expression \citep{jive}, human faces \citep{cobe}, and many more \citep{rjive,personalizedpca}. 

\revise{We should note that the orthogonality~\eqref{eqn:matrixorthogonal} does not limit the model representation power. Suppose $\matUst_g^T\matUst_{(i),l}\neq 0$ otherwise, we can decompose $\matUst_{(i),l}$ into the two parts, $
    \matUst_{(i),l} = \matUst_g\left(\matUst_g^T\matUst_g\right)^{-1}\matUst_g^T\matUst_{(i),l} + \left(\matI-\matUst_g\left(\matUst_g^T\matUst_g\right)^{-1}\matUst_g^T\right)\matUst_{(i),l}$.
The first part is in the column subspace of $\matUst_g$, while the second part is in the orthogonal space of the column subspace of $\matUst_g$. If we define $\widetilde{\matUst}_{(i),l} = \left(\matI-\matUst_g\left(\matUst_g^T\matUst_g\right)^{-1}\matUst_g^T\right)\matUst_{(i),l}$, and $\widetilde{\matVst}_{(i),g} =\matVst_{(i),g}+ \matVst_{(i),l}\matUst_{(i),l}^T\matUst_g\left(\matUst_g^T\matUst_g\right)^{-1}$, we have, $
   \matM_{(i)} = \matUst_g\widetilde{\matVst}_{(i),g}^T+\widetilde{\matUst}_{(i),l}\matVst_{(i),l}^T + \matSst_{(i)}$
where $\matUst_g^T\widetilde{\matUst}_{(i),l}=0$. This formulation admits the form of model~\eqref{eqn:matrixmodel} with constraint~\eqref{eqn:matrixorthogonal}.} 

The noise term $\matSst_{(i)}$ in \eqref{eqn:matrixmodel} models the sparse and large noise, where only a small fraction of $\matSst_{(i)}$ registers as nonzero. The noise sparsity is extensively invoked in literature, particularly when datasets are plagued by outliers \citep{robustpca,nonconvexrobustpca,rpcaoutlier,chenfan}.

\subsection{Challenges}

Given data generation model \eqref{eqn:matrixmodel}, our task is to separate common, individual, and noise components. The task seems Herculean as the problem is under-definite: we need to estimate three sets of parameters from one set of observations. There are two major challenges associated with the problem,

\underline{\textit{Challenge 1: New identifiability conditions are needed.}} Standard analysis in robust PCA \citep{robustpca,nonconvexrobustpca} often uses the incoherence condition to distinguish low-rank components from sparse noise. However, the incoherence condition alone is insufficient to guarantee the separability between common and unique features. Since there are infinitely many ways in which shared, unique, and noise components can form the observation matrices, it is not apparent whether untangling them is even feasible. Thus, the crux of our investigation is to understand when the separation is possible. 

Fortunately, we show that a group of conditions--known here as identifiability conditions--exists that can ensure the precise retrieval of the shared, unique, and sparse noise. Intuitively, these identifiability conditions require the three components to have ``little overlaps''. 

Based on these conditions, we will develop an alternating minimization algorithm called \name to iteratively update the three components. An illustration of the algorithm is shown in the left graph in Figure \ref{fig:difficulty}. The hard-thresholding step finds the closest sparse matrix for the data noise. We use \inneralg to denote a subroutine that represents a group of algorithms (e.g., \citep{jive,personalizedpca}) to identify common and unique low-rank features. In essence, \inneralg solves a sub-problem in \name. It is worth noting that there exist multiple algorithms in literature to implement \inneralg, many of which can produce high-quality outputs. With the implemented \inneralg, \name applies hard thresholding and \inneralg alternatively to estimate the sparse, as well as common and unique low-rank components. The left graph of Figure \ref{fig:difficulty} offers an intuitive understanding of how estimates of various components progress toward the ground truth with each iterative step. 

\begin{figure}[h!]
\centering
\includegraphics[width=14cm]{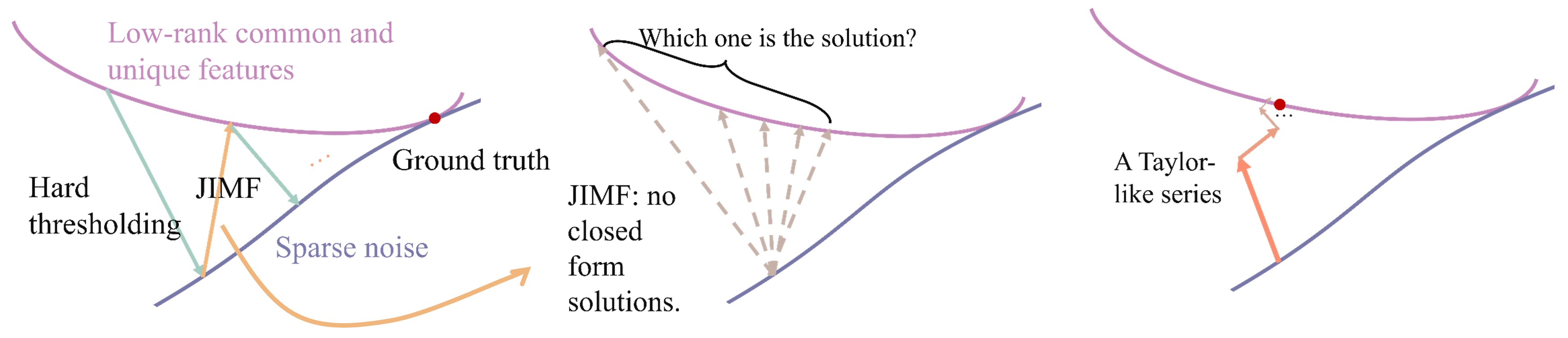}
\caption{\textit{Left}: An illustration of \name's update trajectory. The purple and blue curves represent the spaces for the low-rank and sparse matrices. The algorithm alternatively performs hard thresholding and \inneralg, making the updates closer and closer to the ground truth. \textit{Middle}: An illustration showing why insufficient understanding about the output of \inneralg can be problematic in the convergence analysis. \textit{Right}: Our contribution to represent the solution into a Taylor-like series.}
\label{fig:difficulty}
\end{figure}

\underline{\textit{Challenge 2: New analysis tools are needed.}} Showing the exact recovery of low-rank and sparse components is not easy. Even in standard robust PCA, one needs to apply highly nontrivial analytical techniques to provide theoretical guarantees. For example, Robust PCA \citep{robustpca} relies on a ``golfing scheme'' to construct dual variables that ensure the uniqueness of a convex optimization problem. Nonconvex robust PCA \citep{nonconvexrobustpca} applies a perturbation analysis of SVD to quantify the improvement of the algorithm per iteration. These techniques are tailored for standard robust PCA and cannot be directly extended to the case where both common and unique features are involved, which increases the complexity of the analysis. The major difficulty stems from the fact that \name updates the low-rank common and unique components by another iterative algorithm \inneralg. Unlike robust PCA, the output of \inneralg does not have a closed-form formula. This conceptual hurdle is illustrated in the middle graph of Figure \ref{fig:difficulty}. As a result, novel analysis tools are needed to justify the convergence of the proposed \name.

One of our key contributions in tackling the challenge is to develop innovative analysis tools by solving the Karush–Kuhn–Tucker conditions of the objective of \inneralg and express the solutions into a Taylor-like series. From the Taylor-like series, we can precisely characterize the output of \inneralg, thereby showing the series converge to a close estimate of the ground truth shared and unique features.

The Taylor-like series is depicted in the right graph of Figure \ref{fig:difficulty}. Perhaps surprisingly, regardless of the choice of the subroutine \inneralg, as long as \inneralg finds a close estimate of the optimal solutions to a subproblem, its output can be represented by an infinite series. The series describes the optimal solution of the subproblem and is independent of the intermediate steps in \inneralg. The derivation and analysis of Taylor-like series have stand-alone values in the theoretical research of the sensitivity analysis of matrix factorization. With the new analysis tool, we are able to show that even if the \inneralg only outputs a reasonable approximate solution, the meta-algorithm \name can still take advantage of the information in such an inexact solution to refine the estimates of the three components. We will elaborate on the Taylor-like series in greater detail in Section \ref{sec:convergence} and the Appendix.

We summarize our contributions in the following.

\subsection{Summary of Contributions}

\textbf{Identifiability conditions}. We discover a group of identifiability conditions sufficient for the almost exact recovery of common, unique, and sparse components from noisy observation matrices. Essentially, the identifiability conditions require that the fraction of nonzero entries in the noise not be too large, the factor matrices be incoherent, and unique factors be misaligned. The first two conditions are needed even in the standard analysis of the robust PCA, while the third condition is essential for the disentanglement of the shared and unique components.

\textbf{Efficient and distributed algorithm}. We propose a constrained nonconvex nonsmooth matrix factorization problem to solve the shared, unique, and sparse components. Despite the nonconvexity of the problem, we design a meta-algorithm called \textbf{T}riple \textbf{C}omponent \textbf{M}atrix \textbf{F}actorization (\name) to solve the problem. Our approach is able to leverage a wide range of existing methods for separating the common and unique components to precision $\epsilon$. Furthermore, \inneralg can be distributed if the subroutine \inneralg is distributed.

\textbf{Convergence guarantee}. We show that, under the identifiability conditions, our proposed \name has a convergence guarantee. To the best of our knowledge, such a guarantee is the first of its kind, as it ensures the recovery of common, unique, and noise components to high precision. Our theoretical analysis introduces new techniques to solve the KKT conditions in Taylor-like series and bound each term in the series. It sheds light on the sensitivity analysis with the $\ell_{\infty}$ norm.

\textbf{Case studies}. We use a wide range of numerical experiments to demonstrate the application of \name in different case studies, as well as the effectiveness of our proposed method. Numerical experiments corroborate theoretical convergence results. Also, the case studies on video segmentation and anomaly detection showcase the benefits of untangling shared, unique, and noisy components. 

In the rest of the paper, we provide a comprehensive review of the literature in Section \ref{sec:relatedwork}. 
Then, we elaborate on the conditions sufficient for the separation of the three components in Section \ref{sec_identifiability}. In Section \ref{sec:algorithm}, we introduce the alternating minimization algorithm. We present our convergence theorem in Section \ref{sec:convergence} and discuss the key insights in the proof and how they solve challenge 2. In Section \ref{sec:numericalexperiment}, we demonstrate the numerical experiment results. The detailed proofs are relegated to the Appendix for brevity of the main paper.

\section{Related Work}
\label{sec:relatedwork}

\textbf{Matrix Factorization} There are numerous works that analyze the theoretical and practical properties of first-order algorithms that solve the (asymmetric) matrix factorization problem $\min_{\matU,\matV}\norm{{\matM-\matU\matV^T}}_F^2$ or its variants \citep{liglobaloptimization,ye2021global,ruoyuconvergence,parkmatrixsensing,tusensing}. Among them, \citet{ruoyuconvergence} analyzes the local landscape of the optimization problem and establishes the local linear convergence of a series of first-order algorithms. \citet{parkmatrixsensing,nospuriouslocalmin} study the global geometry of the optimization problem. \citet{tusensing} proposes the Rectangular Procrustes Flow algorithm that is proved to converge linearly into the ground truth under proper initialization and a balancing regularization. Recently, \citet{ye2021global} shows that gradient descent with small and random initialization can converge to the ground truth. 

\textbf{Robust PCA} When the observation is corrupted by sparse and potentially large noise, several approaches can still identify the low-rank components. An exemplary work is Robust PCA \citep{robustpca}, which proposes an elegant convex optimization problem called principal component pursuit that uses nuclear norm and $\ell_1$ norm to promote the sparsity and low-rankness of the solutions. It is proved that under incoherence assumptions, the solution of the convex optimization is unique and corresponds to the ground truth. Several works also consider the problem of matrix completion under outlier noise \citep{robustmatrixcompletion1,chenfan}. Nonconvex robust PCA \citep{nonconvexrobustpca} improves the computational efficiency of principal component pursuit by proposing a nonconvex formulation and using an alternating projection algorithm to solve it. Though the formulation is nonconvex, the alternating projection algorithm is also proved to recover the ground truth exactly under incoherence and sparsity requirements. For the special case of rank-1 robust PCA,~\citet{fattahi2020exact} show that a simple sub-gradient method applies directly to the nonsmooth $\ell_1$-loss provably recovers the low-rank component and sparse noise, under the same incoherence and sparsity requirements. To model a broader set of noise, \citet{meng2013robust} consider a mixture of Gaussian noise models and exploit the EM algorithm to estimate the low-rank components. Robust PCA has found successful applications in video segmentation \citep{rpcavideo}, image processing \citep{rsubspace}, change point detection \citep{onlinechangepoint}, and many more. Nevertheless, the formulations of robust PCA focus on shared low-rank features among all data and neglect unique components. 

\textbf{Distributed matrix factorization}  The emergence of edge computation has prompted the research on distributed matrix factorization. \citet{dmf} exploits distributed gradient descent to factorize large matrices. \citet{securefmf} proposes a cryptographic framework where multiple clients use their local data to collaboratively factorize a matrix without leaking private information to the server. These works use one set of feature matrices $\matU$ and $\matV$ to fit data from all clients, thus also neglecting the feature differences from different sources as well as the possible outliers in data. Our method \name is distributed when its subroutine \inneralg is distributed. Different from conventional distributed matrix factorization, \name can find common and unique components simultaneously while remaining robust in the presence of outliers.

\textbf{Joint and individual feature extraction} The literature on using MF to identify shared and unique features abound \citep{jive,cobe, slide,bidifac, groupnmf,inmf, personalizedpca, perdl,hmf}. Among them, JIVE \citep{jive}, COBE \citep{cobe}, PerPCA \citep{personalizedpca},  and HMF \citep{hmf} uses mutually orthogonal features to model the shared and unique components. SLIDE \citep{slide} and BIDIFAC \citep{bidifac} do not pose orthogonality constraints but use regularizations to encourage the unique features to have small norms. GNMF \citep{groupnmf} and iNMF \citep{inmf} further add nonnegativity constraints to the factor matrices. In particular, PerPCA \citep{personalizedpca} and HMF \citep{hmf} are two distributed algorithms that are guaranteed to converge to the optimal solutions under proper conditions. It is worth mentioning that these methods do not account for sparse noise in the observations. In this work, we remedy this challenge by utilizing existing methods as basic building blocks for our approach, which focuses on simultaneously separating shared and unique features as well as noise components.

\textbf{Robust shared and unique feature extraction} As discussed, a few heuristic methods also attempt to find the shared and unique features when data are corrupted by large noise \citep{rjive,rajive,rcica}. Amid them, RaJIVE \citep{rajive} employs robust SVD \citep{robustsvd} to remove noise from the observations and then uses a variant of JIVE \citep{ajive} to separate common and unique components. RJIVE \citep{rjive} proposes a constrained optimization formulation to minimize the $\ell_1$ norm of the fitting residuals and exploits ADMM to solve the problem. RCICA \citep{rcica} adopts a similar optimization objective but uses a regularization to encourage the similarity of common subspaces and only works for $N=2$ cases. Though these methods can achieve decent performance in applications including facial expression synthesis and audio-visual fusion, they are based on heuristics and it is not clear whether their output converges to the ground truth common and unique factors. In contrast, we prove that \name is guaranteed to recover the ground truths and use a few numerical examples to show that \name indeed recover more meaningful components.

\section{Identifiability Conditions}\label{sec_identifiability}
Our goal is to decouple the common components, unique components, and the sparse noise, given a group of data observations $\left\{\matM_{(i)}\right\}_{i=1}^N$.
At first glance, such decoupling may seem impossible or even ill-defined: roughly speaking, the number of unknown variables, namely global components, local components, and noise, are thrice the number of observed data matrices $\left\{\matM_{(i)}\right\}_{i=1}^N$, and hence, there are infinite number of decouplings that can give rise to the same $\matM_{(i)}$.

The very first question to ask is whether such decoupling is possible and, if so, which properties can ensure the identifiability of three components. Intriguingly, we are able to prove that the exact decoupling of shared features, unique features, and noise is possible if there is ``little overlap'' among the three components. Below, we will formalize this intuition in more detail. Though intuitive, it turns out that these conditions can guarantee the \textit{identifiability} of the shared components, unique components, and the sparse noise. 

\subsection{Sparsity}

As discussed, identifying arbitrarily dense and large noise from signals is not possible. Hence, we consider sparse noise where only a small fraction of observations are corrupted. To characterize the sparsity of $\matSst_{(i)}$, we use the following definition of $\alpha$-sparsity.
\begin{definition}
\label{ass:alphasparse}
($\alpha$-sparsity) A matrix $\matS\in \mathbb{R}^{n_1\times n_2}$ is $\alpha$-sparse if at most $\alpha n_1$ entries in each column and at most $\alpha n_2$ entries in each row are nonzero.
\end{definition}
The definition follows from that of \cite{nonconvexrobustpca}. In Definition \ref{ass:alphasparse}, $\alpha$ characterizes the maximum portion of corrupted entries in each row and each column. Intuitively, if a matrix is $\alpha$-sparse with small $\alpha$, then its nonzero entries are ``spread out'' instead of concentrated on specific columns or rows. 

\subsection{Incoherence}
It is shown that distinguishing sparse components from arbitrary low-rank components is also hard \citep{robustpca,nonconvexrobustpca}. As a simple counterexample, the matrix $\matM=\vece_i\vece_j^T$, where we use $\vece_i$ to denote the basis vector of axis $i$, has its $ij$-th entry to be $1$ and all other entries to be $0$. This matrix has rank $1$, and is also sparse since it has only one nonzero entry. Thus, deciding whether it is sparse or low rank is difficult as it satisfies both requirements. 

From the above analysis, one can see that the low-rank components should not be sparse. In other words, to be distinguishable from the sparse noise, their elements should be sufficiently spread out. In the literature, this requirement is often characterized by the so-called incoherence condition \citep{robustpca,nonconvexrobustpca}.

\begin{definition}
\label{ass:incoherence}
($\mu$-incoherence) A matrix $\matU\in \mathbb{R}^{n\times r}$ is $\mu$-incoherent if 
$$
\max_i \norm{\vece_i^T\matU}_2\le \frac{\mu \sqrt{r}}{\sqrt{n}},
$$
where $\vece_i\in\mathbb{R}^n$ is the standard basis vector of axis $i$, defined as $\vece_i=(0,0,...0,1,0,...0)^T$
\end{definition}
The incoherence condition restricts the maximum row-wise $\ell_2$ norm of a matrix $\matU$, thus preventing the entries of $\matU$ from being too concentrated on a few specific axes.

Remember that in model \eqref{eqn:matrixmodel}, $\matUst_{g}\matVst_{(i),g}^T$ and $\matUst_{(i),l}\matVst_{(i),l}^T$ represent the global (shared) and local (unique) factors. For any $n\geq r$, we use $\mathbb{O}^{n\times r}$ to denote the set of $n$ by $r$ matrices whose column vectors are orthonormal, $\mathbb{O}^{n\times r}=\{\matW\in \mathbb{R}^{n\times r}|\matW^T\matW=\matI\}$.  We assume the SVD of
 $\matUst_{g}\matVst_{(i),g}^T$ and $\matUst_{(i),l}\matVst_{(i),l}^T$ has the following form, 
\begin{equation}
\label{eqn:truemodelsvd}
\left\{\begin{aligned}
&\matUst_{g}\matVst_{(i),g}^T =\matHst_g\matSigmast_{(i),g}\matWst_{(i),g}^T\\
&\matUst_{(i),l}\matVst_{(i),l}^T=\matHst_{(i),l}\matSigmast_{(i),l}\matWst_{(i),l}^T
\end{aligned}\right. ,
\end{equation} 
\revise{where $\matHst_g\in \mathbb{O}^{n_1\times r_1}$, $\matWst_{(i),g}\in \mathbb{O}^{n_{2,(i)}\times r_1}$, $\matHst_{(i),l}\in \mathbb{O}^{n_1\times r_{2,(i)}}$. Moreover, $\matWst_{(i),l}\in \mathbb{O}^{n_{2,(i)}\times r_{2,(i)}}$ are orthogonal matrices, $\matSigmast_{(i),g}\in \mathbb{R}^{r_1\times r_1}$ and $\matSigmast_{(i),l}\in \mathbb{R}^{r_{2,(i)}\times r_{2,(i)}}$ are positive diagonal matrices. In \eqref{eqn:truemodelsvd}, we consider the case where the global and local column singular vectors are orthogonal, i.e., $\matHst_g^T\matHst_{(i),l}=0$. We use $r_2=\max_ir_{2,(i)}$ throughout the paper.}

To avoid overlapping between sparse and low-rank components, we assume the row and column singular vectors $\matHst_g$, $\matHst_{(i),l}$, $\matWst_{(i),g}$, and $\matWst_{(i),l}$ are all $\mu$-incoherent. This assumption ensures that the low-rank components do not have entries too concentrated on specific rows or columns. As a result, the incoherence on singular vectors encourages the low-rank components to distribute evenly on all entries, which is distinguished from sparse noises that are nonzero on a small fraction of entries.

\subsection{Misalignment}
As discussed in \eqref{eqn:truemodelsvd}, we use orthogonality between shared and unique features $\hmatHst_g^T\hmatHst_{(i),l}=0$ to encode our prior belief about the independence of different features. This is equivalent to $\matUst_g^T\matUst_{(i),l}=0$. Such orthogonality, however, is still insufficient to guarantee the identifiability of shared and unique factors. 

To see this, consider a counterexample where all $\matUst_{(i),l}$'s are equal, i.e., $\matUst_{(1),l}=\matUst_{(2),l}=\cdots =\matUst_{(N),l}$. In this case, ``unique'' factors are also shared among all observation matrices. Thus, separating them from the ground truth $\matUst_{g}$ is not possible. From this counterexample, we can see that it is essential for the local features not to be perfectly aligned with each other. Next, we formally introduce the notion of misalignment. For a full column-rank matrix $\matU\in \mathbb{R}^{d\times n}$, we define the projection matrix $\matP_{\matU}\in \mathbb{R}^{d\times d}$ as $\Pj{\matU}=\matU\left(\matU^T\matU\right)^{-1}\matU^T$. 
\begin{definition}
\label{ass:identifiability}
($\theta$-misalignment) We say $\{\matUst_{(i),l}\}$ are $\theta$-misaligned if there exists a positive constant $\theta\in(0,1)$ such that:
\begin{equation}
\lambda_{\max}\left(\frac{1}{N}\sum_{i=1}^N\Pj{\matUst_{(i),l}}\right)\le 1-\theta.
\end{equation}
\end{definition}
By the triangular inequality of $\lambda_{\max}\left(\cdot\right)$, we know $\lambda_{\max}\left(\frac{1}{N}\sum_{i=1}^N\Pj{\matUst_{(i),l}}\right)\le \frac{1}{N}\sum_{i=1}^N\lambda_{\max}\left(\Pj{\matUst_{(i),l}}\right)=1$. Thus, the introduced $\theta$ is always nonnegative. Indeed, all $\Pj{\matUst_{(i),l}}$'s have a common nonempty eigenspace with eigenvalue $1$ if and only if $\theta=0$. Thus, the $\theta$-misalignment condition requires that the subspaces spanned by all unique factors do not contain a common subspace. On the contrary, all global features are shared; hence, the subspaces spanned by these features are also identical. This comparison shows that the misalignment condition unequivocally distinguishes unique features from shared ones.

As a concrete example, consider $N=2$ and $\matU_{(1),l}=\left( \cos \vartheta, \sin \vartheta\right)^T$, $\matU_{(2),l}=\left( \cos \vartheta, -\sin \vartheta\right)^T$ for $\vartheta\in[0,\frac{\pi}{4}]$. Indeed, the angle between $\matU_{(1),l}$ and $\matU_{(2),l}$ is $2\vartheta$ and
$$
\fracud\left(\matP_{\matU_{(1),l}}+\matP_{\matU_{(2),l}}\right)=\left(\begin{aligned}
\cos^2\vartheta, &\quad 0\\
0, &\quad \sin^2\vartheta\\
\end{aligned}\right).
$$
Hence, by definition, $\theta=\sin^2\vartheta$. We can thus clearly see that when $\vartheta$ increases, the $\matU_{(1),l}$ and $\matU_{(2),l}$ become more misaligned.

The notion of $\theta$-misalignment is first proposed by \citet{personalizedpca} and intimately related to the uniqueness conditions in \citet{jive}.

\section{Algorithm}
\label{sec:algorithm}
The introduced identifiability conditions restrict the overlaps between shared, unique, and sparse components. It remains to develop algorithms to untangle the three parts from $N$ matrices. In Section \ref{sec:optformulation}, we introduce a constrained optimization formulation, and in Section \ref{sec:altmin}, we propose an alternating minimization program to decouple the three parts. The alternating minimization requires solving subproblems to distinguish shared features from unique ones. 

Throughout the paper, we use $\norm{\matA}$ or $\norm{\matA}_2$ to denote the operator norm of a matrix $\matA\in\mathbb{R}^{m\times n}$ and $\norm{\matA}_F$ to denote the Frobenius norm of $\matA$. We use $r$ to denote $r=r_1+r_2$.

\subsection{Constrained Nonconvex Nonsmooth Optimization}
\label{sec:optformulation}
We design a constrained optimization problem to decouple the three components. The decision variables $\vecx$ include features, coefficients, and sparse noise estimates: $\vecx = \left(\matU_g,\{\matU_{(i),l},\matV_{(i),g},\matV_{(i),l},\matS_{(i)}\}_{i=1}^N\right)$. The constrained optimization is formulated as,
\begin{equation}
\label{eqn:bigobjective}
\begin{aligned}
\min_{\vecx} & \quad\sum_{i=1}^Nh_i(\matU_g,\matV_{(i),g},\matU_{(i),l},\matV_{(i),l},\matS_{(i)};\lambda)\\
\text{s.t.} &\quad \matU_g^T\matU_{(i),l}=0, \ \forall i\in[N] .
\end{aligned}
\end{equation}
Here, $h_i$ is a regularized fitting residual consisting of two parts:
\begin{subequations}
\label{eqn:hidef}
\begin{align}
&h_i(\matU_g,\matV_{(i),g},\matU_{(i),l},\matV_{(i),l},\matS_{(i)};\lambda)\notag \\
&=f_i(\matU_g,\matV_{(i),g},\matU_{(i),l},\matV_{(i),l},\matS_{(i)})+\Phi_i(\matS_{(i)};\lambda)\notag\\
    &=\fracud\norm{\matM_{(i)}-\matU_g\matV_{(i),g}^T-\matU_{(i),l}\matV_{(i),l}^T-\matS_{(i)}}_F^2   \label{eqn:hiterma}\tag{$f_i$}\\
    &+\lambda^2\norm{\matS_{(i)}}_0. \label{eqn:hitermc}\tag{$\Phi_i$}
\end{align}
\end{subequations} 
Term~\eqref{eqn:hiterma} measures the distance between the sum of shared, unique, and sparse components and the observation matrix $\matM_{(i)}$. It denotes the residual of fitting. A common approach for solving this problem is based on convex relaxation \citep{robustpca}. However, convex relaxation increases the number of variables to $\mathcal{O}(n_1n_2)$, while our nonconvex formulation keeps it in the order of $\mathcal{O}(\max\{n_1,n_2\}(r_1+r_2))$, which is significantly smaller. 

Term~\eqref{eqn:hitermc} is an $\ell_0$ regularization term that promotes the sparsity of matrix $\matS_{(i)}$. The parameter $\lambda$ mediates the balance between the $\ell_0$ penalty with the residual of fitting. A large value of $\lambda$ leads to sparser $\matS_{(i)}$ with only large nonzero elements. Conversely, a small value of $\lambda$ yields a denser $\matS_{(i)}$ with potentially small nonzero elements. Therefore, to identify both large and small nonzero values of $\matS_{(i)}$, while correctly filtering out its zero elements, we propose to gradually decrease the value of $\lambda$ during the optimization of objective \eqref{eqn:bigobjective}. We use the notation $h_i(\matU_g,\matV_{(i),g},\matU_{(i),l},\matV_{(i),l},\matS_{(i)};\lambda)$ to explicitly show that the objective $h_i$ is dependent on the regularization parameter $\lambda$.

At first glance, the proposed optimization problem~\eqref{eqn:bigobjective} may appear daunting due to its inherent nonconvexity and nonsmoothness. Notably, it exhibits two distinct sources of nonconvexity: firstly, both terms \eqref{eqn:hiterma} and \eqref{eqn:hitermc} are nonconvex, and secondly, the feasible set corresponding to the constraint $\matU_g^T\matU_{(i),l}=0$ is also nonconvex. Furthermore, the $\ell_0$ regularization term in \eqref{eqn:hitermc} introduces nonsmoothness into the problem. However, we will introduce an intuitive and efficient algorithm designed to alleviate these challenges and effectively solve the problem. Surprisingly, under our identifiability conditions introduced in Section \ref{sec_identifiability}, this algorithm can be proven to converge to the ground truth.

\subsection{Alternating Minimization}
\label{sec:altmin}
One efficient approach to solving a $\ell_0$ regularized objective is alternating minimization. We divide the decision variables $\vecx$ into $2$ blocks, $\left(\matU_g,\{\matV_{(i),g},\matU_{(i),l},\matV_{(i),l}\}\right)$ and $\left(\{\matS_{(i)}\}\right)$, and alternatively minimize one block with the block of variables fixed. 

More specifically, the alternating minimization proceeds by epochs, each comprised of two steps. For ease of exposition, we use $\hmatU_{g,t-1},\{\hmatV_{(i),g,t-1},\hmatU_{(i),l,t-1},\hmatV_{(i),l,t-1},\hmatS_{(i),t-1}\}_{i=1}^N$ to denote the values of $\vecx$ at the end of epoch $t-1$. The $\hat{}$ notation represents the estimated values of the variables. 

In the first step, we fix the values of $\left(\hmatU_{g,t-1},\{\hmatV_{(i),g,t-1},\hmatU_{(i),l,t-1},\hmatV_{(i),l,t-1}\}\right)$, and optimize over $\{\matS_{(i)}\}$. The optimal $\hmatS_{(i),t}$ has a simple closed-form solution given by hard-thresholding,
$$
\begin{aligned}
\hmatS_{(i),t}&=\arg\min_{\matS_{(i)}}\norm{\matM_{(i)}-\hmatU_{g,t-1}\hmatV_{(i),g,t-1}^T-\hmatU_{(i),l,t-1}\hmatV_{(i),l,t-1}^T-\matS_{(i)}}_F^2+\lambda_t^2\norm{\matS_{(i)}}_0\\
&=\text{Hard}_{\lambda_t}\left[\matM_{(i)}-\hmatU_{g,t-1}\hmatV_{(i),g,t-1}^T-\hmatU_{(i),l,t-1}\hmatV_{(i),l,t-1}^T\right],
\end{aligned}
$$
where $\text{Hard}_{\lambda}(\cdot)$ is the hard-thresholding operator. For a matrix $X\in \mathbb{R}^{m\times n}$, the hard thresholding operator is defined as:
\begin{equation}
\label{eqn:defofsoftthresholding}
\left[\text{Hard}_{\lambda}\left(X\right)\right]_{ij} = \left\{
\begin{aligned}
& X_{ij} \text{ ,  if  }\abs{X_{ij}} > \lambda\\
& 0 \text{ ,  if  }X_{ij}\in [-\lambda,\lambda]\\
\end{aligned}\right. .
\end{equation}

The coefficient $\lambda$ is a thresholding parameter that controls the sparsity of the output. To recover the correct sparsity pattern of $\hmatS_{(i),t}$, our approach is to maintain a small false positive rate (elements that are incorrectly identified as nonzero), while gradually improving the true positive rate (elements that are correctly identified as nonzero). To this goal, we start with a large $\lambda$ to obtain a conservative estimate of $\hmatS_{(i),t}$. Then, we decrease $\lambda$ to refine the estimate.

In the second step, we fix $\hmatS_{(i),t}$ and optimize $\left(\matU_g,\{\matV_{(i),g},\matU_{(i),l},\matV_{(i),l}\}\right)$ under the constraint $\matU_g^T\matU_{(i),l}=0$. Removing the $\ell_0$ regularization term that is independent of $\left(\matU_g,\{\matV_{(i),g},\matU_{(i),l},\matV_{(i),l}\}\right)$, the optimization subproblem takes the following form,
\begin{equation}
\label{eqn:subproblemobj}
\begin{aligned}
\min_{\left(\matU_g,\{\matV_{(i),g},\matU_{(i),l},\matV_{(i),l}\}\right)} &\quad \sum_{i=1}^N\norm{\hmatM_{(i)}-\matU_g\matV_{(i),g}^T-\matU_{(i),l}\matV_{(i),l}^T}_F^2 \\
\text{s.t.}& \quad  \matU_g^T\matU_{(i),l}=0, \forall i\in [N],
\end{aligned}
\end{equation}
where $\hmatM_{(i)} = \matM_{(i)}-\hmatS_{(i),t}$.

Despite its nonconvexity, there exist several iterative algorithms to solve the above optimization problem, including but not limited to JIVE \citep{jive}, COBE \citep{cobe}, PerPCA \citep{personalizedpca}, PerDL \citep{perdl}, and HMF \citep{hmf}. Given the similarity of these methods, we employ the name Joint and Individual Matrix Factorization (\inneralg) to encapsulate the subroutine addressing problem \eqref{eqn:subproblemobj}.

The versatile \inneralg is a meta-algorithm that can be implemented using any of the aforementioned methods, provided that they generate good solutions. Among these algorithms, PerPCA and HMF are of special interest as they are proved to converge to the optimal solutions of \eqref{eqn:subproblemobj} under suitable conditions. They are also intrinsically federated as most of the computation can be distributed on $N$ sources where the data are generated. 

As problem \eqref{eqn:subproblemobj} does not have a simple closed-form solution, the algorithms discussed above are iterative. The iterative algorithms do not output exact optimal solutions. Instead, they refine the estimates at every iteration. Therefore, there will be a difference between our algorithm-generated solutions and the optimal solution. To characterize the degree of such difference, we resort to employing the concept of $\epsilon$-optimality, a notion well-known in the optimization community.

\begin{definition} 
($\epsilon$-optimality) Given $(\hmatU_g,\{\hmatV_{(i),g}, \hmatU_{(i),l},\hmatV_{(i),l}\})$ as any global optimal solution to the problem \eqref{eqn:subproblemobj} and a constant $\epsilon>0$, we say $(\hmatUeps_g,\{\hmatVeps_{(i),g}, \hmatUeps_{(i),l},\hmatVeps_{(i),l}\})$ is an $\epsilon$-optimal solution to \eqref{eqn:subproblemobj} if it satisfies,
\begin{equation*}
\norm{\hmatUeps_g\hmatVeps_{(i),g}^T+\hmatUeps_{(i),l}\hmatVeps_{(i),l}^T-\hmatU_g\hmatV_{(i),g}^T-\hmatU_{(i),l}\hmatV_{(i),l}^T}_{\infty}\le \epsilon,\quad \forall i
\end{equation*}
and 
\begin{equation*}
\hmatUeps_g^T\hmatUeps_{(i),l} = 0,\quad \forall i .
\end{equation*}
\end{definition}

The nonconvexity of \eqref{eqn:subproblemobj} gives rise to multiple global optimal solutions. Our definition of $\epsilon$-optimality only emphasizes the closeness between the \textit{product} of features and the coefficients, and the product of \textit{any} set of global optimal solutions. As discussed, there exist multiple methods proposed to solve \eqref{eqn:subproblemobj} that demonstrate decent practical performance. In particular, PerPCA, PerDL, and HMF are proved to converge to the optimal solutions of \eqref{eqn:subproblemobj} at linear rates when initialized properly. Hence, under suitable initializations,  PerPCA and HMF can reach $\epsilon$-optimality of \eqref{eqn:subproblemobj} within $\mathcal{O}\left(\log\frac{1}{\epsilon}\right)$ iterations for any value of $\epsilon$. The details of the two algorithms will be discussed in Appendix \ref{sec:hmfintro}.

With the help of subroutine \inneralg, the main alternating minimization algorithm proceeds by optimizing two blocks of variables iteratively. We present the pseudo-code in Algorithm \ref{alg:altmin}.

\begin{algorithm}
\caption{\name: alternating minimization}
\label{alg:altmin}
\begin{algorithmic}[1]
\STATE Input observation matrices from $N$ sources $\{\matM_{(i)}\}_{i=1}^N$, constant $\lambda_1$, multiplicative factor $\rho \in (0,1)$, precision $\epsilon$.
\STATE Initialize $\hmatUeps_{g,0}, \hmatVeps_{(i),g,0},\hmatUeps_{(i),l,0},\hmatVeps_{(i),l,0},\hmatS_{(i),0}$ to be zero matrices.
\FOR{Epoch $t=1,...,T$}
\FOR{Source $i=1,\cdots,N$}
\STATE $\hmatS_{(i),t}=\hard{\lambda_{t}}{\matM_{(i)}-\hmatUeps_{g,t-1}\hmatVeps_{(i),g,t-1}^T-\hmatUeps_{(i),l,t-1}\hmatVeps_{(i),l,t-1}^T}$
\ENDFOR
\STATE $(\hmatUeps_{g,t},\{\hmatVeps_{(i),g,t}\},\{\hmatUeps_{(i),l,t}\},\{\hmatVeps_{(i),l,t}\})=\text{\inneralg}\left(\{\hmatM_{(i)}\}=\{\matM_{(i)}-\hmatS_{(i),t}\},\epsilon\right)$
\STATE Set $\lambda_{t+1}=\rho \lambda_{t}+\epsilon$
\ENDFOR
\STATE Return $\{\hmatUeps_{g,T},\{\hmatVeps_{(i),g,T}\},\{\hmatUeps_{(i),l,T}\},\{\hmatVeps_{(i),l,T}\}\}$.
\end{algorithmic}
\end{algorithm}

In Algorithm \ref{alg:altmin},  we use $\inneralg\left(\{\hmatM_{(i)}\},\epsilon\right)$ to denote the call for a subroutine to solve \eqref{eqn:subproblemobj} to $\epsilon$-optimality. In each epoch, sparse matrices $\hmatS_{(i),t}$ are firstly estimated by hard thresholding. Then $\hmatM_{(i)}=\matM_{(i)}-\hmatS_{(i),t}$ are calculated, which are subsequently decoupled into the shared and unique components via a \inneralg call. The output of this subroutine is represented as $(\hmatUeps_{g,t},\{\hmatVeps_{(i),g,t}\},\{\hmatUeps_{(i),l,t}\},\{\hmatVeps_{(i),l,t}\})$, where the superscript ${}^{\epsilon}$ signifies $\epsilon$-optimality. The outputs $(\hmatUeps_{g,t},\{\hmatVeps_{(i),g,t}\},\{\hmatUeps_{(i),l,t}\},\{\hmatVeps_{(i),l,t}\})$ are used to improve the estimate of $\hmatS_{(i)}$ in the next epoch. After each epoch, we decrease the thresholding parameter $\lambda_t$ by a constant $\rho<1$, then add a constant $\epsilon$. The inclusion of $\epsilon$ in $\lambda_{t+1}$ is necessary to ensure that the estimated $\hmatS_{(i),t+1}$ does not contain any false positive entries. By incorporating $\epsilon$ into $\lambda_{t+1}$, we guarantee that the inexactness of the \inneralg outputs does not undermine the false positive rate of the entries in $\hmatS_{(i),t+1}$.

\revise{ Then, per-epoch computational complexity of Algorithm~\ref{alg:altmin} is $O(n_1n_2N)$ when the rank $r_1$ and $r_2$ are small $r_1,r_2\ll n_1,n_2$. To see this, we can add up the computation complexity complexity of hard-thresholding and \inneralg. Element-wise hard-thresholding requires $O(n_1n_2)$ computations for each source. Efficient implementations of the subroutine \inneralg, such as \perpca and \hmf, can converge into the $\epsilon-$optimal solutions within $\mathcal{O}(\log\frac{1}{\epsilon})$ steps, where each step require $\mathcal{O}(n_1n_2)$ computations. Therefore, the per-epoch computational complexity of \name is $\mathcal{O}(n_1n_2N)$, where log factors are omitted.}

\revise{Furthermore, if the \inneralg and hard-thresholding are distributed among $N$ sources, \name can further exploit parallel computation to reduce the running time. In the regime where communication cost is negligible, the per-iteration total running time scales as $\mathcal{O}\left(n_1n_2+Nn_1\right)$.}

We will show later that such a design can ensure that the estimation error diminishes linearly. A pictorial representation of Algorithm \ref{alg:altmin} is plotted in the left graph of Figure \ref{fig:difficulty}. 

\section{Convergence Analysis}
\label{sec:convergence}
In this section, we will analyze the convergence of Algorithm \ref{alg:altmin}. Our theorem characterizes the conditions under which Algorithm \ref{alg:altmin} converges linearly to the ground truth. \revise{We additionally introduce $\gop >0$ to denote an upper bound of the singular values of $\{\matUst_g\matVst_{(i),g}^T+\matUst_{(i),l}\matVst_{(i),l}^T\}_{i=1}^N$, and $\sm >0$ to denote a lower bound on the smallest nonzero singular values of $\{\matUst_g\matVst_{(i),g}^T+\matUst_{(i),l}\matVst_{(i),l}^T\}_{i=1}^N$. For simplicity we assume $n_{2,(i)}=n_2$, $r_{2,(i)}=r_2$, and $r=r_1+r_2$ in this section.}

\begin{theorem}[Convergence of Algorithm~\ref{alg:altmin}]
\label{thm:recovery}
Consider the true model \eqref{eqn:matrixmodel} with SVD defined in \eqref{eqn:truemodelsvd}, \revise{where nonzero singular values of $\matUst_g\matVst_{(i),g}^T+\matUst_{(i),l}\matVst_{(i),l}^T$ are lower bounded by $\sm>0$ and upper bounded by $\gop\ge \sm$ for each source $i$}. Suppose that the following conditions are satisfied:
\begin{itemize}
    \item {\bf ($\pmb{\mu}$-incoherency)} The matrices $\matHst_g$ and $\left\{\matHst_{(i),l},\matWst_{(i),g}, \matWst_{(i),l}\right\}_{i=1}^N$ are $\mu$-incoherent for a constant $\mu>0$. 
    \item {\bf ($\pmb{\theta}$-misalignment)} The local feature matrices $\left\{\matUst_{(i),l}\right\}_{i=1}^N$ are $\theta$-misaligned for a constant $0<\theta<1$.
    \item {\bf ($\pmb{\alpha}$-sparsity)} \revise{The matrices $\left\{\matSst_{(i)}\right\}_{i=1}^N$'s are $\alpha$-sparse for some $\alpha = \mathcal{O}\left(\frac{\theta}{\mu^4r^2}\right)$, where $r=r_1+r_2$.}
\end{itemize}
Then, there exist constants $C_{g,1}, C_{g,2}, C_{l,1}, C_{l,2}, C_{s,1}, C_{s,2}>0$ and $\rhom=\mathcal{O}\left(\sqrt{\alpha}\frac{\mu^2r}{\sqrt{\theta}}\right)<1$ such that the iterations of Algorithm \ref{alg:altmin} with $\lambda_1= \frac{\gop \mu^2r}{\sqrt{n_1n_2}}$, $\epsilon\le \lambda_1\left(1-\rhom\right)$, and $1-\frac{\epsilon}{\lambda_1}>\rho\ge\rhom$ satisfy
\begin{align}
    \label{eqn:convergenceonlg}
\norm{\hmatUeps_{g,t}\hmatVeps_{(i),g,t}^T-\matUst_{g}\matVst_{(i),g}^T}_{\infty}&\leq  C_{g,1}\rho^t+C_{g,2}\epsilon
\\
\label{eqn:convergenceonll}
\norm{\hmatUeps_{(i),l,t}\hmatVeps_{(i),l,t}^T-\matUst_{(i),l}\matVst_{(i),l}^T}_{\infty} &\leq  C_{l,1}\rho^t+C_{l,2}\epsilon
\\
\label{eqn:convergenceons}
\norm{\hmatS_{(i),t}-\matSst_{(i)}}_{\infty}&\leq  C_{s,1}\rho^t+C_{s,2}\epsilon.
\end{align}
\end{theorem}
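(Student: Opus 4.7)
The plan is to proceed by induction on the epoch index $t$, simultaneously maintaining the three error bounds \eqref{eqn:convergenceonlg}--\eqref{eqn:convergenceons}. At each epoch I analyze the two blocks of Algorithm~\ref{alg:altmin} in turn: first the hard-thresholding update for $\hmatS_{(i),t}$, then the \inneralg call for the low-rank components. Linear convergence emerges by showing that the errors contract by a factor $\rho$ plus an $\mathcal{O}(\epsilon)$ floor attributable to the inexactness of \inneralg.

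\textbf{Step 1 (hard thresholding).} The input to the thresholding step may be written as
\[
\matM_{(i)} - \hmatUeps_{g,t-1}\hmatVeps_{(i),g,t-1}^T - \hmatUeps_{(i),l,t-1}\hmatVeps_{(i),l,t-1}^T = \matSst_{(i)} + \matE_{(i),t},
\]
where $\matE_{(i),t}$ is the low-rank estimation error from epoch $t-1$. The inductive hypothesis gives $\norm{\matE_{(i),t}}_{\infty}\le (C_{g,1}+C_{l,1})\rho^{t-1}+(C_{g,2}+C_{l,2})\epsilon$. I would calibrate the schedule $\lambda_1=\gop\mu^2 r/\sqrt{n_1 n_2}$ and $\lambda_{t+1}=\rho\lambda_t+\epsilon$ so that $\lambda_t$ strictly dominates $\norm{\matE_{(i),t}}_{\infty}$ uniformly in $t$. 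Under this calibration, hard-thresholding (i) introduces no false positives, since every zero entry of $\matSst_{(i)}$ enters with magnitude at most $\lambda_t$, and (ii) produces a support contained in $\supp{\matSst_{(i)}}$, so the residual $\hmatS_{(i),t}-\matSst_{(i)}$ remains $\alpha$-sparse and is bounded entrywise by $2\lambda_t$. This yields \eqref{eqn:convergenceons} and, crucially, preserves the sparsity structure used in the next step.

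\textbf{Step 2 (\inneralg output via a Taylor-like series).} After Step~1 the input to \inneralg is
\[
\hmatM_{(i)} = \matUst_g\matVst_{(i),g}^T + \matUst_{(i),l}\matVst_{(i),l}^T + \matE_{(i)}^S,\qquad \matE_{(i)}^S := \matSst_{(i)}-\hmatS_{(i),t},
\]
where $\matE_{(i)}^S$ is $\alpha$-sparse with $\ell_\infty$ norm controlled by Step~1. The decisive step is to represent any global optimum of subproblem~\eqref{eqn:subproblemobj} on the perturbed data as a formal series in $\matE_{(i)}^S$ around the ground-truth solution. Linearizing the KKT conditions about $(\matUst_g,\matVst_{(i),g},\matUst_{(i),l},\matVst_{(i),l})$ produces a block linear operator $\mathcal{L}$ whose inversion is where $\theta$-misalignment enters: a positive $\theta$ makes the restricted operator coercive, so $\mathcal{L}^{-1}$ is bounded by $\mathcal{O}(1/\theta)$. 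The optimum then admits an expansion of schematic form $\sum_{k\ge 0}\bigl(\mathcal{L}^{-1}\mathcal{R}(\matE^S)\bigr)^{k}$ applied to the perturbation, and I would bound each term in $\ell_\infty$ using (a) the $\mu$-incoherence of $\matHst_g,\matHst_{(i),l},\matWst_{(i),g},\matWst_{(i),l}$, which converts actions of low-rank projectors onto sparse matrices into dimension-free $\ell_\infty$ estimates via inequalities of the shape $\norm{\matHst(\matHst)^T\matE}_{\infty}\lesssim \frac{\mu^2 r}{n_1}\cdot \alpha n_2\cdot\norm{\matE}_{\infty}$, and (b) the $\alpha$-sparsity together with averaging over the $N$ sources, which produces the aggregated factor $\alpha N$. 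The geometric sum converges whenever its ratio $\sqrt{\alpha}\,\mu^2 r N/\theta$ is less than one, which is precisely what the sparsity hypothesis guarantees. The $\epsilon$-optimality of \inneralg contributes an additive slack of order $\epsilon$ to the recovered low-rank products.

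\textbf{Step 3 (contraction and base case).} Combining Steps~1 and 2 yields a scalar recursion $e_t\le C\sqrt{\alpha}\,\frac{\mu^2 r N}{\theta}\,e_{t-1}+\mathcal{O}(\epsilon)$, where $e_t$ is the maximum $\ell_\infty$ error across the global, local, and sparse estimates. The hypothesis $\alpha=\mathcal{O}\!\bigl(\theta^2/(\mu^4 r^2 N^2)\bigr)$ is exactly what forces the contraction coefficient $\rhom$ strictly below $1$; any $\rho\in[\rhom,\,1-\epsilon/\lambda_1)$ then unrolls the recursion into \eqref{eqn:convergenceonlg}--\eqref{eqn:convergenceons}. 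For the base case $t=1$, the zero initialization together with $\lambda_1=\gop\mu^2 r/\sqrt{n_1 n_2}$ dominates every entry of the ground-truth low-rank matrices (by $\mu$-incoherence), so the first thresholding starts the induction on the correct footing.

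\textbf{Main obstacle.} The hard part will be Step~2: bounding the \inneralg output in the $\ell_\infty$ (rather than operator or Frobenius) norm when the solution map of \eqref{eqn:subproblemobj} has no closed form. Standard spectral perturbation estimates lose factors of $\sqrt{n}$ when converted to $\ell_\infty$, which would destroy the contraction. The Taylor-like KKT expansion is the technical heart of the argument: it replaces the missing closed form with an infinite series whose terms are precisely products of projections onto incoherent subspaces acting on sparse matrices, the exact combination that admits dimension-free $\ell_\infty$ estimates. Carrying the bookkeeping across the $N$ sources, the orthogonality constraint $\matU_g^T\matU_{(i),l}=0$ on the Lagrangian side, and the $\epsilon$-optimality slack of \inneralg simultaneously will require the most careful accounting.
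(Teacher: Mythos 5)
Your plan mirrors the paper's own proof: an induction over epochs in which hard thresholding with the schedule $\lambda_{t+1}=\rho\lambda_t+\epsilon$ keeps $\matE_{(i),t}$ supported inside $\supp{\matSst_{(i)}}$ with $\norm{\matE_{(i),t}}_\infty\le 2\lambda_t$, while the inner subproblem's optimum is characterized via its KKT conditions and expanded in a Taylor-like series whose terms are products of sparse and incoherent matrices, yielding the $\ell_\infty$ contraction factor $\mathcal{O}(\sqrt{\alpha}\,\mu^2 r N/\theta)$ plus the $\mathcal{O}(\epsilon)$ floor from $\epsilon$-optimality. The only minor difference is mechanical: the paper builds the series by solving the Sylvester-form KKT equations for $\hmatH_g,\hmatH_{(i),l}$ (with $\theta$-misalignment entering through the PerPCA optimality bound that controls the cross dual variable $\matLambda_{3,(i)}$ and the subspace error), rather than by literally linearizing around the ground truth, but the argument and resulting bounds are essentially the same.
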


\revise{Theorem \ref{thm:recovery} presents a set of sufficient conditions under which the model is identifiable, and Algorithm \ref{alg:altmin} converges to the ground truth at a linear rate. As discussed in Section~\ref{sec_identifiability}, these conditions are indeed sufficient to guarantee the identifiability of the true model. In particular, $\mu$-incoherency is required for disentangling global and local components from noise, whereas $\theta$-misalignment is needed to separate local and global components. Moreover, there is a natural trade-off between the parameters $\mu$, $\theta$, and $\alpha$: the upper bound on the sparsity level $\alpha$, $ \mathcal{O}\left(\frac{\theta^2}{\mu^4r^2}\right)$, is proportional to $\theta^2$, implying that more alignment among local feature matrices can be tolerated only at the expense of sparser noise matrices. Similarly, $\alpha$ is inversely proportional to $\mu^4$, indicating that more coherency in the local and global components is only possible with sparser noise matrices. We also highlight the dependency of $\alpha$ on the rank $r$; such dependency is required even in the standard settings of robust PCA~\citep{nonconvexrobustpca, chandrasekaran2011rank, hsu2011robust}, albeit with a milder condition on $r$. Finally, the scaling of $\alpha$ does not depend on the the number of sources $N$, suggesting that the convergence guarantees provided by Theorem~\ref{thm:recovery} are valid for extremely large datasets.} 

Two important observations are in order. First, we do not impose any constraint on the norm or sign of the sparse noise $\matSst_{(i)}$. Thus, Theorem~\ref{thm:recovery} holds for arbitrarily large noise values. Second, at every epoch, Algorithm~\ref{alg:altmin} solves the inner optimization problem~\eqref{eqn:subproblemobj} via \inneralg to $\epsilon$-optimality. Also, the convergence of Algorithm~\ref{alg:altmin} is contingent upon the precision of \inneralg output: the $\ell_{\infty}$ norm of the optimization error should not be larger than $\mathcal{O}\left(\lambda_1\right)$. Such requirement is not strong as even the trivial solution $\matU_g,\matV_{(i),g},\matU_{(i),l},\matV_{(i),l}=0$ is $\lambda_1$-optimal. One should expect many algorithms to perform much better than the trivial solution. Indeed, methods including PerPCA and PerDL are proved to output $\epsilon$-optimal solutions for arbitrary small $\epsilon$ within logarithmic iterations, thus satisfying the requirement. In practice, heuristic methods including JIVE or COBE can output reasonable solutions that may also satisfy the requirement in Theorem~\ref{thm:recovery}.

In the next section, we provide the sketch of the proof for Theorem~\ref{thm:recovery}.

\subsection{Proof Sketch of Theorem \ref{thm:recovery}}
\label{sec:recoverysketch}
Algorithm \ref{alg:altmin} is essentially an alternating minimization algorithm comprising a hard-thresholding step, followed by a joint and individual matrix factorization step. Our overarching goal is to control the estimation error at each iteration of the algorithm, showing that it decreases by a constant factor after every epoch. 
To this goal, we make extensive use of the error matrix $\matE_{(i),t}$ defined as $\matE_{(i),t}=\matSst_{(i)}-\hmatS_{(i),t}$ for every client $i$. 

In the ideal case where $\matE_{(i),t} = 0$, the global solution of~\eqref{eqn:bigobjective} coincides with the true shared and unique components, which is guaranteed by Theorem 1 in~\citet{personalizedpca}. Therefore, it is crucial to control the behavior of $\{\matE_{(i),t}\}_{i=1}^N$ and its effect on the recovered solution throughout the course of the algorithm. We define $\matL^\star_{(i)} = \matUst_{g}\matVst_{(i),g}^T+\matUst_{(i),l}\matVst_{(i),l}^T$ and $\hat\matL_{(i),t} = \hmatU_{g,t}\hmatV_{(i),g,t}^T+\hmatU_{(i),l,t}\hmatV_{(i),l,t}^T$ as the true and estimated low-rank components of client $i$. Similarly, $\hmatLeps_{(i),t} = \hmatUeps_{g,t}\hmatVeps_{(i),g,t}^T+\hmatUeps_{(i),l,t}\hmatVeps_{(i),l,t}^T$ is the reconstructed low-rank component from $\epsilon$-optimal estimates. The following steps outline the sketch of our proof:

\paragraph{Step 1: $\alpha$-sparsity of the initial error:} At the first iteration, the threshold level $\lambda_1$ is large, enforcing $\supp{\hmatS_{(i),1}} \subseteq \supp{\matSst_{(i)}}$, which in turn implies $\supp{\matE_{(i),1}} \subseteq \supp{\matSst_{(i)}}$. Therefore, the initial error matrix $\matE_{(i),1}$ is also $\alpha$-sparse. 
 
\paragraph{Step 2: Error reduction via \inneralg.} Suppose that $\matE_{(i),t}$ is $\alpha$-sparse. In Step 7, $\hat\matL_{(i),t}$ is obtained by applying \inneralg on $\matM_{(i)}-\hmatS_{(i),t}=\matL^\star_{(i)}+\matE_{(i),t}$. Note that the input to \inneralg is the true low-rank component perturbed by an $\alpha$-sparse matrix $\matE_{(i),t}$. One of our key contributions is to show that $\norm{\matL^\star_{(i)}-\hmatL_{(i),t}}_\infty$ is much smaller than $\norm{\matE_{(i),t}}_\infty$, provided that the true local and global components are $\mu$-incoherent and $\matE_{(i),t}$ is $\alpha$-sparse. This fact is delineated in the following key lemma.

\begin{lemma}[Error reduction via \inneralg (informal)]
\label{lm:informalldiffinfnorm}
Suppose that the conditions of Theorem~\ref{thm:recovery} are satisfied. Moreover, suppose that $\matE_{(i),t}$ is $\alpha$-sparse for each client $i$. We have
\revise{$$
\norm{\matL^\star_{(i)}-\hmatL_{(i),t}}_{\infty}\leq C\cdot\frac{\sqrt{\alpha} \mu^2 r}{\sqrt{\theta}} \cdot \max_j\left\{\norm{\matE_{(j),t}}_{\infty}\right\},
$$}
where $C>0$ is a constant.
\end{lemma}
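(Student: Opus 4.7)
The plan is to develop a Taylor-like perturbation expansion for the exact optimum of the \inneralg subproblem \eqref{eqn:subproblemobj}. I would first focus on the exact optimum $\hmatL_{(i),t}$ (the $\epsilon$-optimality gap relating it to $\hmatLeps_{(i),t}$ is a separate piece that can be absorbed later). When $\matE_{(i),t}=0$, Theorem 1 of \citet{personalizedpca} implies that the unique global optimum of \eqref{eqn:subproblemobj} (up to the intrinsic rotation/sign ambiguity) is the ground truth, so $\hmatL_{(i),t}=\matLst_{(i)}$. The lemma is therefore a sensitivity statement: how does $\hmatL_{(i),t}$ respond to an $\alpha$-sparse perturbation of its input?

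To quantify this, I would write the KKT system of \eqref{eqn:subproblemobj}---the four stationarity identities for $\matU_g,\matV_{(i),g},\matU_{(i),l},\matV_{(i),l}$ together with a Lagrange multiplier enforcing $\matU_g^T\matU_{(i),l}=0$---and expand the perturbed optimum as a formal power series in $\{\matE_{(j),t}\}_{j=1}^N$,
\begin{equation*}
\hmatL_{(i),t}\;=\;\matLst_{(i)}+\sum_{k\geq 1}\matL^{(k)}_{(i),t},
\end{equation*}
where $\matL^{(k)}_{(i),t}$ gathers all $k$-homogeneous contributions. Substituting into the KKT equations and matching orders yields a recursion in which each $\matL^{(k)}_{(i),t}$ is the image, under a fixed linear operator $\mathcal{T}$ built from projections onto the column/row spaces of $\matHst_g,\matHst_{(i),l},\matWst_{(i),g},\matWst_{(i),l}$ and the cross-client averaging $\frac{1}{N}\sum_{j}\Pj{\matUst_{(j),l}}$, of a multilinear polynomial in the $\matE_{(j),t}$'s and the first $k-1$ iterates. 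The invertibility of the averaging block, and hence of $\mathcal{T}$, is precisely what $\theta$-misalignment provides: by Definition \ref{ass:identifiability}, $\bigl(\matI-\frac{1}{N}\sum_{j}\Pj{\matUst_{(j),l}}\bigr)^{-1}$ exists with operator norm at most $1/\theta$ via a standard Neumann expansion.

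Next, I would bound each $\matL^{(k)}_{(i),t}$ in the $\ell_{\infty}$ norm. The core building block is the elementary estimate that for a $\mu$-incoherent $\matH\in\mathbb{R}^{n\times r}$ and an $\alpha$-sparse $\matE$, $\norm{\matH\matH^T\matE}_{\infty}\leq \mu\sqrt{\alpha r}\,\norm{\matE}_{\infty}$, obtained by combining $\norm{\vece_i^T\matH}_2\leq \mu\sqrt{r/n}$ with the fact that each column of $\matE$ has at most $\alpha n$ nonzero entries and applying Cauchy--Schwarz; an analogous estimate holds for projections on the right. Applied to $\matL^{(1)}_{(i),t}$, whose schematic form is (incoherent projection)$\circ$(sparse perturbation)$\circ$(incoherent projection), rescaled by the ground-truth singular values and then inverted against the averaging operator with the $1/\theta$ bound above, this yields the claimed estimate $C\cdot\frac{\sqrt{\alpha}\mu^2 r}{\theta}\cdot\max_{j}\norm{\matE_{(j),t}}_{\infty}$. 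For higher-order terms, each additional level of the recursion multiplies the previous bound by a factor of order $\sqrt{\alpha}\mu^2 r N/\theta$, which is strictly less than $1$ under the sparsity hypothesis $\alpha=\mathcal{O}(\theta^2/(\mu^4 r^2 N^2))$, so the tail forms a geometric series of the same order as the first-order term.

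The main obstacle will be executing the entire perturbation analysis in the entrywise $\ell_{\infty}$ norm rather than in Frobenius or operator norm. Classical sensitivity theorems for low-rank factorization give spectral-norm bounds, but these pick up a spurious $\sqrt{n_1 n_2}$ blowup when converted to $\ell_{\infty}$. Avoiding this requires keeping every term of the Taylor-like series in the explicit form (incoherent projection)$\circ$(sparse matrix)$\circ$(incoherent projection), so that the elementary estimate above can be applied termwise rather than invoking the inverse of $\mathcal{T}$ abstractly. A secondary technical nuisance is that \eqref{eqn:subproblemobj} is invariant under the rotational/sign ambiguity of the factors, so the KKT system is degenerate along those gauge directions; I would resolve this by fixing a gauge (for instance, requiring $\hmatU_g^T\matUst_g$ and $\hmatU_{(i),l}^T\matUst_{(i),l}$ to be symmetric positive semidefinite), which renders the linearized operator invertible on the complementary subspace and lets the Taylor expansion proceed unambiguously.
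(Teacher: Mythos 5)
Your overall strategy is the same one the paper uses: characterize the exact optimum of \eqref{eqn:subproblemobj} through its KKT system, expand the solution as a Taylor-like series, keep every term in the explicit (incoherent)$\times$(sparse)$\times$(incoherent) form so that entrywise bounds can be applied term by term, and sum a geometric tail under the sparsity hypothesis. The executional route differs, though. The paper does not expand order-by-order around the unperturbed ground truth; it rewrites the KKT identities satisfied by the \emph{actual perturbed optimum} as Sylvester equations in $\matF_{(i)}=\matE_{(i),t}\matLst_{(i)}^T+\matLst_{(i)}\matE_{(i),t}^T+\matE_{(i),t}\matE_{(i),t}^T$, solves them with the series $\matX=\sum_p\matA^p\matY\matB^{-1-p}$, and iterates the substitution until the self-referential remainder vanishes. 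In that route, $\theta$-misalignment enters not through a Neumann bound on $\bigl(\matI-\frac{1}{N}\sum_j\Pj{\matUst_{(j),l}}\bigr)^{-1}$ but through the global-optimality error bound of \citet{personalizedpca} (used to show $\norm{\matLambda_{3,(i)}}=\mathcal{O}(\sqrt{\alpha})$) together with a Weyl-type lower bound $\lambda_{\min}(\matLambda_1),\lambda_{\min}(\matLambda_{2,(i)})\geq\tfrac{3}{4}\sms$, which supplies the spectral separation needed for the Sylvester series to converge. Your building-block estimate $\norm{\matH\matH^T\matE}_{\infty}\leq\mu\sqrt{\alpha r}\norm{\matE}_{\infty}$ is correct and is the same mechanism as the paper's Lemmas on $\max_j\norm{\vece_j^T\prod\matF^{p}\matU}_2$, and your geometric-tail ratio matches the paper's $\rhom$.

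The one step in your plan that needs more than what you wrote is the identification of your gauge-fixed power-series branch with $\hmatL_{(i),t}$, which is defined from the \emph{global} optimum of a nonconvex problem. Order matching in the KKT system plus invertibility of the linearization only tracks a locally unique stationary branch near the ground truth; to conclude that this branch is the global minimizer of the perturbed problem you need the nonzero-perturbation version of the result you invoke only at $\matE=0$, namely the bound $\norm{\matUst_g\matUst_g^T-\hmatH_g\hmatH_g^T}_F+\ldots\leq\mathcal{O}\bigl(\norm{\matF_{(i)}}_F/(\sqrt{\theta}\sms)\bigr)$ for global optima, which is exactly how the paper anchors its series to the true solution. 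Similarly, the assertion that the full linearized KKT operator has inverse bounded by $1/\theta$ should not be taken for granted: the paper's final constant carries condition-number factors of order $(\gop/\sm)^{10}$ as well as the $1/\theta$, and these arise from the $\matLambda$-blocks and the cross terms coupling $\hmatH_g$ with $\hmatH_{(i),l}$, not from the misalignment operator alone. Both issues are fixable within your framework (cite the PerPCA bound for nonzero $\matF_{(i)}$, and track the dual blocks explicitly), but as written they are gaps between your sketch and a complete proof.
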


Indeed, proving Lemma \ref{lm:informalldiffinfnorm} is particularly daunting since $\hmatL_{(i),t}$ does not have a closed-form solution. We will elaborate on the major techniques to prove Lemma \ref{lm:informalldiffinfnorm} in Section \ref{sec:proofsketchofldiffinfnorm}.

Suppose that $\alpha$ is small enough such that $C\cdot\frac{\sqrt{\alpha} \mu^2 r}{\sqrt{\theta}}\leq  \frac{\rho}{2}$. Then, Lemma \ref{lm:informalldiffinfnorm} implies that $\norm{\matL^\star_{(i)}-\hmatL_{(i),t}}_{\infty}\leq \frac{\rho}{2}\max_i\left\{\norm{\matE_{(i),t}}_{\infty}\right\}$. From the definition of $\epsilon$-optimality, we know $\norm{\matL^\star_{(i)}-\hmatLeps_{(i),t}}_{\infty}\leq \frac{\rho}{2}\max_i\left\{\norm{\matE_{(i),t}}_{\infty}\right\}+\epsilon$. This implies that the $\ell_{\infty}$ norm of the error in the output of \inneralg shrinks by a factor of $\frac{\rho}{2}$ compared with the error in the input ${\norm{\matE_{(i),t}}}_{\infty}$ (modulo an additive factor $\epsilon$). As will be discussed next, this shrinkage in the $\ell_{\infty}$ norm of the error is essential for the exact sparsity recovery of the noise matrix.

\paragraph{Step 3: Preservation of sparsity via hard-thresholding.} Given that $\norm{\matL^\star_{(i)}-\hmatLeps_{(i),t}}_{\infty}\leq \frac{\rho}{2}\max_i\left\{\norm{\matE_{(i),t}}_{\infty}\right\}+\epsilon$, our next goal is to show that $\supp{\matE_{(i),t+1}}\subseteq\supp{\matSst_{(i)}}$ (i.e., $\matE_{(i),t+1}$ remains $\alpha$-sparse) and $\max_i\left\{\norm{\matE_{(i),t+1}}_{\infty}\right\}\leq  2\lambda_{t+1}$. To prove $\supp{\matE_{(i),t+1}}\subseteq\supp{\matSst_{(i)}}$, suppose that $\left(\matS_{(i)}^\star\right)_{kl}=0$ for some $(k,l)$, we have $\left(\hmatS_{(i), t+1}\right)_{kl}\not=0$ only if $\left|\left(\matM_{(i)}-\hmatLeps_{(i),t}\right)_{kl}\right|=\left|\left(\matL_{(i)}^\star-\hmatLeps_{(i),t}\right)_{kl}\right|>\lambda_{t+1}$.
On the other hand, in the Appendix, we show that $\max_i\left\{\norm{\matE_{(i),t}}_{\infty}\right\}\leq 2\lambda_t$. This implies that $\norm{\matL^\star_{(i)}-\hmatLeps_{(i),t}}_{\infty}\leq \frac{\rho}{2}\max_i\left\{\norm{\matE_{(i),t}}_{\infty}\right\}+\epsilon \leq \rho\lambda_t+\epsilon = \lambda_{t+1}$. This in turn leads to $\left(\hmatS_{(i), t+1}\right)_{kl}=\left(\matE_{(i), t+1}\right)_{kl}=0$, and hence, $\supp{\matE_{(i),t+1}}\subseteq\supp{\matSst_{(i)}}$. Finally, according to the definition of hard-thresholding, we have $\left|\left(\hmatS_{(i),t+1}-\left(\matS^\star_{(i)}+\matL^\star_{(i)}-\hmatLeps_{(i)}\right)\right)_{kl}\right|\leq \lambda_{t+1}$, which, by triangle inequality, yields $\left|\left(\matE_{(i),t+1}\right)_{kl}\right|\leq \left|\left(\matL^\star_{(i)}-\hmatLeps_{(i)}\right)_{kl}\right|+\lambda_{t+1}\leq 2\lambda_{t+1}$.

\paragraph{Step 4: Establishing linear convergence.} Repeating Steps 2 and 3, we have $\max_i\left\{\norm{\matE_{(i),t+1}}_{\infty}\right\}\leq 2\lambda_{t+1}$ and $\norm{\matL^\star_{(i)}-\hmatLeps_{(i),t}}_{\infty}\leq\lambda_{t+1}$ for all $t$. Noting that $\lambda_t = \rho\lambda_{t-1}+\epsilon = \epsilon+\rho\epsilon + \rho^2\lambda_{t-2}=\cdots = \epsilon+\rho\epsilon+\rho^3\epsilon+\cdots + \rho^{t-1}\lambda_1\leq \frac{\epsilon}{1-\rho} + \rho^{t-1}\lambda_1$, we establish that $\max_i\left\{\norm{\matE_{(i),t}}_{\infty}\right\} = \mathcal{O}(\epsilon)$ and $\norm{\matL^\star_{(i)}-\hmatLeps_{(i),t}}_{\infty} = \mathcal{O}(\epsilon)$ in $\mathcal{O}\left(\frac{\log(\lambda_1/\epsilon)}{\log(1/\rho)}\right)$ iterations.

\paragraph{Step 5: Untangling global and local components.} Under the misalignment condition, a small error of the joint low-rank components $\norm{\matL^\star_{(i)}-\hmatL_{(i),t}}_F$ indicates that both the shared component and the unique component is small. More specifically, Theorem 1 in \citet{personalizedpca} indicates $\norm{\matUst_g\matVst_{(i),g}^T-\hmatU_{g,t}\hmatV_{(i),g,t}^T}_F, \norm{\matUst_{(i),l}\matVst_{(i),l}^T-\hmatU_{(i),l,t}\hmatV_{(i),l,t}^T}_F = \mathcal{O}\left(\norm{\matLst_{(i)}-\hmatL_{(i),l,t}}_F\right)$. Since $\norm{\matLst_{(i)}-\hmatL_{(i),l,t}}_F $ shrinks linearly to a small constant, we can conclude that the estimation errors for shared and unique features also decrease linearly to $\mathcal{O}(\epsilon)$.

\subsection{Proof of Lemma~\ref{lm:informalldiffinfnorm}}
\label{sec:proofsketchofldiffinfnorm}
At the crux of our proof for Theorem~\ref{thm:recovery} lies Lemma~\ref{lm:informalldiffinfnorm}. In its essence, Lemma~\ref{lm:informalldiffinfnorm} seeks to answer the following question: if the input to \inneralg is corrupted by $\alpha$-sparse noise matrices $\{\matE_{(i)}\}$, how will the recovered solutions change in terms of $\ell_{\infty}$ norm? We highlight that the standard matrix perturbation analysis, such as the classical Davis-Kahan bound \citep{matrixanalysis} as well as the more recent $\ell_{\infty}$ bound \citep{linfboundeigenvector}, fall short of answering this question for two main reasons. First, these bounds are overly pessimistic and cannot take into account the underlying sparsity structure of the noise. Second, they often control the singular vectors and singular values of the perturbed matrices, whereas the optimal solutions to problem \eqref{eqn:subproblemobj} generally do not correspond to the singular vectors of $\hmatM_{(i)}$.

To address these challenges, we characterize the optimal solutions of \eqref{eqn:subproblemobj} by analyzing its Karush–Kuhn–Tucker (KKT) conditions. We establish the KKT condition and ensure the linear independence constraint qualification (LICQ). Afterward, we obtain closed-form solutions for the KKT conditions in the form of convergent series and use these series to control the element-wise perturbation of the solutions.

\paragraph{KKT conditions.} 

The following lemma shows two equivalent formulations for the KKT conditions. For convenience, we drop the subscript $t$ in our subsequent arguments.

\begin{lemma}
\label{lm:kktcondition}
Suppose that $\{\hmatU_g$, $\hmatU_{(i),l}$, $\hmatV_{(i),g}$, $\hmatV_{(i),l}\}$ is the optimal solution to problem \eqref{eqn:subproblemobj} and $\hmatM_{(i)}$ has rank at least $r_1+r_2$. We have
\begin{subequations}
\label{eqn:kktinnerloop1}
\begin{align}
\label{subeqn:kktinnerloopug}\sum_{i=1}^N\left(\hmatU_g\hmatV_{(i),g}^T+\hmatU_{(i),l}\hmatV_{(i),l}^T-\hmatM_{(i)}\right)\hmatV_{(i),g}&=0\\
\label{subeqn:kktinnerloopuil}\left(\hmatU_g\hmatV_{(i),g}^T+\hmatU_{(i),l}\hmatV_{(i),l}^T-\hmatM_{(i)}\right)\hmatV_{(i),l}&=0\\
\label{subeqn:kktinnerloopvil}\left(\hmatU_g\hmatV_{(i),g}^T+\hmatU_{(i),l}\hmatV_{(i),l}^T-\hmatM_{(i)}\right)^T\hmatU_{(i),l}&=0\\
\label{subeqn:kktinnerloopvig}\left(\hmatU_g\hmatV_{(i),g}^T+\hmatU_{(i),l}\hmatV_{(i),l}^T-\hmatM_{(i)}\right)^T\hmatU_{g}&=0\\
\label{subeqn:kktinnerloopxonstraint}\hmatU_{(i),l}^T\hmatU_{(i),l}=\matI, \hmatU_{g}^T\hmatU_{g}=\matI, \hmatU_{(i),l}^T\hmatU_{g}&=0.
\end{align}
\end{subequations}
Moreover, 
there exist positive diagonal matrices $\matLambda_{1}\in \mathbb{R}^{r_1\times r_1}$, $\matLambda_{2,(i)}\in \mathbb{R}^{r_2\times r_2}$, and $\matLambda_{3,(i)}\in \mathbb{R}^{r_1\times r_2}$ such that
the optimality conditions imply:
\begin{subequations}
\label{eqn:kktderivations}
\begin{align}
&\label{subeqn:kkthil}\hmatM_{(i)}\hmatM_{(i)}^T\hmatH_{(i),l}=\hmatH_{(i),l}\bm{\Lambda}_{2,(i)}+\hmatH_g\bm{\Lambda}_{3,(i)}\\
&\label{subeqn:kkthg}\frac{1}{N}\sum_{i=1}^N\hmatM_{(i)}\hmatM_{(i)}^T\hmatH_g=\hmatH_g\bm{\Lambda}_{1}+\frac{1}{N}\sum_{i=1}^N\hmatH_{(i),l}\bm{\Lambda}_{3,(i)}^T\\
&\label{subeqn:kktconstraint}\hmatH_g^T\hmatH_g=\mathbf{I},\hmatH_{(i),l}^T\hmatH_{(i),l}=\mathbf{I},\hmatH_g^T\hmatH_{(i),l}=0,
\end{align}
\end{subequations}
for some $\hmatH_{g}\in \mathbb{O}^{n_1\times r_1}$ that spans the same subspaces as $\hmatU_g$, and some $\hmatH_{(i),l}\in \mathbb{O}^{n_1\times r_2}$ that spans the same subspaces as $\hmatU_{(i),l}$.
\end{lemma}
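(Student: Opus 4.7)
The bilinear form $\matU\matV^T$ is invariant under $\matU \to \matU\matR$, $\matV \to \matV\matR^{-T}$ for any invertible $\matR$, and the cross-orthogonality constraint $\matU_g^T\matU_{(i),l} = 0$ depends only on the column spaces of $\matU_g$ and $\matU_{(i),l}$. I would therefore begin by reparameterizing any optimal solution so that $\hmatU_g$ and each $\hmatU_{(i),l}$ have orthonormal columns; this preserves both feasibility and optimality and yields the conditions in~\eqref{subeqn:kktinnerloopxonstraint}.

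\textbf{First set of KKT conditions.} Next, I would form the Lagrangian of the reparameterized problem, introducing symmetric multipliers $\bm{\Omega}_1,\bm{\Omega}_{2,(i)}$ for the orthonormality constraints and $\bm{\Omega}_{3,(i)}$ for the cross-orthogonality constraint. The rank condition on $\hmatM_{(i)}$ makes the constraint gradients linearly independent at any feasible point with orthogonal column spaces, so LICQ holds and the KKT conditions are necessary. Differentiating in the unconstrained variables $\matV_{(i),g}$ and $\matV_{(i),l}$ directly produces~\eqref{subeqn:kktinnerloopvig} and~\eqref{subeqn:kktinnerloopvil}. For $\matU_{(i),l}$, the stationarity condition has the form
$$
\left(\hmatU_g\hmatV_{(i),g}^T+\hmatU_{(i),l}\hmatV_{(i),l}^T-\hmatM_{(i)}\right)\hmatV_{(i),l}+\hmatU_{(i),l}\bm{\Omega}_{2,(i)}+\hmatU_g\bm{\Omega}_{3,(i)}=0.
$$
Left-multiplication by $\hmatU_{(i),l}^T$, combined with~\eqref{subeqn:kktinnerloopvil} and the orthogonality constraints, forces $\bm{\Omega}_{2,(i)}=0$; left-multiplication by $\hmatU_g^T$, combined with~\eqref{subeqn:kktinnerloopvig}, forces $\bm{\Omega}_{3,(i)}=0$. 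The residual is exactly~\eqref{subeqn:kktinnerloopuil}, and the symmetric computation for $\matU_g$ yields~\eqref{subeqn:kktinnerloopug}.

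\textbf{Eigenvalue-like reformulation.} Orthonormality reduces \eqref{subeqn:kktinnerloopvig} and \eqref{subeqn:kktinnerloopvil} to $\hmatV_{(i),g}=\hmatM_{(i)}^T\hmatU_g$ and $\hmatV_{(i),l}=\hmatM_{(i)}^T\hmatU_{(i),l}$. Substituting these into~\eqref{subeqn:kktinnerloopuil} and~\eqref{subeqn:kktinnerloopug}, and using $\hmatU_g^T\hmatU_{(i),l}=0$ to split the action of the projector $\hmatU_g\hmatU_g^T+\hmatU_{(i),l}\hmatU_{(i),l}^T$, yields exactly the forms~\eqref{subeqn:kkthil} and~\eqref{subeqn:kkthg} with the multipliers identified as the quadratic forms $\bm{\Lambda}_{2,(i)}=\hmatU_{(i),l}^T\hmatM_{(i)}\hmatM_{(i)}^T\hmatU_{(i),l}$, $\bm{\Lambda}_{3,(i)}=\hmatU_g^T\hmatM_{(i)}\hmatM_{(i)}^T\hmatU_{(i),l}$, and $\bm{\Lambda}_1=\frac{1}{N}\sum_i\hmatU_g^T\hmatM_{(i)}\hmatM_{(i)}^T\hmatU_g$.

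\textbf{Diagonalization and main obstacle.} Finally, I would choose the orthonormal bases $\hmatH_g$ and $\hmatH_{(i),l}$ within the respective column spaces so that $\bm{\Lambda}_1$ and $\bm{\Lambda}_{2,(i)}$ become diagonal with positive entries; this is the spectral theorem applied to symmetric PSD matrices, with positivity following from the rank-$(r_1+r_2)$ assumption on $\hmatM_{(i)}$, which makes these quadratic forms strictly positive definite. The step I expect to be most delicate is reducing the coupling matrix $\bm{\Lambda}_{3,(i)}$ to diagonal form simultaneously, since a generic rotation diagonalizing $\bm{\Lambda}_{3,(i)}$ would disturb the diagonality of $\bm{\Lambda}_1$ or $\bm{\Lambda}_{2,(i)}$. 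I would address this by exploiting the algebraic coupling imposed by~\eqref{subeqn:kkthil}: once $\bm{\Lambda}_{2,(i)}$ is diagonalized, the KKT relation constrains the column structure of $\bm{\Lambda}_{3,(i)}$, and a careful joint SVD-style choice of $\hmatH_g$ and $\hmatH_{(i),l}$ should deliver the desired diagonal form while maintaining positivity under the rank assumption.
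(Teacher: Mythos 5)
Your overall route is essentially the paper's: establish LICQ, differentiate the Lagrangian, use stationarity in $\matV_{(i),g}$ and $\matV_{(i),l}$ (equations \eqref{subeqn:kktinnerloopvig} and \eqref{subeqn:kktinnerloopvil}) both to eliminate those variables and to kill the multipliers, substitute back to obtain the coupled eigenvector-type system, and then rotate within the column spaces of $\hmatU_g$ and $\hmatU_{(i),l}$ so that $\matLambda_1=\frac{1}{N}\sum_i\hmatU_g^T\hmatM_{(i)}\hmatM_{(i)}^T\hmatU_g$ and $\matLambda_{2,(i)}=\hmatU_{(i),l}^T\hmatM_{(i)}\hmatM_{(i)}^T\hmatU_{(i),l}$ become positive diagonal; these are exactly the identifications in the paper's proof. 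Where you differ is the handling of orthonormality: you impose \eqref{subeqn:kktinnerloopxonstraint} by gauge fixing and then show the extra multipliers $\bm{\Omega}_{2,(i)}$, $\bm{\Omega}_{3,(i)}$ vanish using the $\matV$-stationarity, whereas the paper carries only the cross-orthogonality multiplier and claims to read off $\hmatU_g^T\hmatU_g=\matI$ and $\hmatU_{(i),l}^T\hmatU_{(i),l}=\matI$ directly from the stationarity conditions (a step that is at best delicate, since the objective and constraint are invariant under right-multiplication of the factors); your normalization argument reaches the same display more transparently, so this difference is in your favor.

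The one place your plan goes astray is the final ``delicate step.'' You should not try to make $\matLambda_{3,(i)}$ diagonal, and in general you cannot: once $\hmatH_g$ is pinned down up to a single $r_1\times r_1$ rotation that must keep the averaged matrix $\matLambda_1$ diagonal, and each $\hmatH_{(i),l}$ up to an $r_2\times r_2$ rotation that must keep $\matLambda_{2,(i)}$ diagonal, there is no remaining freedom to force all $N$ rectangular couplings $\matLambda_{3,(i)}=\hmatH_g^T\hmatM_{(i)}\hmatM_{(i)}^T\hmatH_{(i),l}$ into diagonal form simultaneously; a joint SVD handles one client but not all clients sharing the same $\hmatH_g$. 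The paper does not attempt this either: its proof simply defines $\matLambda_{3,(i)}=\hmatH_{g}^T\hmatM_{(i)}\hmatM_{(i)}^T\hmatH_{(i),l}$ and establishes nothing about its structure---the ``positive diagonal'' attribute for the $r_1\times r_2$ matrix $\matLambda_{3,(i)}$ in the lemma statement is not proved and is not used anywhere downstream, where only the operator-norm bound of Lemma \ref{lm:lbd32normbound} on $\matLambda_{3,(i)}$ matters. So the correct resolution is to drop that goal: diagonalize $\matLambda_1$ and $\matLambda_{2,(i)}$ by the spectral theorem (positivity following, as you and the paper both argue, from the rank assumption on $\hmatM_{(i)}$), define $\matLambda_{3,(i)}$ as above, and stop; with that amendment your argument matches the paper's proof in substance.
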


The $\matLambda_{3,(i)}$ term in \eqref{eqn:kktderivations} complicates the relation between $\hmatH_{g}$ and $\hmatH_{(i),l}$. When $\matLambda_{3,(i)}$ is nonzero, one can see that neither $\hmatH_{g}$ nor $\hmatH_{(i),l}$ span a invariant subspace of $\hmatM_{(i)}\hmatM_{(i)}^T$. As a consequence, perturbation analysis from \cite{nonconvexrobustpca} based on characteristic equations is not applicable. To alleviate this issue, we provide a more delicate control over the solution set of \eqref{eqn:kktderivations}. 

\paragraph{Solutions to KKT conditions}
The characterization \eqref{eqn:kktderivations} contains structural information for $\hmatH_g$ and $\hmatH_{(i),l}$ that can be exploited for the perturbation analysis. To see this, recall the definition $\hat\matM_{(i)}=\matM_{(i)}-\hat\matS_{(i)}$. Combining this definition with \eqref{eqn:kktderivations} leads to

\begin{equation}
\label{eqn:informalsimplifiedkkt}
\left\{
\begin{aligned}
&\left(\matE_{(i),t}\matLst_{(i)}^T+\matLst_{(i)}\matE_{(i),t}^T+\matE_{(i),t}\matE_{(i),t}^T\right)\hmatH_{(i),l}-\hmatH_{(i),l}\matLambda_{2,(i)} = -\left(\matLst_{(i)}\matLst_{(i)}^T \hmatH_{(i),l}-\hmatH_g\matLambda_{3,(i)}\right)\\
&\frac{1}{N}\sum_{i=1}^N\left(\matE_{(i),t}\matLst_{(i)}^T+\matLst_{(i)}\matE_{(i),t}^T+\matE_{(i),t}\matE_{(i),t}^T\right)\hmatH_{g}-\hmatH_{g}\matLambda_{1} \\
&\hspace{7cm}= -\left(\frac{1}{N}\sum_{i=1}^N\matLst_{(i)}\matLst_{(i)}^T \hmatH_{g}-\frac{1}{N}\sum_{i=1}^N\hmatH_{(i),l}\matLambda_{3,(i)}\right).
\end{aligned}\right.
\end{equation}

We can show that the norms of input errors $\matE_{(i),t}$ $\matLambda_{3,(i)}$ are upper bounded by $\mathcal{O}(\sqrt{\alpha})$. Thus, when the sparsity parameter $\alpha$ is not too large, we can write the solutions to \eqref{eqn:informalsimplifiedkkt} as a series of $\alpha$. 

In the limit $\alpha=0$, we have $\matE_{(i),t}=0$, thus we can solve the leading terms of $\hmatH_{(i),l}$ and $\hmatH_{g}$ from~\eqref{eqn:informalsimplifiedkkt}. When $\alpha$ is not too large, we can prove the following lemma,
\revise{
\begin{lemma}
\label{lm:informalhpertuabation}
(informal) If $\alpha$ is not too large, then $\hmatH_g$ and $\hmatH_{(i),l}$ introduced in Lemma \ref{lm:kktcondition} satisfy,
\begin{equation}
\label{eqn:hhatsolutioninformal}
\left\{\begin{aligned}
&\hmatH_{g} = \left(\frac{1}{N}\sum_{i=1}^N\matLst_{(i)}\matLst_{(i)}^T\left(\hmatH_{g}-\hmatH_{(i),l}\matLambda_{2,(i)}^{-1}\matLambda_{3,(i)}^T\right)\right) \matLambda_6^{-1}+\mO(\sqrt{\alpha})\\
&\hmatH_{(i),l} = \matLst_{(i)}\matLst_{(i)}^T \hmatH_{(i),l}\matLambda_{2,(i)}^{-1}\\
&- \left(\frac{1}{N}\sum_{j=1}^N\matLst_{(j)}\matLst_{(j)}^T\left(\hmatH_{g}-\hmatH_{(j),l}\matLambda_{2,(j)}^{-1}\matLambda_{3,(j)}^T\right)\right) \matLambda_6^{-1}\matLambda_{3,(i)}\matLambda_{2,(i)}^{-1}+\mO(\sqrt{\alpha}),\\
\end{aligned}\right. 
\end{equation}
where $\mO(\sqrt{\alpha})$'s are terms whose Frobenius norm and $\ell_{\infty}$ norm is upper bounded by $\mO(\sqrt{\alpha})$, and $\matLambda_{6}$ is defined as $\matLambda_{6} = \matLambda_{1}-\frac{1}{N}\sum_{j=1}^N\matLambda_{3,(j)}\matLambda_{2,(j)}^{-1}\matLambda_{3,(j)}^T$.
\end{lemma}
}
\revise{The formal version of lemma \ref{lm:informalhpertuabation} and its proof are relegated to the appendix. We now briefly introduce our methodology for deriving the solutions in Lemma \ref{lm:informalhpertuabation}. For matrices $\matA,\matB,\matX,\matY$ satisfying the Sylvester equation $\matA\matX-\matX\matB=-\matY$, if the spectra of $\matA$ and $\matB$ are separated, i.e., $\sigma_{\max}(\matA)<\sigma_{\min}(\matB)$, then the solution can be written as $\matX=\sum_{p=0}^{\infty} \matA^p\matY\matB^{-1-p}$. We apply this solution form to \eqref{eqn:informalsimplifiedkkt} and iteratively expand $\hmatH_g$ and $\hmatH_{(i),l}$. The exact forms of the resulting series are shown in \eqref{eqn:ugcondition} and \eqref{eqn:ulcondition} in the appendix. In the series, each term is a product of a group of sparse matrices, an incoherent matrix, and some remaining terms. Based on the special structure of the series, we can calculate upper bounds on the Frobenius norm and maximum row norm of each term in the series. The leading terms are simply $\frac{1}{N}\sum_{i=1}^N\matLst_{(i)}\matLst_{(i)}^T\left(\hmatH_{g}-\hmatH_{(i),l}\matLambda_{2,(i)}^{-1}\matLambda_{3,(i)}^T\right) \matLambda_6^{-1}$ and $\matLst_{(i)}\matLst_{(i)}^T \hmatH_{(i),l}\matLambda_{2,(i)}^{-1}- \frac{1}{N}\sum_{j=1}^N\matLst_{(j)}\matLst_{(j)}^T\left(\hmatH_{g}-\hmatH_{(j),l}\matLambda_{2,(j)}^{-1}\matLambda_{3,(j)}^T\right) \matLambda_6^{-1}\matLambda_{3,(i)}\matLambda_{2,(i)}^{-1}$. By summing up the norm of all remaining higher-order terms in the series and applying a few basic inequalities in geometric series, we can prove the result in Lemma~\ref{lm:informalhpertuabation}.}

\paragraph{Perturbations on the optimal solutions to \eqref{eqn:subproblemobj}} Lemma \ref{lm:informalhpertuabation} then allows us to establish the inequality in Lemma \ref{lm:informalldiffinfnorm}. In epoch $t$, we know $\hmatL_{(i),t}=\hmatU_{g,t}\hmatV_{(i),g,t}^T+\hmatU_{(i),l,t}\hmatV_{(i),l,t}^T$, where $\hmatU_{g,t}, \hmatV_{(i),g,t}, \hmatU_{(i),l,t}, \hmatV_{(i),l,t}$ are the optimal solutions to the subproblem \eqref{eqn:subproblemobj}. By Lemma \ref{lm:kktcondition}, one can replace $\hmatU_{g,t}, \hmatV_{(i),g,t}, \hmatU_{(i),l,t}, \hmatV_{(i),l,t}$ by $\hmatH_g$ and $\hmatH_{(i),l}$ 
, and rewrite $\hmatL_{(i),t}$ as, 
$$
\hmatL_{(i),t} = \hmatH_{g,t}\hmatH_{g,t}^T\hmatM_{(i)}+\hmatH_{(i),l,t}\hmatH_{(i),l,t}^T\hmatM_{(i)}.
$$

Then, we can replace $\hmatH_{g,t}$ and $\hmatH_{(i),l,t}$ by the Taylor-like series described in Lemma \ref{lm:informalhpertuabation}. The error between $\matLst_{(i)}$ and $\hmatL_{(i),t}$ can be written as the summation of a few terms. The leading term is $\matHst_g\matHst_g^T\matLst_{(i)}+\matHst_{(i),l}\matHst_{(i),l}^T\matLst_{(i)}$, which is identical to $\matLst_{(i)}$ because of the SVD \eqref{eqn:truemodelsvd}. The remaining terms are errors resulting from $\mO(\sqrt{\alpha})$ terms in \eqref{eqn:hhatsolutioninformal} and $\matE_{(i)}$. Each of the error terms possesses a special structure that allows us to derive an upper bound on its $\ell_{\infty}$ norm. By summing up all these bounds, we can show that $\norm{\hmatL_{(i),t}-\matLst_{(i)}}_{\infty}\le \mO(\sqrt{\alpha}\max_j\norm{\matE_{(j),t}}_{\infty})$. The detailed calculations on the upper bounds on the $\ell_{\infty}$ norm of error terms are long and repetitive, thus relegated to the proof of Lemma \ref{lm:ldiffinfnorm} in the Appendix.

\section{Numerical Experiments}
\label{sec:numericalexperiment}
In this section, we investigate the numerical performance of \name on several datasets. We first use synthetic datasets to verify the convergence in Theorem \ref{thm:recovery} and validate \name's capability in recovering the common and individual features from noisy observations. Then, we use two examples of noisy video segmentation and anomaly detection to illustrate the utility of common, unique, and noise components. We implement Algorithm \ref{alg:altmin} with HMF \citep{hmf} as its subroutine \inneralg. Experiments in this section are performed on a desktop with 11th Gen Intel(R) i7-11700KF and NVIDIA GeForce RTX 3080. Code is available in the linked \href{https://github.com/UMDataScienceLab/TCMF}{Github repository}.

\subsection{Exact Recovery on Synthetic Data}
\label{sec:syntheticdata}
On the synthetic dataset, we simulate the data generation process in \eqref{eqn:matrixmodel}. We use $N=100$ sources and set the data dimension of $\matM_{(i)}$ to $15\times 1000$ in each source. We randomly generate $r_1=3$ global features and $r_2=3$ local features for each source. The local features are first generated randomly, then deflated to be orthogonal to the global ones. The sparse noise matrix $\matSst_{(i)}$ is randomly generated from the Bernoulli model, i.e., each entry of $\matSst_{(i)}$ is nonzero with probability $p$ and zero with probability $1-p$. We use $p$ as a proxy of the sparsity parameter $\alpha\overset{\Delta}{=}p$. The value of each entry in $\matSst_{(i)}$ is randomly sampled from $\{-100,100\}$ with equal probability. Next, we use \eqref{eqn:matrixmodel} to construct the observation matrix.  

With the generated $\{\matM_{(i)}\}$, we run Algorithm \ref{alg:altmin} with $\rho=0.99$ to estimate local, global, and sparse components. The subroutine \inneralg in Algorithm \ref{alg:altmin} is implemented by \hmf with spectral initialization. As discussed in Appendix \ref{sec:hmfintro}, \hmf is an iterative algorithm. In practice, for each call of \hmf, we run $500$ iterations with constant stepsize $0.005$, which take around $62$ seconds in our machine and generate satisfactory outputs. To quantitatively evaluate the convergence error, we calculate the $\ell_{\infty}$ error of local, global, and sparse components as specified in Theorem \ref{thm:recovery}. More specifically, we calculate the $\ell_{\infty}$ global error at epoch $t$ as $$
\ell_{\infty}-\text{global error} = \frac{1}{N}\sum_{i=1}^N\norm{\hmatUeps_{g,t}\hmatVeps_{(i),g,t}^T-\matUst_{(i),g}\matVst_{(i),g}^T}_{\infty},
$$
the $\ell_{\infty}$ local error at epoch $t$ as 
$$
\ell_{\infty}-\text{local error}=\frac{1}{N}\sum_{i=1}^N\norm{\hmatUeps_{(i),l,t}\hmatVeps_{(i),l,t}^T-\matUst_{(i),l}\matVst_{(i),l}^T}_{\infty},
$$
and the $\ell_{\infty}$ sparse noise error at epoch $t$ as 
$$
\ell_{\infty}-\text{sparse error}=\frac{1}{N}\sum_{i=1}^N\norm{\hmatS_{(i),t}-\matSst_{(i)}}_{\infty}.
$$   We show the error plot for three different sparsity parameters $\alpha$ in Figure  \ref{fig:errplot}. 

\begin{figure}[h!]
\centering
\includegraphics[width=8cm]{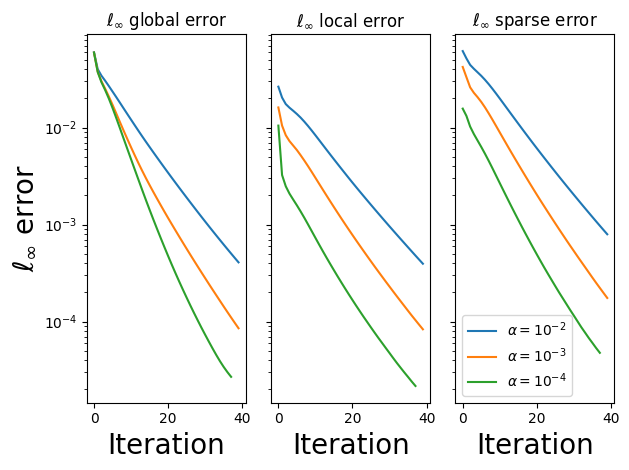}
\caption{Error plots of Algorithm \ref{alg:altmin}. The x-axis denotes the iteration index, and the y-axis shows the  $\ell_{\infty}$ error at the corresponding iteration. The y-axis is in log scale.}
\label{fig:errplot}
\end{figure}
From Figure \ref{fig:errplot}, it is clear that the global, local, and sparse components indeed converge linearly to the ground truth.

Further, we compare the feature extraction performance of \name with benchmark algorithms, including JIVE \citep{jive}, RJIVE \citep{rjive}, RaJIVE \citep{rajive}, and HMF \citep{hmf}. We do not include the comparison with RCICA \citep{rcica} because RCICA is designed only for $N=2$, while we have $100$ different sources. Since the errors of different methods vary drastically, we calculate and report the logarithm of global error as $\texttt{g-error}=\log_{10}\left(\frac{1}{N}\sum_{i=1}^N\norm{\hmatUeps_{g,t}\hmatVeps_{(i),g,t}^T-\matUst_{(i),g}\matVst_{(i),g}^T}_{F}^2\right)$, the logarithm of local error as $\texttt{l-error}=\log_{10}\left(\frac{1}{N}\sum_{i=1}^N\norm{\hmatUeps_{(i),l,t}\hmatVeps_{(i),l,t}^T-\matUst_{(i),l}\matVst_{(i),l}^T}_{F}^2\right)$, and the logarithm of sparse noise error as $\texttt{s-error}=\log_{10}\left(\sum_{i=1}^N\norm{\hmatS_{(i),t}-\matSst_{(i)}}_{F}^2\right)$ at $t=20$. We run experiments from $5$ different random seeds and calculate the mean and standard deviation of the log errors. Results are reported in Table \ref{tab:logrecoverysynthetic}.

\begin{table}[htbp]
  \centering
  \caption{Recovery error of different algorithms. The columns \texttt{g-error}, \texttt{l-error}, and \texttt{s-error} stand for the log recovery errors of global components, local components, and sparse components. }
  \scalebox{0.88}{
    \begin{tabular}{ccccccc}
    \toprule
          & \multicolumn{3}{c}{$\alpha=0.01$} & \multicolumn{3}{c}{$\alpha=0.1$}   \\
          & \texttt{g-error} & \texttt{l-error} & \texttt{s-error} & \texttt{g-error} & \texttt{l-error} & \texttt{s-error}  \\
          \hline
    JIVE & $5.52 \pm  0.01$ &  $5.64 \pm 0.01$ & - & $6.52 \pm  0.01$ &  $6.58 \pm 0.01$ & - \\
HMF & $5.49 \pm  0.01$ &  $5.62 \pm 0.01$ & - & $6.48 \pm  0.01$ &  $6.55 \pm 0.01$ & - \\
RaJIVE & $5.46 \pm  0.01$ &  $5.36 \pm 0.05$ & $ 5.71 \pm 0.05$ & $6.48 \pm  0.00$ &  $6.25 \pm 0.14$ & $ 6.59 \pm 0.12$ \\
RJIVE & $5.49 \pm  0.01$ &  $5.44 \pm 0.01$ & $ 5.77 \pm 0.01$ & $6.48 \pm  0.00$ &  $6.47 \pm 0.00$ & $ 6.78 \pm 0.00$ \\
\name & $\textbf{-3.38} \pm  0.14$ &  $\textbf{-3.37} \pm 0.13$ & $ \textbf{-2.94} \pm 0.08$ & $\textbf{-1.93} \pm  0.09$ &  $\textbf{-1.95} \pm 0.06$ & $ \textbf{-1.54} \pm 0.04$ \\
    \bottomrule
    \end{tabular}%
}\label{tab:logrecoverysynthetic}%
\end{table}%

Table \ref{tab:logrecoverysynthetic} shows that \name outperforms benchmark algorithms by several orders. This is understandable as \name is provably convergent into the ground truth, while benchmark algorithms either neglect sparse noise or rely on instance-dependent heuristics. 

\subsection{Video Segmentation from Noisy Frames}
An important task in video segmentation is background-foreground separation. There are several matrix factorization algorithms that can achieve decent performance in video segmentation, including robust PCA \citep{robustpca}, PerPCA \citep{personalizedpca}, and HMF \citep{hmf}. However, the separation is much more challenging when the videos are corrupted by large noise \citep{smoothsparsedecomposition}. \name can naturally handle such tasks with its power to recover global and local components from highly noisy measurements. 

We use a surveillance video from \cite{vacavant} as an example. In the video, multiple vehicles drive through the circle. We add large and sparse noise to the frames to simulate the effects of large measurement errors. More specifically, similar to \ref{sec:syntheticdata}, we sample each entry of noise from i.i.d. Bernoulli distribution that is zero with probability $0.99$ and nonzero with probability $0.01$. And each entry is sampled from $\{-500,500\}$ with equal probability. Then we apply \name on the noisy frames to recover $\hmatU_g,\{\hmatV_{(i),g},\hmatU_{(i),l},\hmatV_{(i),l},\hmatS_{(i)}\}$. We set $\rho$ to $0.95$ and use $T=15$ epochs. The subroutine \inneralg is still implemented by \hmf with spectral initialization. We also set the number of iterations for \hmf to $500$. This is a conservative choice to ensure small optimization error $\epsilon$ in the subroutine. To visualize the results, we plot global components $\hmatU_g\hmatV_{(i),g}^T$ and local components $\hmatU_{(i),l}\hmatV_{(i),l}^T$. They are shown in Table \ref{tab:carvideo}.

\begin{table}[h!]
\caption{Foreground Background separation }
\label{tab:carvideo}
\begin{center}
\begin{small}
\begin{tabular}{ccccc}
\toprule
Frame &  1 & 2 &3  \\ 
\hline
\begin{tabular}{@{}c@{}}Original \\noisy\\ frames\end{tabular} & \begin{minipage}[c]{0.2\textwidth} \includegraphics[width=.99\linewidth]{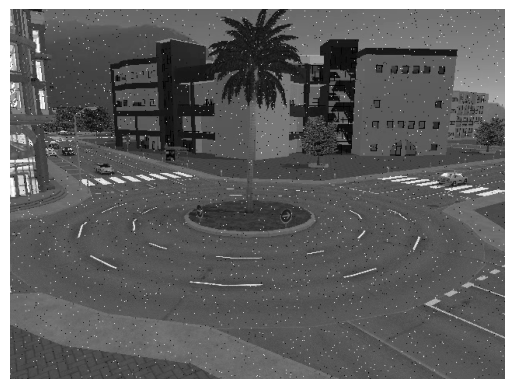} \end{minipage} & \begin{minipage}[c]{0.2\textwidth} \includegraphics[width=.99\linewidth]{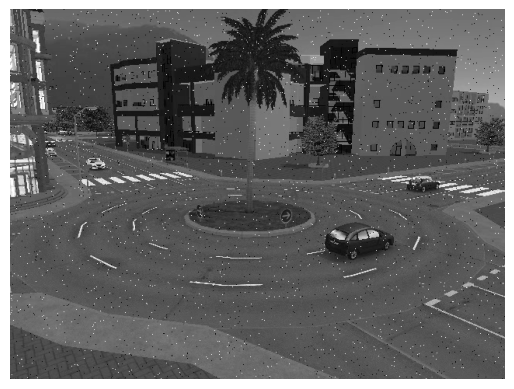} \end{minipage} &\begin{minipage}[c]{0.2\textwidth} \includegraphics[width=.99\linewidth]{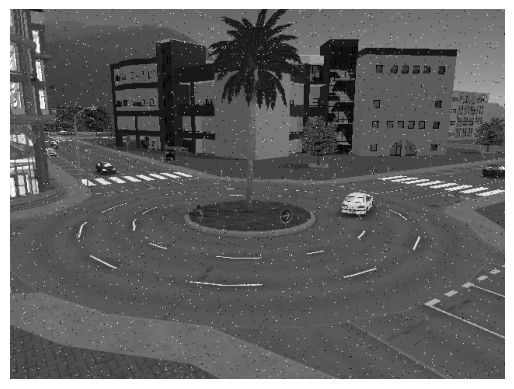} \end{minipage} \\

\begin{tabular}{@{}c@{}}Noise\end{tabular} & \begin{minipage}[c]{0.2\textwidth} \includegraphics[width=.99\linewidth]{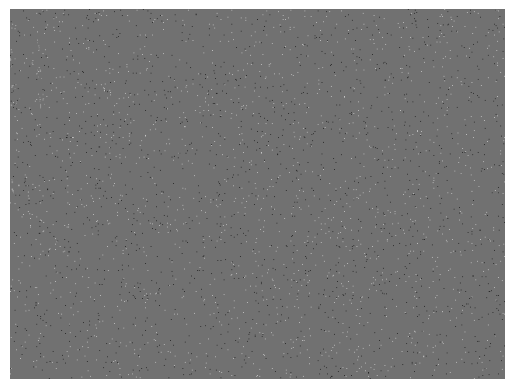} \end{minipage} & \begin{minipage}[c]{0.2\textwidth} \includegraphics[width=.99\linewidth]{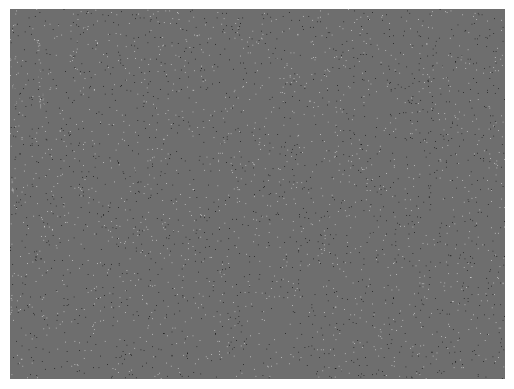} \end{minipage} &\begin{minipage}[c]{0.2\textwidth} \includegraphics[width=.99\linewidth]{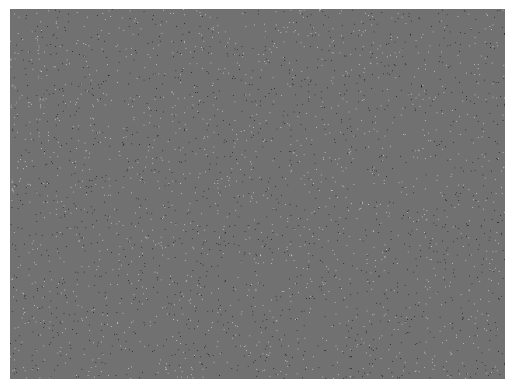} \end{minipage} \\

\begin{tabular}{@{}c@{}}Global\\ components\end{tabular} &\begin{minipage}[c]{0.2\textwidth} \includegraphics[width=.99\linewidth]{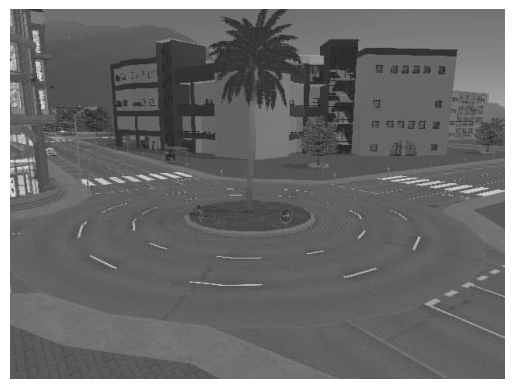} \end{minipage} & \begin{minipage}[c]{0.2\textwidth} \includegraphics[width=.99\linewidth]{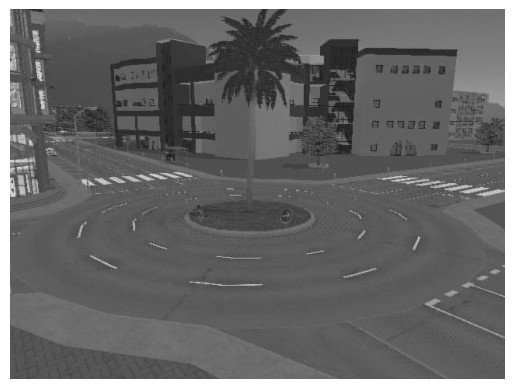} \end{minipage} &\begin{minipage}[c]{0.2\textwidth} \includegraphics[width=.99\linewidth]{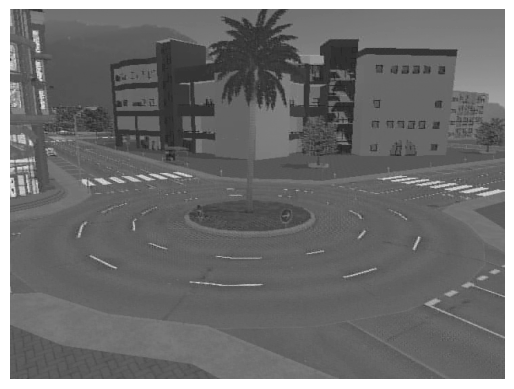} \end{minipage} \\

\begin{tabular}{@{}c@{}}Local\\ components\\
\end{tabular} & \begin{minipage}[c]{0.2\textwidth} \includegraphics[width=.99\linewidth]{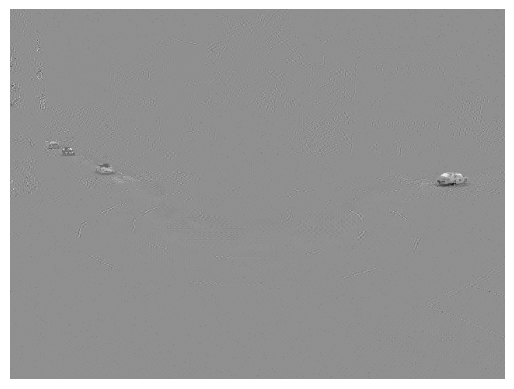} \end{minipage} & \begin{minipage}[c]{0.2\textwidth} \includegraphics[width=.99\linewidth]{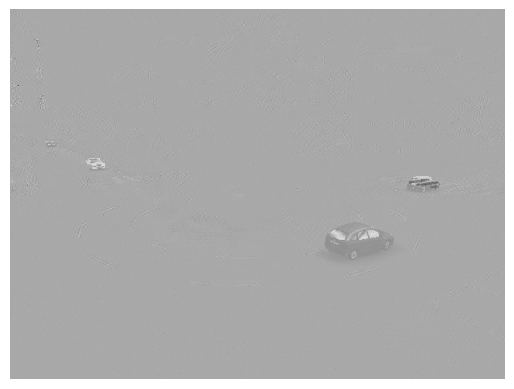} \end{minipage} &\begin{minipage}[c]{0.2\textwidth} \includegraphics[width=.99\linewidth]{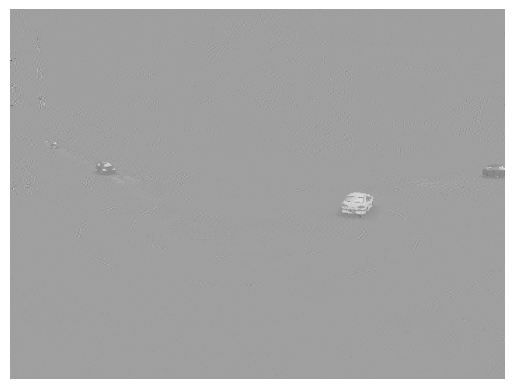} \end{minipage} \\
\bottomrule
\end{tabular}
\end{small}
\end{center}
\end{table}
In Table \ref{tab:carvideo}, the background and foreground are clearly separated from the noise. The result highlights \name's ability to extract features in high-dimensional noisy data.

We compare \name to several benchmark methods, namely JIVE, HMF, RJIVE, RaJIVE, and Robust PCA. These algorithms, including JIVE, HMF, RJIVE, and RaJIVE, are capable of producing joint and individual components of video frames. In our evaluation, we consider the joint component as the background and the individual component as the foreground. As for robust PCA, we flatten each image into a row vector and create a large matrix $\matM_{\text{stack}}$ by stacking these row vectors. We then utilize the nonconvex robust PCA \citep{nonconvexrobustpca} to extract the sparse and low-rank components from $\matM_{\text{stack}}$. The low-rank component is regarded as the background, while the sparse component captures the foreground.

To assess the performance of these methods, we calculate the differences between the recovered background and foreground compared to the ground truths. Specifically, we estimate the mean squared error (MSE), peak signal-to-noise ratio (PSNR), and structural similarity index (SSIM) of the recovered foreground and background with respect to the true foreground and background. The comparison results are presented in Table \ref{tab:videometrics}.
\begin{table}[htbp]
  \centering
  \caption{Background and foreground recovery quality metrics for different algorithms.}
\begin{tabular}{cccccccc}
\toprule
          & \multicolumn{3}{c}{Background} & \multicolumn{3}{c}{Foreground} & Wall-clock \\
          & MSE $\downarrow$   & PSNR $\uparrow$ & SSIM $\uparrow$ & MSE $\downarrow$  & PSNR $\uparrow$ & SSIM $\uparrow$ & time (s) $\downarrow$\\
          \hline
    JIVE  & 415   & -26   & 0.08  & 2521  & 14    & 0.03 & $1.6\times 10^3$ \\
    HMF   & 198   & -22   & 0.18  & 2413  & 14    & 0.05 & $2.3\times 10^1$\\
    
    PerPCA   & 236   & -23   & 0.14  & 2389  & 14    & 0.07 & $9.8\times 10^1$\\
    RJIVE & 277   & -24   & 0.13  & 1309  & 16    & 0.22 & $9.2\times 10^1$\\
    RaJIVE &  170     &   -22    &    0.22   &   166    &   26    &  0.18 & $1.2\times 10^4$\\
    Robust PCA &  0.0016     &   31    &    \textbf{1.00}   &   5105    &   11    &  0.61 & $\textbf{3.3}\times 10^{-1}$\\
    \name  & \textbf{0.0003} & \textbf{33}    & \textbf{1.00}     & \textbf{98 }   & \textbf{31}    & \textbf{0.98} & $3.5\times 10^2$\\
    \bottomrule
    \end{tabular}%
  \label{tab:videometrics}%
\end{table}%

In Table \ref{tab:videometrics}, a lower MSE, a higher PSNR, and a higher SSIM signify superior recovery quality. In terms of background recovery, both \name and robust PCA exhibit low MSE, high PSNR, and high SSIM, surpassing other methods. This suggests that both algorithms effectively reconstruct the background. This outcome was anticipated as \name and robust PCA possess the capability to differentiate between significant noise and low-rank components. In contrast, other benchmarks either neglect large noise in the model or rely on heuristics. Furthermore, \name showcases marginally superior performance in MSE and PSNR compared to robust PCA, signifying higher-quality background recovery.

When it comes to foreground recovery, \name outperforms benchmark algorithms significantly across all metrics. The inability of robust PCA to achieve high-quality foreground recovery is likely due to its inability to separate sparse noise from the foreground. JIVE and HMF yield high MSE and low PSNR, indicating noisy foreground reconstruction. Although heuristic methods, such as RJIVE and RaJIVE, exhibit slight performance improvements over JIVE and HMF, they still fall short of the performance exhibited by \name. This comparison underscores \name's remarkable power to identify unique components from sparse noise accurately.

We also report the running time of each experiment in Table \ref{tab:videometrics}. Compared with heuristic methods to robustly separate the shared and unique components, \name exhibits a slightly longer running time than RJIVE but significantly outperforms RaJIVE in terms of speed. The comparison highlights \name's superior performance with moderate computation demands. \revise{Although Robust PCA demonstrates a relatively short running time in this instance, larger-scale experiments presented in Appendix~\ref{ap:runtimecomparison}
 will show that Robust PCA has larger running time scaling as the problem size increases. }

\subsection{Case Study: Defect Detection on Steel Surface}

Hot rolling is an important process in steel manufacturing. For better product quality, a critical task is to detect and locate the defects that arise in the rolling process \citep{datasetoriginal}. In this study, the dataset \citep{datasetoriginal,haodataset} comes from the HotEye video of a rolling steel plate. The video captures sharp pictures of the surface of the steel plate. An example is shown in the left graph of Figure \ref{fig:tworesultsrolling}. The irregular dark dots in the graph indicate surface defects that require subsequent investigations \citep{datasetoriginal}.

As different frames of the rolling video are related, they possess similar background patterns. Meanwhile, each frame also contains unique variations that reflect frame-by-frame differences. On top of the changing patterns, there are small defects on the surface of the steel plate. The defects, as shown in the left graph of Figure~\ref{fig:tworesultsrolling}, only occupy small spatial regions and thus can naturally be modeled by sparse outliers.  

In such scenarios, the application of \name enables the identification of defects and extraction of common and unique patterns simultaneously. For this experiment, we use \name to segment 100 hot-rolling video frames. The right graph of Figure~ \ref{fig:tworesultsrolling} illustrates two frames selected from the rolling video alongside the corresponding recovered global, local, and sparse components. We set the reduction parameter $\rho=0.97$ and the number of epochs $T=100$. The details of the subroutine \inneralg are relegated to Appendix \ref{sec:hmfintro}. Additionally, as a comparative analysis, we employ nonconvex robust PCA \citep{nonconvexrobustpca} to recover and display the low-rank and sparse components from frames. Our robust PCA implementation alternatively applies SVD and hard thresholding. We require all entries in the sparse component to be negative in the hard-thresholding step to encode the domain knowledge that surface defects tend to have lower temperatures. The hyper-parameters for SVD and thresholding are consistent for both \name and Robust PCA. 

\begin{figure}[h!]
   \begin{minipage}{0.25\textwidth}
     \centering
     \includegraphics[width=.9\linewidth]{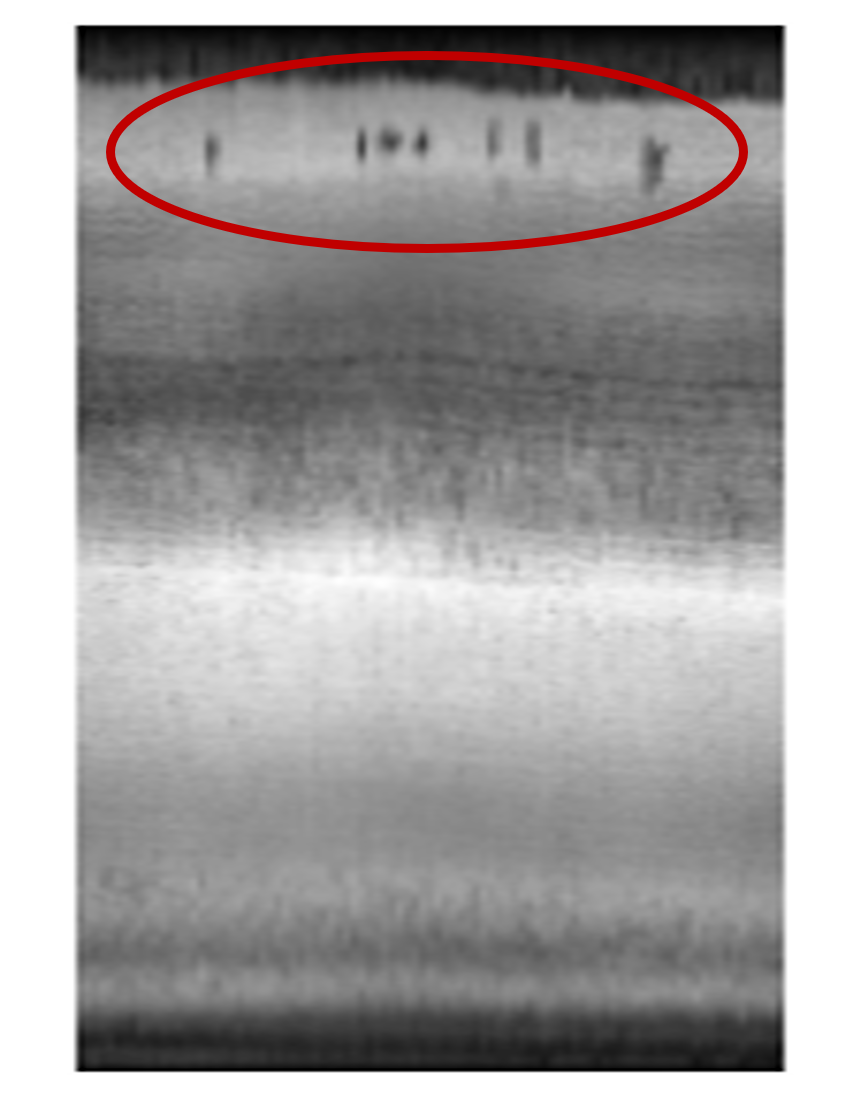}
     
   \end{minipage}\hfill
   \begin{minipage}{0.75\textwidth}
     \centering
     \includegraphics[width=.999\linewidth]{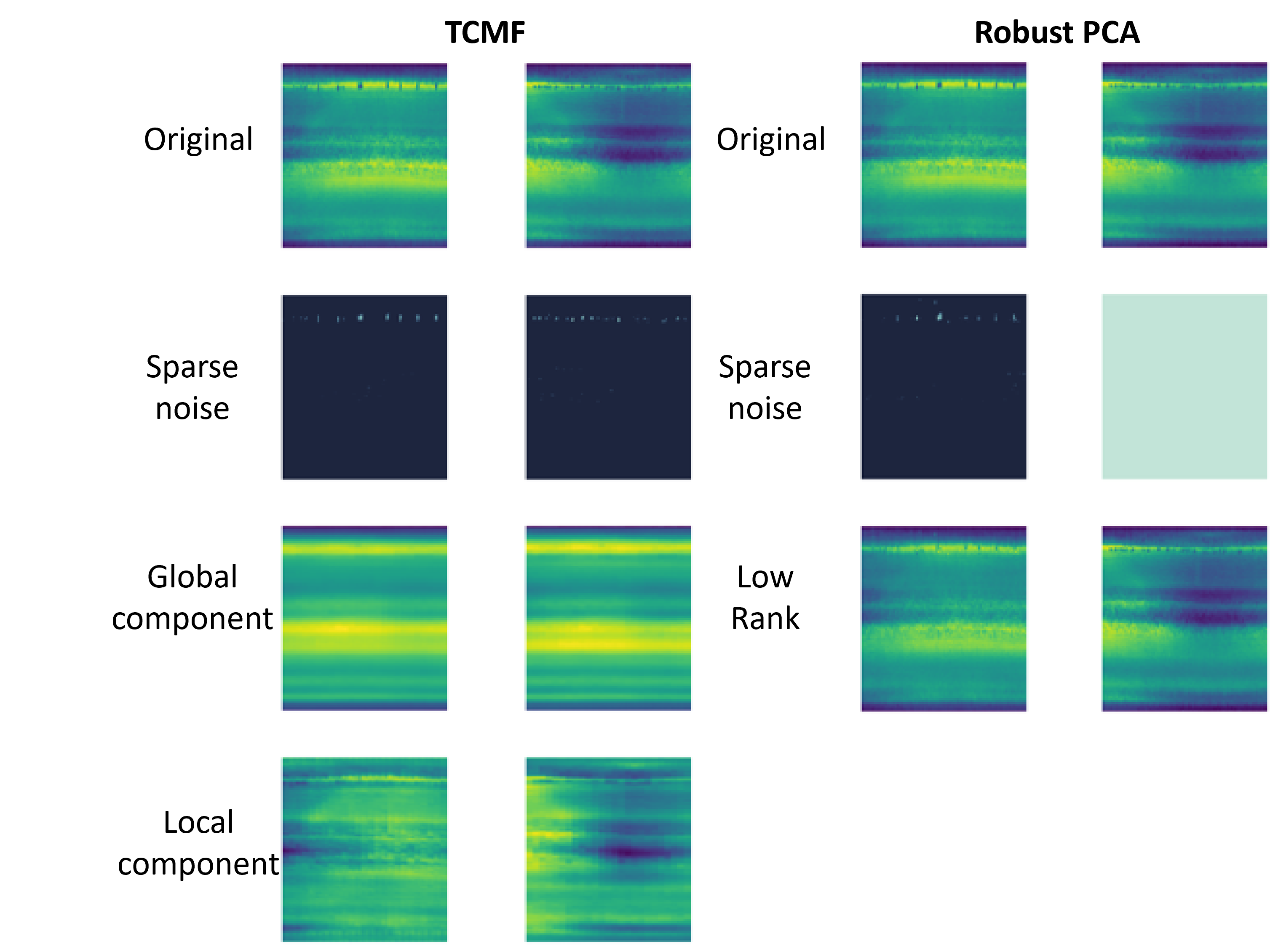}
     
   \end{minipage}
   \caption{\textit{Left}: An example of the surface of the steel bar. There are a few anomalies inside the red ellipse. \textit{Right}: Recovered sparse noises, shared components, and unique components from $2$ frames.}\label{fig:tworesultsrolling}
\end{figure}
In Figure \ref{fig:tworesultsrolling}, we can see that \name effectively identifies the small defects on the steel plate surface. The global component reflects the general patterns in the video frames, while the local component accentuates the variations in different frames. In contrast, the sparse components recovered by robust PCA do not faithfully represent the surface defects. 

We proceed to show that \name-recovered sparse components can be conveniently leveraged for frame-level anomaly detection. Our task here is to identify which frames contain surface anomalies. Inspired by the statistics-based anomaly detection \citep{anomalydetection}, we construct simple test statistics to monitor the anomalies. The test statistics is defined as the $\ell_1$ norm of the recovered sparse noise on each frame $\norm{\hmatS_{(i)}}_1$. Indeed, a large $\norm{\hmatS_{(i)}}_1$ provides strong evidence for surface defects. The choice of $\ell_1$ norm is not special as we find other norms, such as $\ell_2$ norm, would yield a similar performance. 

After using \name to extract the sparse components, we calculate the test statistics for each frame. Then, we can set up a simple threshold-based classification rule for anomaly detection: when the $\ell_1$ norm exceeds the threshold, we report an anomaly in the corresponding frame. In the case study, the threshold is set to be the highest value in the first $50$ frames, which is the in-control group that does not contain anomalies \citep{haodataset}. 
We plot the test statistics and thresholds in Figure~\ref{Fig:robustpersonalizedpcaplate}. The blue dots and red crosses denote the (ground truth) normal and abnormal frame labels in 
\citet{haodataset}. In an ideal plot of test statistics, one would expect the abnormal samples to have higher $\ell_1$ norms, while normal samples should have lower norms. This is indeed the case for Figure~\ref{Fig:robustpersonalizedpcaplate}, where a simple threshold based on the sparse features can distinguish abnormal samples from normal ones with high accuracy.

In comparison, we also calculate the $\ell_1$ norm of sparse noise recovered by robust PCA and plot the testing statistics in Figure \ref{Fig:robustpcaplate}. In Figure \ref{Fig:robustpcaplate}, the $\ell_1$ norm is less indicative of anomaly labels, as some abnormal samples have small test statistics, while some normal samples have large statistics. It is also hard to use a threshold on the test statistics to predict anomalies. 

\begin{figure}[H]
   \begin{minipage}{0.45\textwidth}
     \centering
     \includegraphics[width=.99\linewidth]{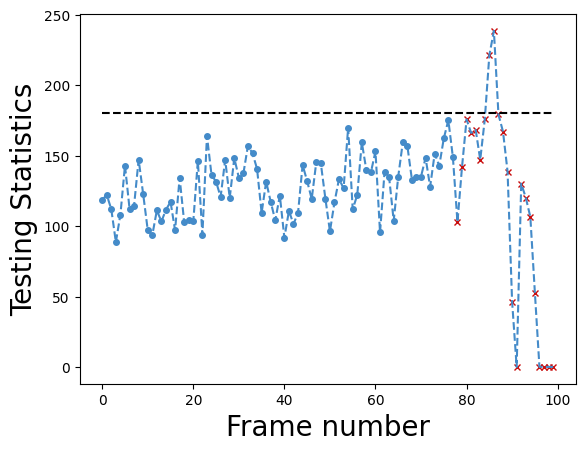}
     \caption{Test statistics of robust PCA.}\label{Fig:robustpcaplate}
   \end{minipage}\hfill
   \begin{minipage}{0.45\textwidth}
     \centering
     \includegraphics[width=.99\linewidth]{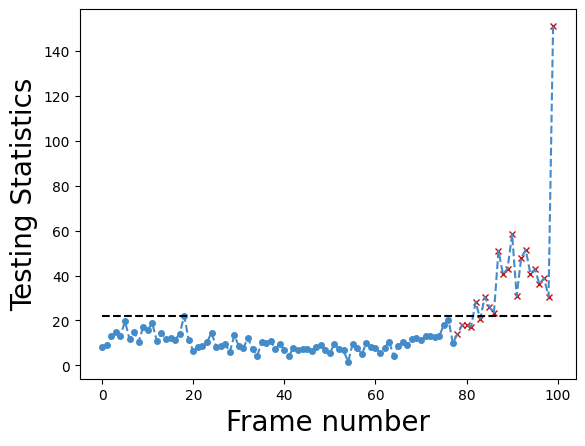}
     \caption{Test statistics of \name.}\label{Fig:robustpersonalizedpcaplate}
   \end{minipage}
\end{figure}

The comparison highlights \name's ability to find surface defects. The results are understandable as \name uses a more refined model to decompose the thermal frames into three parts, thus having more representation power to fit the underlying physics in the manufacturing process. As a result, the recovered sparse components are more representative of the anomalies.

\section{Conclusion} 
In this work, we propose a systematic method \name to separate shared, unique, and noise components from noisy observation matrices. \name is the first algorithm that is provably convergent to the ground truth under identifiability conditions that require the three components to have ``small overlaps''. \name outperforms previous heuristic algorithms by large margins in numerical experiments and finds interesting applications in video segmentation, anomaly detection, and time series imputation.

Our work also opens up several venues for future theoretical exploration in separating shared and unique low-rank features from noisy matrices. For example, a minimax lower bound on the $\mu, \theta$, and $\alpha$ can help fathom the statistical difficulty of such separation. Also, as many existing methods for \inneralg rely on good initialization to excel, designing efficient algorithms for \inneralg that are independent of the initialization is also an interesting topic. On the practical side, methods to integrate \name with other machine learning models, e.g., auto-encoders, to find nonlinear features in data are worth exploring.


\acks{
This research is supported in part by Raed Al Kontar's NSF CAREER Award 2144147 and Salar Fattahi's NSF CAREER Award CCF-2337776, NSF Award DMS-2152776, and ONR Award N00014-22-1-2127,  
}


\newpage

\appendix
\section{Details of subroutine algorithm}

In this section, we will elaborate on two subroutine algorithms in literature to solve the problem \eqref{eqn:subproblemobj}, specifically known as \hmf and \perpca. Among a plethora of existing methodologies aiming to distinguish shared and unique features, these two exhibit an exceptional significance, as they are proved to converge to the optimal resolution of problem \eqref{eqn:subproblemobj} linearly under appropriate initial conditions. We underscore that the usage of \inneralg is not restricted solely to these two methods. In essence, any algorithm with the ability to segregate common and unique components can be effectively employed as \inneralg.

\subsection{Heterogeneous matrix factorization}
\label{sec:hmfintro}
Heterogeneous matrix factorization (\hmf) \citep{hmf} is an algorithm proposed to solve the following problem,
\begin{equation}
\label{eqn:hmfproblemformulation}
\begin{aligned}
&\min_{\matU_g,\{\matU_{(i),l}\}_{i=1,\cdots,N}} \sum_{i=1}^N\tf_i\left(\matU_g,\{\matV_{(i),g},\matU_{(i),l},\matV_{(i),l}\}\right)\\
=\sum_{i=1}^N\frac{1}{2}&\norm{\hmatM_{(i)}-\matU_g\matV_{(i),g}^T-\matU_{(i),l}\matV_{(i),l}^T}_F^2+\frac{\beta}{2}\norm{\matU_g^T\matU_g-\matI}_F^2+\frac{\beta}{2}\norm{\matU_{(i),l}^T\matU_{(i),l}-\matI}_F^2\\
\text{subject to  }&  
\,  \matU_g^T\matU_{(i),l}=\bm{0},\, \forall i . \\
\end{aligned}
\end{equation}
Compared with \eqref{eqn:subproblemobj}, the objective in \eqref{eqn:hmfproblemformulation} contains two additional regularization terms $\frac{\beta}{2}\norm{\matU_g^T\matU_g-\matI}_F^2+\frac{\beta}{2}\norm{\matU_{(i),l}^T\matU_{(i),l}-\matI}_F^2$. The regularization terms enhance the smoothness of the optimization objective thereby facilitating convergence. Despite the regularization terms, any optimal solution to~\eqref{eqn:hmfproblemformulation} is also an optimal solution to~\eqref{eqn:subproblemobj}. We can prove the claim in the following proposition.
\begin{proposition}
Let $\hmatU_g^{\hmf},\{\hmatV^{\hmf}_{(i),g},\hmatU^{\hmf}_{(i),l},\hmatV^{\hmf}_{(i),l}\}$ be \textit{one} set of optimal solutions to~\eqref{eqn:hmfproblemformulation}, then $\hmatU_g^{\hmf},\{\hmatV^{\hmf}_{(i),g},\hmatU^{\hmf}_{(i),l},\hmatV^{\hmf}_{(i),l}\}$ is also a set of optimal solution to~\eqref{eqn:subproblemobj}
\end{proposition}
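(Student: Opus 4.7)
The plan is to show that the two problems share the same optimal value (up to the irrelevant factor $\frac{1}{2}$ in the fitting term) and that the regularization terms in \eqref{eqn:hmfproblemformulation} must vanish at any HMF optimum, which forces the HMF optimum to also be optimal for \eqref{eqn:subproblemobj}. The whole argument rests on the scale/rotation invariance of the product $\matU\matV^T$.

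First, I would note the key invariance: for any invertible $\matR_g\in\mathbb{R}^{r_1\times r_1}$ and $\matR_{(i),l}\in\mathbb{R}^{r_2\times r_2}$, the transformation $(\matU_g,\matV_{(i),g})\mapsto(\matU_g\matR_g,\matV_{(i),g}\matR_g^{-T})$ and $(\matU_{(i),l},\matV_{(i),l})\mapsto(\matU_{(i),l}\matR_{(i),l},\matV_{(i),l}\matR_{(i),l}^{-T})$ preserves both products $\matU_g\matV_{(i),g}^T$ and $\matU_{(i),l}\matV_{(i),l}^T$, and also preserves the constraint $\matU_g^T\matU_{(i),l}=0$, since $(\matU_g\matR_g)^T(\matU_{(i),l}\matR_{(i),l})=\matR_g^T(\matU_g^T\matU_{(i),l})\matR_{(i),l}=0$. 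Hence both the fitting loss and the feasibility of \eqref{eqn:subproblemobj} are unchanged, while we are free to orthonormalize the columns of $\matU_g$ and $\matU_{(i),l}$.

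Next, I would establish the two directions. Take any feasible point $(\matU_g,\{\matV_{(i),g},\matU_{(i),l},\matV_{(i),l}\})$ of \eqref{eqn:subproblemobj} and, assuming the intended column ranks, apply thin QR factorizations $\matU_g=\matQ_g\matR_g$ and $\matU_{(i),l}=\matQ_{(i),l}\matR_{(i),l}$. Substituting $\matR_g,\matR_{(i),l}$ into the corresponding $\matV$'s via the invariance above produces a feasible point of \eqref{eqn:hmfproblemformulation} whose two regularization terms vanish and whose fitting loss is unchanged. Denoting by $v^\star$ the optimal value of \eqref{eqn:subproblemobj}, this shows that the optimal value of \eqref{eqn:hmfproblemformulation} is at most $\tfrac12 v^\star$. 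Conversely, every feasible point of \eqref{eqn:hmfproblemformulation} is feasible for \eqref{eqn:subproblemobj}, and the two extra $\beta$-terms are nonnegative, so the HMF objective at any feasible point lower-bounds $\tfrac12$ times the subproblem objective at that same point, which in turn is at least $\tfrac12 v^\star$. Thus the HMF optimal value equals $\tfrac12 v^\star$.

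Finally, at an HMF optimizer $(\hmatU_g^{\hmf},\{\hmatV^{\hmf}_{(i),g},\hmatU^{\hmf}_{(i),l},\hmatV^{\hmf}_{(i),l}\})$, the equality HMF optimal value $=\tfrac12 v^\star$ combined with the nonnegativity of the two regularizers forces $\norm{\hmatU_g^{\hmf T}\hmatU_g^{\hmf}-\matI}_F=0$ and $\norm{\hmatU_{(i),l}^{\hmf T}\hmatU_{(i),l}^{\hmf}-\matI}_F=0$, i.e., the regularization penalties vanish. Consequently the fitting loss at this point equals $v^\star$, so the HMF optimizer attains the minimum of \eqref{eqn:subproblemobj} while remaining feasible for it, which is exactly the claim. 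The only delicate step is the QR/orthonormalization used to produce the initial upper bound; I expect the main obstacle to be the edge case in which $\matU_g$ or $\matU_{(i),l}$ is not full column rank, which can be handled by passing to a thin SVD and padding the columns with vectors orthogonal to the existing column space, an operation that again preserves the product and the constraint.
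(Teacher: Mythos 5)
Your proof is correct and takes essentially the same route as the paper: both arguments hinge on the QR/orthonormalization invariance that preserves the products $\matU_g\matV_{(i),g}^T$, $\matU_{(i),l}\matV_{(i),l}^T$ and the constraint $\matU_g^T\matU_{(i),l}=0$ while annihilating the $\beta$-regularizers, the only difference being that you package it as an equality of optimal values (HMF value $=\tfrac12 v^\star$, forcing the regularizers to vanish at the optimum) whereas the paper runs two contradiction arguments. The full-column-rank caveat in the QR step that you flag is present (and left implicit) in the paper's own proof as well, so it is not a gap specific to your argument.
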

\begin{proof}
The proof is straightforward. We first claim that $\hmatU_g^{\hmf}{}^T\hmatU_g^{\hmf}=\matI$ and $\hmatU^{\hmf}_{(i),l}{}^T\hmatU^{\hmf}_{(i),l}=\matI$. We prove the claim by contradiction. Suppose otherwise, we can find a QR decomposition of $\hmatU_g^{\hmf}$ and $\hmatU^{\hmf}_{(i),l}$ as $\hmatU_g^{\hmf} = \matQ_g\matR_g$ and $\hmatU_g^{\hmf} = \matQ_{(i),l}\matR_{(i),l}$, where $\matQ_g$ and $\matQ_{(i),l}$'s are orthonormal and $\matR_g$ and $\matR_{(i),l}$'s are upper-triangular. Furthermore, not both $\matR_g$ and $\matR_{(i),l}$ are identity matrices, thus $\norm{\matR_g^T\matR_g-\matI}_F^2+ \norm{\matR_{(i),l}^T\matR_{(i),l}-\matI}_F^2> 0$. Now, we construct a refined set of solutions as,
\begin{equation*}
\begin{aligned}
\hmatU_g^{\hmf, refined} &= \matQ_g\\
\hmatV^{\hmf, refined}_{(i),g} &= \hmatV_{(i),g}^{\hmf} \matR_g^{T}\\
\hmatU^{\hmf, refined}_{(i),l} &= \matQ_{(i),l}\\
\hmatV^{\hmf, refined}_{(i),l} &= \hmatV^{(i),l}_{\hmf} \matR_{(i),l}^{T}.\\
\end{aligned}
\end{equation*}
Then it's easy to verify that
$$
\begin{aligned}
&\sum_{i=1}^N\tf_i\left(\hmatU^{\hmf, refined}_g,\hmatV^{\hmf, refined}_{(i),g},\hmatU^{\hmf, refined}_{(i),l},\hmatV^{\hmf, refined}_{(i),l}\right)\\
&= \sum_{i=1}^N\tf_i\left(\hmatU^{\hmf}_g,\hmatV_{(i),g}^{\hmf},\hmatU^{\hmf}_{(i),l},\hmatV^{\hmf}_{(i),l}\right) - \frac{\beta}{2}\left(\norm{\matR_g^T\matR_g-\matI}_F^2+ \norm{\matR_{(i),l}^T\matR_{(i),l}-\matI}_F^2\right)\\
&< \sum_{i=1}^N\tf_i\left(\hmatU^{\hmf}_g,\hmatV_{(i),g}^{\hmf},\hmatU^{\hmf}_{(i),l},\hmatV^{\hmf}_{(i),l}\right),
\end{aligned}
$$
which contradicts with the global optimality of $\hmatU_g^{\hmf},\{\hmatV^{\hmf}_{(i),g},\hmatU^{\hmf}_{(i),l},\hmatV^{\hmf}_{(i),l}\}$. This proves the claim.

From the orthogonality, we know $f_i\left(\hmatU^{\hmf}_g,\hmatV_{(i),g}^{\hmf},\hmatU^{\hmf}_{(i),l},\hmatV^{\hmf}_{(i),l}\right) = \tf_i\left(\hmatU^{\hmf}_g,\hmatV_{(i),g}^{\hmf},\hmatU^{\hmf}_{(i),l},\hmatV^{\hmf}_{(i),l}\right)$.

Now suppose $\hmatU_g^{\hmf},\{\hmatV^{\hmf}_{(i),g},\hmatU^{\hmf}_{(i),l},\hmatV^{\hmf}_{(i),l}\}$ is not an optimal solution to ~\eqref{eqn:subproblemobj}. Then, we can find a different set of feasible solution $\hmatU_g^{\inneralg},\{\hmatV^{\inneralg}_{(i),g},\hmatU^{\inneralg}_{(i),l},\hmatV^{\inneralg}_{(i),l}\}$ such that
$$
\begin{aligned}
&\sum_{i=1}^N f_i\left(\hmatU^{\inneralg}_g,\hmatV^{\inneralg}_{(i),g},\hmatU^{\inneralg}_{(i),l},\hmatV^{\inneralg}_{(i),l}\right)\\
&< \sum_{i=1}^Nf_i\left(\hmatU^{\hmf}_g,\hmatV_{(i),g}^{\hmf},\hmatU^{\hmf}_{(i),l},\hmatV^{\hmf}_{(i),l}\right) \\
& = \sum_{i=1}^N\tf_i\left(\hmatU^{\hmf}_g,\hmatV_{(i),g}^{\hmf},\hmatU^{\hmf}_{(i),l},\hmatV^{\hmf}_{(i),l}\right).
\end{aligned}
$$
We can similarly define a set of refined solutions 
\begin{equation*}
\begin{aligned}
\hmatU_g^{\inneralg, refined} &= \matQ^{\inneralg}_g\\
\hmatV^{\inneralg, refined}_{(i),g} &= \hmatV_{(i),g}^{\inneralg} \matR_g^{\inneralg}{}^{T}\\
\hmatU^{\inneralg, refined}_{(i),l} &= \matQ^{\inneralg}_{(i),l}\\
\hmatV^{\inneralg, refined}_{(i),l} &= \hmatV^{(i),l}_{\hmf} \matR_{(i),l}^{\inneralg}{}^{T},\\
\end{aligned}
\end{equation*}
where $\matQ^{\inneralg}_g$, $\matR_g^{\inneralg}$, $\matQ^{\inneralg}_{(i),l}$, $\matR_{(i),l}^{\inneralg}$ are QR decompositions that satisfy $\hmatU_g^{\inneralg, refined}=\matQ^{\inneralg}_g\matR_g^{\inneralg}$ and $\hmatU^{\inneralg, refined}_{(i),l} =\matQ^{\inneralg}_{(i),l}\matR_{(i),l}^{\inneralg}$. Based on the refined set of solutions, we can prove that,
$$
\begin{aligned}
&\sum_{i=1}^N \tf_i\left(\hmatU^{\inneralg, refined}_g,\hmatV^{\inneralg, refined}_{(i),g},\hmatU^{\inneralg, refined}_{(i),l},\hmatV^{\inneralg, refined}_{(i),l}\right)\\
&=\sum_{i=1}^N f_i\left(\hmatU^{\inneralg}_g,\hmatV^{\inneralg}_{(i),g},\hmatU^{\inneralg}_{(i),l},\hmatV^{\inneralg}_{(i),l}\right)\\
& < \sum_{i=1}^N\tf_i\left(\hmatU^{\hmf}_g,\hmatV_{(i),g}^{\hmf},\hmatU^{\hmf}_{(i),l},\hmatV^{\hmf}_{(i),l}\right),
\end{aligned}
$$
which contradicts the optimality of $\hmatU_g^{\hmf},\{\hmatV^{\hmf}_{(i),g},\hmatU^{\hmf}_{(i),l},\hmatV^{\hmf}_{(i),l}\}$.

This completes the proof.
\end{proof}

\hmf optimizes the objective by gradient descent. To ensure feasibility, \hmf employs a special correction step to orthogonalize $\matU_g$ and $\matU_{(i),l}$ without changing the objective at every step. The pseudo-code is presented in Algorithm \ref{alg:hmf}.

\begin{algorithm}
\caption{\inneralg by heterogeneous matrix factorization}
\label{alg:hmf}
\begin{algorithmic}[1]
\STATE Input matrices $\{\hmatM_{(i)}\}_{i=1}^N$, stepsize $\eta_\tau$, iteration budget $R$.
\STATE Initialize $\matU_{g,1}, \matV_{(i),g,\fracud},\matU_{(i),l,\fracud},\matV_{(i),l,1}$ to be small random matrices.
\FOR{
Iteration $\tau=1,...,R$}
\FOR{ index $i=1,\cdots,N$}

\STATE Correct $\matU_{(i),l,\tau}=\matU_{(i),l,\tau-\frac{1}{2}}-\matU_{g,\tau}\left(\matU_{g,\tau}^T\matU_{g,\tau}\right)^{-1}\matU_{g,\tau}^T\matU_{(i),l,\tau-\frac{1}{2}}$
\STATE Correct $\matV_{(i),g,\tau}=\matV_{(i),g,\tau-\fracud}+\matV_{(i),l,\tau}\matU_{(i),l,\tau-\fracud}^T\matU_{g,\tau}\left(\matU_{g,\tau}^T\matU_{g,\tau}\right)^{-1}$

\STATE Update $ \matU_{(i),g,\tau+1}=\matU_{g,\tau}-\eta_{\tau}\nabla_{\matU_g}\tf_i$
\STATE Update $ \matV_{(i),g,\tau+\fracud}=\matV_{(i),g,\tau}-\eta_{\tau}\nabla_{\matV_{(i),g}}\tf_i$
\STATE Update $ \matU_{(i),l,\tau+\fracud}=\matU_{(i),l,\tau}-\eta_{\tau}\nabla_{\matU_{(i),l}}\tf_i$
\STATE Update $ \matV_{(i),l,\tau+1}=\matV_{(i),l,\tau}-\eta_{\tau}\nabla_{\matV_{(i),l}}\tf_i$

\ENDFOR
\STATE Calculates $\matU_{g,\tau+1}=\frac{1}{N}\sum_{i=1}^N\matU_{(i),g,\tau+1}$

\ENDFOR
\STATE Return $ \matU_{g,R}, \{\matV_{(i),g,R}\},\{\matU_{(i),l,R}\},\{\matV_{(i),l,R}\}$.
\end{algorithmic}
\end{algorithm}

In Algorithm \ref{alg:hmf}, we use $\tau$ to denote the iteration index, where the half-integer index denotes the update of the variable is half complete: it is updated by gradient descent but is not feasible yet. It is proven that under a group of sufficient conditions, Algorithm~\ref{alg:hmf} converges to the optimal solutions of problem \eqref{eqn:hmfproblemformulation}. The sufficient conditions require the stepsize $\eta_{\tau}$ to be chosen appropriately and the initialization close to the optimal solution \citep{hmf}. 

In practice, Algorithm \ref{alg:hmf} is often efficient and accurate. Therefore, we implement \hmf as the subroutine \inneralg for all of our numerical simulations in Section \ref{sec:numericalexperiment}. To initialize Algorithm \ref{alg:hmf}, we adopt a spectral initialization approach. Specifically, we concatenate all matrices column-wise to form $\matM^{concat}=[\matM_{(1)},\matM_{(2)},\cdots,\matM_{(N)}]$. Subsequently, we perform a Singular Value Decomposition (SVD) on the concatenated matrix $\matM^{concat}$ to extract the top $r_1$ column singular vectors, which serve as the initialization for $\matU_{g,1}$ in Algorithm \ref{alg:hmf}. Utilizing the calculated $\matU_{g,1}$, we deflate $\matM_{(i)}$ by subtracting the projection of $\matM_{(i)}$ onto $\matU_{g,1}$, denoted as $\matM_{(i)}^{deflate} = \matM_{(i)} -\matU_{g,1}\matU_{g,1}^T\matM_{(i)}$. We then conduct another SVD to identify the top $r_2$ singular vectors of $\matM_{(i)}^{deflate}$, which are utilized as the initialization for $\matU_{(i),l,\fracud}$. The initializations for the coefficient matrices are established as $\matV_{(i),g,\fracud} = \matM_{(i)}^T\matU_{g,1}$ and $\matV_{(i),l,1} = \matM_{(i)}^T\matU_{(i),l,\fracud}$.

The stepsize $\eta$ in Algorithm~\ref{alg:hmf} is individually adjusted for each dataset to achieve the fastest convergence. We choose a large total number of iterations $R$ to ensure a small optimization error $\epsilon$. 
Specifically, in the synthetic data, we set the stepsize to $0.005$ and $R=500$. In the video segmentation task, we set the stepsize to $5\times10^{-6}$ and $R=200$. And on the hot rolling data, we set the stepsize to $4\times 10^{-5}$ and $R=500$. In our experiments, we observe that the regularization parameter $\beta$ exerts a negligible influence on the convergence of Algorithm~\ref{alg:hmf}. Consequently, we maintain $\beta$ within the range of $10^{-6}$ to $10^{-5}$ in all our experiments.

\subsection{Personalized PCA}
Personalized PCA \citep{personalizedpca} is another subroutine to solve \eqref{eqn:subproblemobj}. More specifically, personalized PCA seeks to find orthonormal features $\matU_g$ and $\matU_{(i),l}$ to minimize the residual of fitting, as shown in the following objective,
\begin{equation}
\label{eqn:perpcaproblemformulation}
\begin{aligned}
\min_{\matU_g,\{\matU_{(i),l}\}_{i=1,\cdots,N}}& \frac{1}{2}\sum_{i=1}^N\norm{\hmatM_{(i)}-\matU_g\matU_g^T\hmatM_{(i)}-\matU_{(i),l}\matU_{(i),l}^T\hmatM_{(i)}}_F^2\\
\text{subject to  }&  
\matU_g^T\matU_g=\matI,\, \matU_{(i),l}^T\matU_{(i),l}=\matI,\,  \matU_g^T\matU_{(i),l}=\bm{0},\, \forall i. \\
\end{aligned}
\end{equation}
The objective only optimizes the feature matrices $\matU_g$ and $\matU_{(i),l}$, but it's essentially equivalent to problem \eqref{eqn:subproblemobj}. The formal statement is presented in the following proposition.

\begin{proposition}
Let $\hmatU_g^{\perpca},\{\hmatU^{\perpca}_{(i),l}\}$ be \textit{one} set of optimal solutions to~\eqref{eqn:perpcaproblemformulation}, then $\hmatU_g^{\perpca},\{\hmatM_{(i)}^T\hmatU_g^{\perpca},\hmatU^{\perpca}_{(i),l}, \hmatM_{(i)}^T\hmatU_{(i),l}^{\perpca}\}$ is also a set of optimal solution to~\eqref{eqn:subproblemobj}
\end{proposition}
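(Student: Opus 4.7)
The plan is to profile out the coefficient matrices from \eqref{eqn:subproblemobj} and re-gauge the feature matrices to orthonormal form, thereby reducing \eqref{eqn:subproblemobj} to \eqref{eqn:perpcaproblemformulation} up to an irrelevant factor of two. The argument has two halves: first, show that the proposed candidate is feasible for \eqref{eqn:subproblemobj} and achieves a specific value; second, show that no feasible point of \eqref{eqn:subproblemobj} can do strictly better.

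For the first half, the candidate inherits the orthogonality $(\hmatU_g^{\perpca})^T\hmatU_{(i),l}^{\perpca}=0$ from feasibility for \eqref{eqn:perpcaproblemformulation}, so it is feasible for \eqref{eqn:subproblemobj}. Direct substitution of $\matV_{(i),g}=\hmatM_{(i)}^T\hmatU_g^{\perpca}$ and $\matV_{(i),l}=\hmatM_{(i)}^T\hmatU_{(i),l}^{\perpca}$ into the residual of \eqref{eqn:subproblemobj} gives
$$\matU_g\matV_{(i),g}^T+\matU_{(i),l}\matV_{(i),l}^T=\big(\hmatU_g^{\perpca}(\hmatU_g^{\perpca})^T+\hmatU_{(i),l}^{\perpca}(\hmatU_{(i),l}^{\perpca})^T\big)\hmatM_{(i)},$$
so the \eqref{eqn:subproblemobj} objective evaluated at the candidate equals exactly twice the \perpca objective evaluated at the \perpca optimum.

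For the second half, pick any feasible $(\matU_g,\{\matV_{(i),g},\matU_{(i),l},\matV_{(i),l}\})$ for \eqref{eqn:subproblemobj}. I would apply thin QR factorizations $\matU_g=\matQ_g\matR_g$ and $\matU_{(i),l}=\matQ_{(i),l}\matR_{(i),l}$ with orthonormal $\matQ$-factors, and absorb the triangular factors by setting $\tmatV_{(i),g}=\matV_{(i),g}\matR_g^T$ and $\tmatV_{(i),l}=\matV_{(i),l}\matR_{(i),l}^T$. The low-rank product and hence the objective are unchanged, and the constraint $\matQ_g^T\matQ_{(i),l}=0$ is inherited from $\matU_g^T\matU_{(i),l}=0$ via invertibility of the $\matR$-factors. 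Concatenating $[\matQ_g,\matQ_{(i),l}]$ then gives an orthonormal block, so the least-squares minimum over $\tmatV_{(i),g},\tmatV_{(i),l}$ is attained at $\tmatV_{(i),g}=\hmatM_{(i)}^T\matQ_g$ and $\tmatV_{(i),l}=\hmatM_{(i)}^T\matQ_{(i),l}$, and the residual collapses to $\norm{(\matI-\matQ_g\matQ_g^T-\matQ_{(i),l}\matQ_{(i),l}^T)\hmatM_{(i)}}_F^2$. This is exactly twice the \perpca objective at $(\matQ_g,\{\matQ_{(i),l}\})$, which is feasible for \eqref{eqn:perpcaproblemformulation} and hence no smaller than twice the \perpca optimum. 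Combining the two halves pins the candidate as a minimizer of \eqref{eqn:subproblemobj}.

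The main obstacle is the rank-deficient edge case in which $\matU_g$ or some $\matU_{(i),l}$ fails to have full column rank, invalidating the ``thin QR with invertible $\matR$'' step. The remedy is to work with orthonormal bases of $\mathrm{range}(\matU_g)$ and $\mathrm{range}(\matU_{(i),l})$ (the product $\matU\matV^T$ depends only on these column spaces) and, if necessary, pad those bases up to widths $r_1$ and $r_2$ using vectors chosen from the orthogonal complements so that the padded $\matQ_g,\matQ_{(i),l}$ are still mutually orthogonal; such padding preserves feasibility whenever $\hmatM_{(i)}$ has rank at least $r_1+r_2$, which is the same mild rank condition already invoked in Lemma~\ref{lm:kktcondition}. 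A minor bookkeeping point is that the decoupling of the least-squares step across the global and local coefficient blocks is automatic once $\matQ_g^T\matQ_{(i),l}=0$, since this renders the Gram matrix block-diagonal.
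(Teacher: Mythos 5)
Your main argument is correct and is essentially the paper's own route in a different packaging: the paper argues by contradiction, but its key steps are exactly yours --- for fixed feature matrices it profiles out the coefficients by least squares, so the objective of \eqref{eqn:subproblemobj} collapses to $\sum_i\norm{\left(\matI-\Pj{\matU_g}-\Pj{\matU_{(i),l}}\right)\hmatM_{(i)}}_F^2$, and it then re-gauges the features to orthonormal form via $\matU(\matU^T\matU)^{-1/2}$ (your QR step) to obtain a feasible point of \eqref{eqn:perpcaproblemformulation}, which is precisely your two-sided comparison. Note that the paper's proof also implicitly assumes the feasible $\matU_g$, $\matU_{(i),l}$ have full column rank (it writes $(\matU^T\matU)^{-1}$ and $(\matU^T\matU)^{-1/2}$ without comment), so on the core argument you and the paper are on equal footing, and your direct lower-bound formulation is if anything cleaner.

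The one genuine flaw is your remedy for the rank-deficient edge case. Padding the deficient global basis ``from the orthogonal complement so that the padded $\matQ_g,\matQ_{(i),l}$ are still mutually orthogonal'' is not always possible: the padded global columns must be orthogonal to \emph{all} $N$ local subspaces simultaneously, and the orthogonal complement of $\mathrm{span}(\matQ_g)\cup\bigcup_i\mathrm{span}(\matQ_{(i),l})$ can be too small. For instance, with $n_1=3$, $r_1=2$, $r_2=1$, $N=2$, a rank-one global span $\mathrm{span}(\vece_1)$ and locals $\mathrm{span}(\vece_2)$, $\mathrm{span}(\vece_3)$, no second global direction orthogonal to both locals exists. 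Moreover, the condition you invoke (rank of $\hmatM_{(i)}$ at least $r_1+r_2$) has no bearing on this obstruction, which only involves $n_1$, $r_1$, $r_2$, $N$. A correct fix drops the insistence that the enlarged subspaces be paddings of the originals: pick any $r_1$-dimensional $T_g\supseteq\mathrm{range}(\matU_g)$, and for each $i$ let $T_{(i),l}\subseteq T_g^{\perp}$ be any $r_2$-dimensional subspace containing $\left(\matI-\Pj{T_g}\right)\mathrm{range}(\matU_{(i),l})$, which exists because $n_1-r_1\ge r_2$. Then $T_g\oplus T_{(i),l}\supseteq \mathrm{range}(\matU_g)\oplus\mathrm{range}(\matU_{(i),l})$ for every $i$, each residual term can only decrease, and the comparison with the \perpca optimum goes through.
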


\begin{proof}
We will also prove the proposition by contradiction. If $\hmatU_g^{\perpca},\{\hmatM_{(i)}^T\hmatU_g^{\perpca},\hmatU^{\perpca}_{(i),l}, \hmatM_{(i)}^T\hmatU_{(i),l}^{\perpca}\}$ is not a set of optimal solution to~\eqref{eqn:subproblemobj}, we can find a different set of feasible solutions $\hmatU_g^{\inneralg},\{\hmatV^{\inneralg}_{(i),g},\hmatU^{\inneralg}_{(i),l},\hmatV^{\inneralg}_{(i),l}\}$ such that
$$
\begin{aligned}
&\sum_{i=1}^N f_i\left(\hmatU^{\inneralg}_g,\hmatV^{\inneralg}_{(i),g},\hmatU^{\inneralg}_{(i),l},\hmatV^{\inneralg}_{(i),l}\right)\\
&< \sum_{i=1}^Nf_i\left(\hmatU_g^{\perpca},\hmatM_{(i)}^T\hmatU_g^{\perpca},\hmatU^{\perpca}_{(i),l}, \hmatM_{(i)}^T\hmatU_{(i),l}^{\perpca}\right) \\
& = \sum_{i=1}^N\norm{\hmatM_{(i)}-\matU^{\perpca}_g\matU^{\perpca}_g{}^T\hmatM_{(i)}-\matU^{\perpca}_{(i),l}\matU^{\perpca}_{(i),l}{}^T\hmatM_{(i)}}_F^2.
\end{aligned}
$$

If we fix $\matU_g$ and $\matU_{(i),l}$ to be $\hmatU^{\inneralg}_g$ and $\hmatU^{\inneralg}_{(i),l}$ in problem~\eqref{eqn:subproblemobj}, then the optimal solution of $\matV_{(i),g}$ and $\matV_{(i),l}$ is $\matV^{\inneralg,opt}_{(i),g} = \hmatM_{(i)}^T\hmatU^{\inneralg}_g\left(\hmatU^{\inneralg}_g{}^T\hmatU^{\inneralg}_g\right)^{-1}$ and $\matV^{\inneralg,opt}_{(i),l} = \hmatM_{(i)}^T\hmatU^{\inneralg}_{(i),l}\left(\hmatU^{\inneralg}_{(i),l}{}^T\hmatU^{\inneralg}_{(i),l}\right)^{-1}$. As a result,
$$
\begin{aligned}
&\sum_{i=1}^N\norm{\hmatM_{(i)}-\hmatU^{\inneralg}_{(i),l}\left(\hmatU^{\inneralg}_{(i),l}{}^T\hmatU^{\inneralg}_{(i),l}\right)^{-1}\hmatU^{\inneralg}_{(i),l}{}^T\hmatM_{(i)}-\hmatU^{\inneralg}_{(i),g}\left(\hmatU^{\inneralg}_{(i),g}{}^T\hmatU^{\inneralg}_{(i),g}\right)^{-1}\hmatU^{\inneralg}_{(i),g}{}^T\hmatM_{(i)}}_F^2\\
&=\sum_{i=1}^N f_i\left(\hmatU^{\inneralg}_g,\hmatV^{\inneralg,opt}_{(i),g},\hmatU^{\inneralg}_{(i),l},\hmatV^{\inneralg,opt}_{(i),l}\right)\\
&\le \sum_{i=1}^N f_i\left(\hmatU^{\inneralg}_g,\hmatV^{\inneralg}_{(i),g},\hmatU^{\inneralg}_{(i),l},\hmatV^{\inneralg}_{(i),l}\right)\\
&< \sum_{i=1}^N\norm{\hmatM_{(i)}-\matU^{\perpca}_g\matU^{\perpca}_g{}^T\hmatM_{(i)}-\matU^{\perpca}_{(i),l}\matU^{\perpca}_{(i),l}{}^T\hmatM_{(i)}}_F^2.
\end{aligned}
$$
If we define $\hmatU^{\inneralg,refine}_{(i),l}=\hmatU^{\inneralg}_{(i),l}\left(\hmatU^{\inneralg}_{(i),l}{}^T\hmatU^{\inneralg}_{(i),l}\right)^{-1/2}$ and $\hmatU^{\inneralg,refine}_{(i),g}=\hmatU^{\inneralg}_{(i),g}\left(\hmatU^{\inneralg}_{(i),g}{}^T\hmatU^{\inneralg}_{(i),g}\right)^{-1/2}$, then $\hmatU^{\inneralg,refine}_{(i),l}$ and $\hmatU^{\inneralg,refine}_{(i),g}$ are also feasible for~\eqref{eqn:perpcaproblemformulation} and achieve lower objective. This contradicts the optimality of $\matU_g^{\perpca}$ and $\matU_{(i),l}^{\perpca}$.

\end{proof}

To solve the constrained optimization problem~\eqref{eqn:perpcaproblemformulation}, personalized PCA adopts a distributed version of Stiefel gradient descent. The pseudo-code is presented in Algorithm \ref{alg:perpca}.

\begin{algorithm}
\caption{\inneralg by personalized PCA}
\label{alg:perpca}
\begin{algorithmic}
\STATE Input observation matrices $\{\hmatM_{(i)}\}_{i=1}^N$, stepsize $\eta_\tau$, iteration budget $R$.
\STATE Initialize $\matU_{g,1}$, and $\matU_{(1),l,\frac{1}{2}},\cdots,\matU_{(N),l,\frac{1}{2}}$.
\STATE Calculate $\matS_{(i)}=\hmatM_{(i)}\hmatM_{(i)}^T$ for each $i$.
\FOR{iteration $\tau=1,...,R$}
\FOR{ index $i=1,\cdots,N$}
\STATE Correct $\matU_{(i),l,\tau}=\grof{\matU_{(i),l,\tau-\fracud}}{-\matU_{g,\tau}\matU_{g,\tau}\matU_{(i),l,\tau-\fracud} }$ 

\STATE Calculate $\matG_{(i),\tau}=\left(\matI-\matU_{g,\tau}\matU_{g,\tau}^T-\matU_{(i),l,\tau}\matU_{(i),l,\tau}^T\right)\left(\bm{S}_{(i)}\left[\matU_{g,\tau},\matU_{(i),l,\tau}\right]\right)$ 
\STATE Update $ \matU_{(i),g,\tau+1}=\matU_{g,\tau}+\eta_{\tau}(\matG_{(i),\tau})_{1:d,1:r_1}$ 
\STATE Update $ \matU_{(i),l,\tau+\frac{1}{2}}=\grof{\matU_{(i),l,\tau}}{ \eta_{\tau}(\matG_{(i),\tau})_{1:d,(r_1+1):(r_1+r_{2,(i)})}}$

\ENDFOR
\STATE Update $\matU_{g,\tau+1}=\grof{\matU_{g,\tau}}{\frac{1}{N}\sum_{i=1}^N\matU_{(i),g,\tau+1}-\matU_{g,\tau}}$ 
\ENDFOR
\STATE Calculate $\matV_{(i),g,R}=\hmatM_{(i)}^T\matU_{g,R}$ and $\matV_{(i),l,R}=\hmatM_{(i)}^T\matU_{(i),l,R}$.
\STATE Return $ \matU_{g,R}, \{\matV_{(i),g,R}\},\{\matU_{(i),l,R}\},\{\matV_{(i),l,R}\}$.
\end{algorithmic}
\end{algorithm}
In Algorithm \ref{alg:perpca}, $\mathcal{GR}$ denotes generalized retraction. In practice, it can be implemented via polar projection $\grof{\matU}{\matV}=\left(\matU+\matV\right)\left(\matU^T\matU+\matV^T\matU+\matU^T\matV+\matV^T\matV\right)^{-\fracud}$. Algorithm \ref{alg:perpca} can also be proved to converge to the optimal solutions with suitable choices of stepsize and initialization~\citep{personalizedpca}.
\revise{\section{Additional running time comparisons}
\label{ap:runtimecomparison}
In this section, we include the additional running time comparison between \name and Robust PCA. We use a set of synthetic datasets with varying numbers of sources $N$, then compare the per-iteration running time of the two algorithms. More specifically, we follow the setting in Section~\ref{sec:syntheticdata} where $n_1=15$ and $n_2=1000$, and generate synthetic datasets where the number of sources $N$ changes from $100$ to $10000$. Then, we apply \name and Robust PCA on the same dataset. We do not parallelize computations for either algorithm for fair comparison. The per-iteration running time of the two algorithms is collected and plotted in Figure~\ref{fig:runtime_comparison}.} 

\begin{figure}[H]
    \centering
    \includegraphics[width=0.4\linewidth]{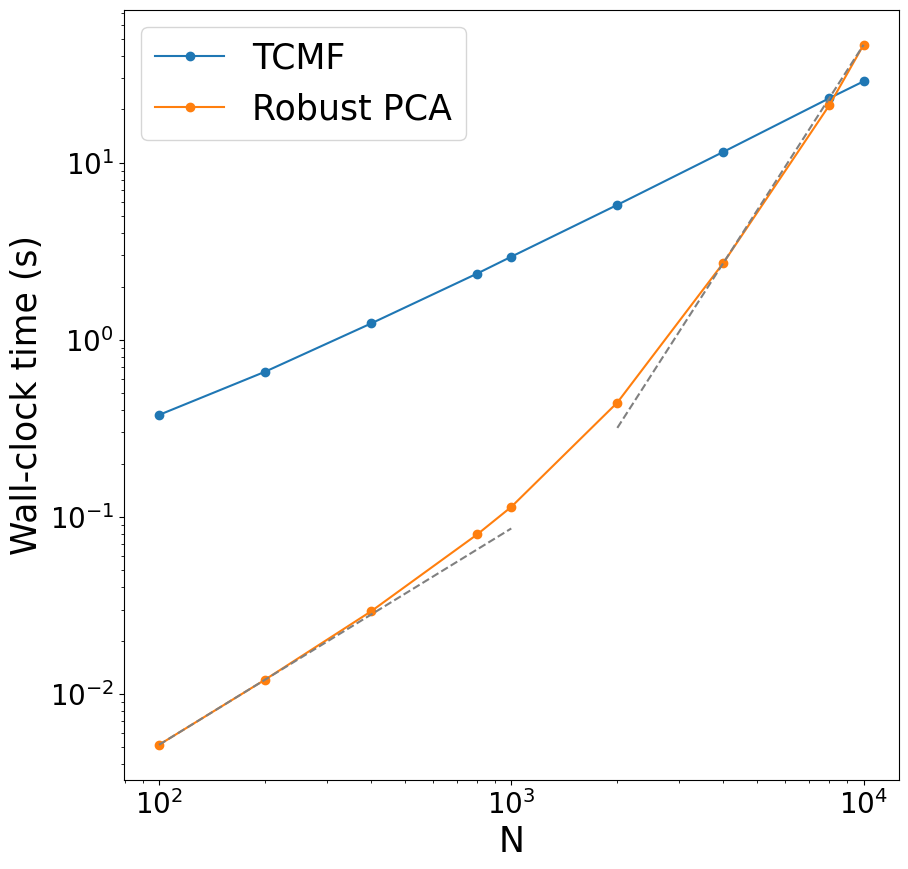}
    \caption{\revise{Running time comparison between runtime of \name and \robustpca.}}
    \label{fig:runtime_comparison}
\end{figure}

\revise{From Figure~\ref{fig:runtime_comparison}, it is clear that the running time of \name scales linearly with the number of sources $N$, which is consistent with the complexity analysis.}

\revise{In contrast, though \robustpca has a smaller per-iteration runtime when $N$ is small, as $N$ becomes larger, the runtime increases faster than \name. This is because \robustpca vectorizes the observation matrices from each source. The resulting vector from each source has dimension $n_1n_2$. \robustpca then concatenates these vectors into a $n_1n_2\times N$ matrix and alternatively performs Singular Value Decomposition and hard-thresholding. For each application of SVD, the computational complexity is $\mathcal{O}(n_1n_2(N^2+N) + N^3 )$ when $N\le n_1n_2$~\citep{svdcomplexity}. Indeed, in Figure~\ref{fig:runtime_comparison}, the slope of the initial part of the Robust PCA curve is around $1.2$, and the slope of the final part is around $3.0$, suggesting that the running time scales cubically in the large $N$ regime.}

\revise{Such comparison highlights \name's computational advantage when $N$ is large. }

\section{Proof of Theorem~\ref{thm:recovery}}
\label{sec:proofofrecovery}
 In this section, we will introduce the details of the proof of Theorem~\ref{thm:recovery}. We will firstly introduce a few basic lemmas, then prove the KKT conditions in Lemma \ref{lm:kktcondition}. Based on the KKT conditions, we introduce an infinite series to represent the solutions to \eqref{eqn:subproblemobj}. Next, we will prove Lemma \ref{lm:ldiffinfnorm}, which is a formal version of Lemma \ref{lm:informalldiffinfnorm}. Finally, we will use induction to prove Theorem~\ref{thm:formalrecovery}, which is the formal version of Theorem~\ref{thm:recovery}.
 
 Remember that we use $\matLst_{(i)}$ to denote $\matLst_{(i)}=\matLst_{(i),g}+\matLst_{(i),l}$, where $\matLst_{(i),g}$ and $\matLst_{(i),l}$ are the global and local components for source $i$ defined as
$\matLst_{(i),g}=\matUst_{g}\matVst_{(i),g}^T$ and $\matLst_{(i),l}=\matUst_{(i),l}\matVst_{(i),l}^T$. We assume all nonzero singular values of $\matLst_{(i)}$ are lower bounded by $\sm>0$ and upper bounded by $\gop>0$. As introduced in the proof sketch, we use $\matE_{(i),t}=\matSst_{(i)}-\hmatS_{(i),t}$ to denote the difference between our estimate of the sparse noise at epoch $t$ and the ground truth. The following notations will be used throughout our proof:
\begin{align}
&\matF_{(i)}=\matE_{(i),t}\matLst_{(i)}^T+\matLst_{(i)}\matE_{(i),t}^T+\matE_{(i),t}\matE_{(i),t}^T, \ \ i\in[N],\quad \text{and}\quad  \matF_{(0)}=\frac{1}{N}\sum_{i=1}^N\matF_{(i)}\label{eq_F}\\
&    \matT_{(i)}=\matLst_{(i)}\matLst_{(i)}^T, \ \ i\in[N],\quad \text{and}\quad  \matT_{(0)}=\frac{1}{N}\sum_{i=1}^N\matT_{(i)}.
\end{align}
Since in the ground truth model, the SVD of $\matLst_{(i)}$ can be written as $\matLst_{(i)}=\left[\matHst_g,\matHst_{(i),l}\right]\text{diag}(\matSigma_{(i),g},\matSigma_{(i),l})\left[\matWst_{(i),g},\matWst_{(i),l} \right]^T$, one can immediately see that $\matT_{(i)}$'s nonzero eigenvalues are upper bounded by $\gop^2$ and lower bounded by $\sms$. Finally, recall that we use $\hmatU_{g}$, $\hmatU_{(i),l}$, $\hmatV_{(i),g}$, and $\hmatV_{(i),l}$ to denote the optimal solutions to \eqref{eqn:subproblemobj} (we omit the subscript $t$ here for brevity.) For a series of square matrices of the same shape $\matA_1,\cdots, \matA_k\in \mathbb{R}^{r\times r}$, we use $\prod_{m=1}^k\matA_m$ to denote the product of these matrices in the ascending order of indices, and $\prod_{m=k}^1\matA_m$ to denote the product of these matrices in the descending order of indices,
$$
\begin{aligned}
&\prod_{m=1}^k\matA_m=\matA_1\matA_{2}\cdots \matA_{k-1}\matA_k\\
&\prod_{m=k}^1\matA_m=\matA_k\matA_{k-1}\cdots \matA_2\matA_1.
\end{aligned}
$$

Our next two lemmas provide upper bound on the maximum row-norm of the errors with respect to the $\ell_\infty$-norms of $\matE_{(i)}$. By building upon these two lemmas, we provide a key result in Lemma~\ref{lm:efuinfnorm} connecting $\{\matF_{(i)}\}$ and the error matrices $\{\matE_{(i)}\}$.

\begin{lemma}
\label{lm:eetpincoherence}
Suppose that $\matE_{(1)},\cdots,\matE_{(N)},\in \mathbb{R}^{n_1\times n_2}$ are $\alpha$-sparse and $\matU\in \mathbb{R}^{n_1\times r}$ is $\mu$-incoherent and $\norm{U}\le 1$. For any integers $p_1,p_2,\cdots,p_k\ge 0$, and $i_1,i_2,\cdots,i_k\in\{0,1,\cdots,N\}$, we have
\begin{equation}
\label{eqn:eetpincoherence}
\max_j\norm{\vece_j^T\left(\prod_{\ell=1}^k(\matE_{(i_\ell)}\matE_{(i_\ell)}^T)^{p_\ell}\right)\matU}_2\le \sqrt{\murnone}\left(\abnb\right)^{2(p_1+p_2+\cdots+p_k)}.
\end{equation}
\end{lemma}
With a slight abuse of notation, in Lemma~\ref{lm:eetpincoherence} and the rest of the paper, we define $\matE_{(0)}\matE_{(0)}^T$ to be,
\begin{equation}
\label{eqn:e0e0tdef}
\matE_{(0)}\matE_{(0)}^T = \frac{1}{N}\sum_{i=0}^N\matE_{(i)}\matE_{(i)}^T.
\end{equation}
\begin{proof}
We will prove it by induction on the exponent.
From the definition of incoherence, we know that when $p_1+\cdots+p_k=0$, the inequality \eqref{eqn:eetpincoherence} holds.
Now suppose that the inequality \eqref{eqn:eetpincoherence} holds for all $p_1,p_2,\cdots,p_k\ge 0$ such that $p_1+\cdots+p_k\le s-1$ and $i_1,i_2,\cdots,i_k\in\{0,1,\cdots,N\}$. We will prove the statement for $p_1+\cdots+p_k= s$. Without loss of generality, we assume $p_1\ge 1$. One can write

\begin{equation}
\begin{aligned}
& \norm{\vece_j^T\left(\prod_{\ell=1}^k(\matE_{(i_\ell)}\matE_{(i_\ell)}^T)^{p_\ell}\right)\matU}_2^2= \sum_l\left(\vece_j^T\left(\prod_{\ell=1}^k(\matE_{(i_\ell)}\matE_{(i_\ell)}^T)^{p_\ell}\right)\matU\vece_l\right)^2\\    &=\sum_l\left(\vece_j^T\matE_{(i_1)}\matE_{(i_1)}^T(\matE_{(i_1)}\matE_{(i_1)}^T)^{p_1-1}\left(\prod_{\ell=2}^k(\matE_{(i_\ell)}\matE_{(i_\ell)}^T)^{p_\ell}\right)\matU\vece_l\right)^2\\
&=\sum_l\left(\sum_h\left[\matE_{(i_1)}\matE_{(i_1)}^T\right]_{j,h}\vece_h^T(\matE_{(i_1)}\matE_{(i_1)}^T)^{p_1-1}\left(\prod_{\ell=2}^k(\matE_{(i_\ell)}\matE_{(i_\ell)}^T)^{p_\ell}\right)\matU\vece_l\right)^2\\
&=\sum_l\sum_{h_1,h_2}\left[\matE_{(i_1)}\matE_{(i_1)}^T\right]_{j,h_1}\left[\matE_{(i_1)}\matE_{(i_1)}^T\right]_{j,h_2}\\
&\times\vece_{h_1}^T(\matE_{(i_1)}\matE_{(i_1)}^T)^{p_1-1}\left(\prod_{\ell=2}^k(\matE_{(i_\ell)}\matE_{(i_\ell)}^T)^{p_\ell}\right)\matU\vece_l\vece_l^T\matU^T\left(\prod_{\ell=k}^2(\matE_{(i_\ell)}\matE_{(i_\ell)}^T)^{p_\ell}\right)(\matE_{(i_1)}\matE_{(i_1)}^T)^{p_1-1}\vece_{h_2}\\
\end{aligned}
\end{equation}.

Since $\sum_l\vece_l\vece_l^T=\matI$, we can simplify the summation as,

\begin{align*}
&\sum_{h_1,h_2}\left(\matE_{(i_1)}\matE_{(i_1)}^T\right)_{j,h_1}\left(\matE_{(i_1)}\matE_{(i_1)}^T\right)_{j,h_2}\\
&\times\vece_{h_1}^T(\matE_{(i_1)}\matE_{(i_1)}^T)^{p_1-1}\left(\prod_{\ell=2}^k(\matE_{(i_\ell)}\matE_{(i_\ell)}^T)^{p_\ell}\right)\matU\matU^T\left(\prod_{\ell=k}^2(\matE_{(i_\ell)}\matE_{(i_\ell)}^T)^{p_\ell}\right)(\matE_{(i_1)}\matE_{(i_1)}^T)^{p_1-1}\vece_{h_2}\\
&\le\left(\sum_{h_1,h_2}\left(\matE_{(i_1)}\matE_{(i_1)}^T\right)_{j,h_1}\left(\matE_{(i_1)}\matE_{(i_1)}^T\right)_{j,h_2}\right)\\
&\times\max_{m}\norm{\vece_{m}^T(\matE_{(i_1)}\matE_{(i_1)}^T)^{p_1-1}\left(\prod_{\ell=2}^k(\matE_{(i_\ell)}\matE_{(i_\ell)}^T)^{p_\ell}\right)\matU}_2^2\\
&\le \sum_{h_1,h_2}\left(\matE_{(i_1)}\matE_{(i_1)}^T\right)_{j,h_1}\left(\matE_{(i_1)}\matE_{(i_1)}^T\right)_{j,h_2} \murnone \left(\abnb\right)^{4s-4},
\end{align*}
where in the last step, we used the induction hypothesis. Now, to complete the proof, we consider two cases. 

If $i_1>0$, we have:
\begin{equation*}
\begin{aligned}
&\sum_{h_1,h_2}\left(\matE_{(i_1)}\matE_{(i_1)}^T\right)_{j,h_1}\left(\matE_{(i_1)}\matE_{(i_1)}^T\right)_{j,h_2}=\sum_{h_1,h_2,g_1,g_1}(\matE_{(i_1)})_{j,g_1}(\matE_{(i_1)})_{h_1,g_1}(\matE_{(i_1)})_{j,g_2}(\matE_{(i_1)})_{h_2,g_2}\\
&\le \alpha n_1\norm{\matE_{(i_1)}}_{\infty}\alpha n_2\norm{\matE_{(i_1)}}_{\infty}\alpha n_1\norm{\matE_{(i_1)}}_{\infty}\alpha n_2\norm{\matE_{(i_1)}}_{\infty}=\left(\abnb\right)^4,
\end{aligned}
\end{equation*}
where the last inequality holds because at most $\alpha n_1$ entries in each column of $\matE_{(i_1)}$ are nonzero and at most $\alpha n_2$ entries in each row of $\matE_{(i_1)}$ are nonzero.
On the other hand, if $i_1=0$, we have:
\begin{equation*}
\begin{aligned}
&\sum_{h_1,h_2}\left(\matE_{(0)}\matE_{(0)}^T\right)_{j,h_1}\left(\matE_{(0)}\matE_{(0)}^T\right)_{j,h_2}\le\frac{1}{N^2}\sum_{h_1,h_2}\left(\sum_{f_1>0}\matE_{(f_1)}\matE_{(f_1)}^T\right)_{jk_1}\left(\sum_{f_2>0}\matE_{(f_2)}\matE_{(f_2)}^T\right)_{jk_2}\\
&=\frac{1}{N^2}N^2 \left(\abnb\right)^4.
\end{aligned}
\end{equation*}
Therefore, in both cases, we have,
$$
\begin{aligned}
&\norm{\vece_j^T\prod_{\ell=1}^k(\matE_{(i_{\ell})}\matE_{(i_{\ell})}^T)^{p_{\ell}}\matU}_2^2\le \murnone \left(\abnb\right)^{4s},\\
\end{aligned}
$$
for every possible $i_1$ and every $j$. This concludes our proof. 
\end{proof}
Next, we present a slightly different lemma.
\begin{lemma}
\label{lm:eetpeytincoherence}
Suppose that $\matE_{(1)},\cdots,\matE_{(N)}\in \mathbb{R}^{n_1\times n_2}$ are $\alpha$-sparse and  $\matV\in \mathbb{R}^{n_2\times r}$ is $\mu$-incoherent. For any integers $p_1,p_2,\cdots,p_k\ge 0$, and $i_1,i_2,\cdots,i_k\in\{0,1,\cdots,N\}$, we have,
\begin{equation}
\label{eqn:eetpeytincoherence}
\begin{aligned}
&\max_j\norm{\vece_j^T\left(\prod_{\ell=1}^k(\matE_{(i_{\ell})}\matE_{(i_{\ell})}^T)^{p_{\ell}}\right)\matE_{(i_{k+1})}\matV}_2\\
&\le \sqrt{\murnone}\left(\abnb\right)^{2(p_1+p_2+\cdots+p_k)+1}.
\end{aligned}
\end{equation}
\end{lemma}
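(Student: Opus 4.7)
The plan is to prove Lemma \ref{lm:eetpeytincoherence} by induction on $s:=p_1+p_2+\cdots+p_k$, mirroring the peeling argument already used for Lemma \ref{lm:eetpincoherence}. The intuition is that the trailing factor $\matE_{(i_{k+1})}\matV$ plays the role that $\matU$ plays in Lemma \ref{lm:eetpincoherence}, contributing exactly one extra power of $\alpha\bn\|\matE\|_{\infty}$ (hence the ``$+1$'' in the exponent).

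For the base case $s=0$, I would show directly that $\max_j\|\vece_j^T\matE_{(i_{k+1})}\matV\|_2\le \sqrt{\mu^2 r/n_1}\cdot\alpha\bn\|\matE\|_\infty$. Writing $\vece_j^T\matE_{(i_{k+1})}\matV=\sum_{h\in\mathcal{S}_j}(\matE_{(i_{k+1})})_{jh}\,\vece_h^T\matV$, where $\mathcal{S}_j$ is the support of row $j$ of $\matE_{(i_{k+1})}$ and satisfies $|\mathcal{S}_j|\le \alpha n_2$, I would square, apply Cauchy--Schwarz across $\mathcal{S}_j$, and bound the $\matV$-rows using $\mu$-incoherence ($\|\vece_h^T\matV\|_2\le \mu\sqrt{r}/\sqrt{n_2}$). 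The resulting estimate is $\|\vece_j^T\matE_{(i_{k+1})}\matV\|_2^2\le \alpha^2 n_2\mu^2 r\|\matE\|_\infty^2$, which, using $\bn\ge\sqrt{n_1n_2}$, gives the desired bound.

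For the inductive step, assume the claim for all exponent sums up to $s-1$, and without loss of generality take $p_1\ge 1$. Peel off one copy of $\matE_{(i_1)}\matE_{(i_1)}^T$ from the left exactly as in the proof of Lemma \ref{lm:eetpincoherence}: expand $\|\cdot\|_2^2$ as a sum over column index $l$, use $\sum_l \vece_l\vece_l^T=\matI$ to collapse the middle, and apply Cauchy--Schwarz to obtain
\begin{equation*}
\|\vece_j^T\prod_{\ell=1}^k(\matE_{(i_\ell)}\matE_{(i_\ell)}^T)^{p_\ell}\matE_{(i_{k+1})}\matV\|_2^2 \le \Bigl(\sum_{h_1,h_2}(\matE_{(i_1)}\matE_{(i_1)}^T)_{jh_1}(\matE_{(i_1)}\matE_{(i_1)}^T)_{jh_2}\Bigr)\cdot \max_m\|\vece_m^T\matA\matE_{(i_{k+1})}\matV\|_2^2,
\end{equation*}
where $\matA=(\matE_{(i_1)}\matE_{(i_1)}^T)^{p_1-1}\prod_{\ell\ge 2}(\matE_{(i_\ell)}\matE_{(i_\ell)}^T)^{p_\ell}$ has total exponent $s-1$. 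The first factor is bounded by $(\alpha\bn\|\matE\|_\infty)^4$ via the $\alpha$-sparsity argument of Lemma \ref{lm:eetpincoherence} (including the $i_1=0$ averaging trick), and the second factor is controlled by the induction hypothesis as $\frac{\mu^2 r}{n_1}(\alpha\bn\|\matE\|_\infty)^{4(s-1)+2}$. Multiplying and taking square roots yields $\sqrt{\mu^2r/n_1}(\alpha\bn\|\matE\|_\infty)^{2s+1}$, as required.

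The main obstacle is essentially bookkeeping rather than a new idea: the sparsity estimate and the peel-off identity are already established in Lemma \ref{lm:eetpincoherence}, and the only genuinely new step is the base-case Cauchy--Schwarz, which is a one-line computation. As an alternative route, one could try to reduce Lemma \ref{lm:eetpeytincoherence} to Lemma \ref{lm:eetpincoherence} directly by viewing $\tilde\matU:=\matE_{(i_{k+1})}\matV$ as a pseudo-incoherent matrix with effective parameter $\tilde\mu=\alpha\mu\sqrt{n_1n_2}\|\matE\|_\infty$; however, this requires invoking the relation $\bn\ge\sqrt{n_1n_2}$ and is slightly less transparent than the direct induction above.
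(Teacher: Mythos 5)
Your proof is correct and is exactly the argument the paper has in mind: the paper omits the proof of Lemma \ref{lm:eetpeytincoherence}, stating only that it is analogous to Lemma \ref{lm:eetpincoherence}, and your induction on $s=p_1+\cdots+p_k$ with the same peel-off/Cauchy--Schwarz step and sparsity bound $(\abnb)^4$ is precisely that analogue. The only genuinely new ingredient, the base case bounding $\max_j\norm{\vece_j^T\matE_{(i_{k+1})}\matV}_2$ via row sparsity of $\matE_{(i_{k+1})}$ and $\mu$-incoherence of $\matV$, is handled correctly and accounts for the extra factor of $\abnb$ in the exponent.
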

\begin{proof}
The proof is analogous to that of Lemma \ref{lm:eetpincoherence}, and hence, omitted for brevity.
\end{proof}

Combining Lemma \ref{lm:eetpincoherence} and \ref{lm:eetpeytincoherence}, we can show the following key lemma on the connection between $\{\matF_{(i)}\}$ and the error matrices $\{\matE_{(i)}\}$.

\begin{lemma}
\label{lm:efuinfnorm}
For every $i\in [N]$, suppose that 
$\matE_{(i)}\in \mathbb{R}^{n_1\times n_2}$ is $\alpha$-sparse and $\matLst_{(i)}=\matHst_{(i)}\matSigmast_{(i)}\matWst_{(i)}$ is rank-$r$ with $\mu$-incoherent matrices $\matHst_{(i)}\in \mathbb{R}^{n_1\times r}$ and $\matWst_{(i)}\in \mathbb{R}^{n_2\times r}$. For any integers $p_1,p_2,\cdots,p_k\ge 0$, and $i_1,i_2,\cdots,i_k\in\{0,1,\cdots,N\}$, the following holds for any $\mu$-incoherent matrix $\matU\in \mathbb{R}^{n_1\times r}$,
\begin{equation}
\max_j\norm{\vece_j^T\prod_{\ell=1}^k\matF_{(i_{\ell})}^{p_{\ell}}\matU}_2\le \sqrt{\murnone}\left(\abnb(\abnb+2\gop)\right)^{p_1+p_2+\cdots+p_k},
\end{equation}
where $\matF_{(i)}$ is defined as~\eqref{eq_F}.
\end{lemma}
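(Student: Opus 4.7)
The plan is to expand each $\matF_{(i_\ell)}$ using $\matF_{(i)} = \matE_{(i)}\matE_{(i)}^T + \matE_{(i)}\matLst_{(i)}^T + \matLst_{(i)}\matE_{(i)}^T$ and distribute multiplicatively throughout the product $\prod_{\ell=1}^k \matF_{(i_\ell)}^{p_\ell}$. Letting $P = p_1 + \cdots + p_k$, this yields a sum of $3^P$ ``atomic'' products, each of length $P$ built from the three factor types with source indices inherited from the original layering. By the triangle inequality it suffices to bound each atomic product separately. I will prove the per-atom estimate
\begin{equation*}
\max_j \norm{\vece_j^T (\text{atomic product}) \matU}_2 \le \sqrt{\murnone}\, \prod_{m=1}^P w(\matA_m),
\end{equation*}
with weights $w(\matE\matE^T) = (\abnb)^2$ and $w(\matE\matLst^T) = w(\matLst\matE^T) = \gop\abnb$. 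Since there are exactly $\binom{P}{j} 2^{P-j}$ atomic products containing $j$ factors of the pure $\matE\matE^T$ type, summing and applying the binomial identity gives $\sqrt{\murnone}\,\bigl((\abnb)^2 + 2\gop\abnb\bigr)^P = \sqrt{\murnone}\,\bigl(\abnb(\abnb + 2\gop)\bigr)^P$, which is exactly the claim.

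\paragraph{Per-atom bound by induction.} I would establish the per-atom estimate by induction on the atom length $P$, peeling off the leftmost factor and reducing to length $P-1$. The base case $P=0$ is immediate from the $\mu$-incoherence of $\matU$. For the inductive step, three sub-cases arise. If the leftmost factor is $\matE_{(i)}\matE_{(i)}^T$, the same Cauchy--Schwarz reduction used in Lemma~\ref{lm:eetpincoherence} gives $\max_j \norm{\vece_j^T \matE_{(i)}\matE_{(i)}^T \matY}_2 \le (\abnb)^2 \max_k \norm{\vece_k^T \matY}_2$, and the inductive hypothesis applied to $\matY$ closes the case. If the leftmost factor is $\matE_{(i)}\matLst_{(i)}^T$, I would substitute $\matLst_{(i)} = \matHst_{(i)}\matSigmast_{(i)}\matWst_{(i)}^T$, apply the $\ell_1$--$\ell_{2,\infty}$ reduction to the outer $\matE_{(i)}$ (extracting a factor $\abnb$), and then route through $\matSigmast_{(i)}\matHst_{(i)}^T \matY$ using the operator norm $\gop$ of $\matSigmast_{(i)}$ together with the $\mu$-incoherence of $\matHst_{(i)}$ to reduce cleanly to a bound on $\max_l \norm{\vece_l^T \matY}_2$. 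The case $\matLst_{(i)}\matE_{(i)}^T$ is symmetric, with the roles of $\matHst_{(i)}$ and $\matWst_{(i)}$ interchanged.

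\paragraph{The main obstacle.} The chief technical difficulty will lie in the two $\matLst$-involving inductive cases, where one has to propagate the leading incoherence factor $\sqrt{\murnone}$ across a non-sparse factor. A naive bound that replaces the tail by its operator norm and invokes $\norm{\matU}_{\mathrm{op}} \le \norm{\matU}_F \le \mu\sqrt{r}$ inflates the estimate by a spurious $\mu\sqrt{r}$ per $\matLst$-factor, giving weights of order $\gop\abnb \cdot \mu\sqrt{r}$ rather than the required $\gop\abnb$. To avoid this loss, the induction must be carried out entirely in terms of row-wise $\ell_2$ norms $\max_k \norm{\vece_k^T (\cdot)}_2$, so that the single incoherence factor $\sqrt{\murnone}$ tracked from the tail $\matU$ is never compounded across $\matHst_{(i)}$ or $\matWst_{(i)}$ insertions. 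Concretely, the incoherence of the intermediate $\matHst_{(i)}, \matWst_{(i)}$ must be combined with the $\alpha$-sparsity of the adjacent $\matE_{(j)}$ factor (via the same Cauchy--Schwarz tactic underlying Lemmas~\ref{lm:eetpincoherence} and \ref{lm:eetpeytincoherence}) rather than used independently, so that each $\matLst$-involving factor contributes exactly $\gop\abnb$ and the overall incoherence dependence stays at the level $\sqrt{\murnone}$ dictated by $\matU$ alone. Once this bookkeeping is set up, the three inductive cases combine cleanly and the summation over the $3^P$ atoms produces the stated bound.
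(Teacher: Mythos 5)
Your overall architecture is sound and, after resummation, coincides with the paper's bound: where you expand every $\matF_{(i_\ell)}$ into all $3^{P}$ atomic products and re-sum binomially via $\bigl((\abnb)^2+2\gop\abnb\bigr)^{P}$, the paper performs a coarser decomposition — it peels off only the leading run of $\matE\matE^T$ factors up to the \emph{first} occurrence of an $\matE\matLst^T+\matLst\matE^T$ factor, invokes Lemmas~\ref{lm:eetpincoherence}--\ref{lm:eetpeytincoherence} at that junction, bounds the entire remaining tail of untouched $\matF$'s in operator norm by $\bigl(\abnb(2\gop+\abnb)\bigr)^{P-1-r}$, and sums a geometric series to get exactly $(\abnb)^{P}(\abnb+2\gop)^{P}$. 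Your per-atom estimate with weights $(\abnb)^2$ and $\gop\abnb$ is true, your counting $\binom{P}{j}2^{P-j}$ is right (with the minor caveat that atoms coming from $\matF_{(0)}$ are $\tfrac1N$-weighted averages over sources, which passes through since the weights are source-uniform), and the two routes buy essentially the same thing.

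The flaw is in how you propose to prove the per-atom bound in the $\matLst$-involving cases, which is precisely the step you flag as the main obstacle. After writing $\matE_{(i)}\matLst_{(i)}^T=\matE_{(i)}\matWst_{(i)}\matSigmast_{(i)}\matHst_{(i)}^T$ and extracting $\alpha n_2\norm{\matE_{(i)}}_\infty$ from the outer sparse factor and $\sqrt{\mu^2 r/n_2}$ from the rows of $\matWst_{(i)}$, you are left with $\norm{\matHst_{(i)}^T\matY}_2$ for the tail $\matY=\matA_2\cdots\matA_P\matU$; this contracts over the full row index and cannot be ``reduced cleanly to $\max_l\norm{\vece_l^T\matY}_2$'' — doing so costs a factor of order $\sqrt{n_1}$ (or, if you reuse the row-wise inductive hypothesis through $\matHst_{(i)}$, an extra $\mu^2 r$), which is exactly the compounding you are trying to avoid. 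The correct move, and the one the paper makes, is the opposite of ``row-wise only'': once the first $\matLst$-type factor is hit, harvest the single incoherence factor from the adjacent $\matWst_{(i)}$ (or $\matHst_{(i)}$ in the transposed case) together with the sparsity of the neighboring $\matE$, and then bound the whole remaining tail, including $\matU$, in \emph{operator} norm using $\norm{\matE\matE^T}_2\le(\abnb)^2$, $\norm{\matE\matLst^T}_2,\norm{\matLst\matE^T}_2\le\gop\abnb$ and $\norm{\matHst_{(i)}^T}_2,\norm{\matU}_2\le1$ (the last point uses orthonormal columns of $\matU$, which the paper also relies on implicitly). With that repair your atom-level induction closes and the binomial resummation gives the stated bound; but as written, the row-wise-only bookkeeping you insist on does not go through, and once you adopt the operator-norm tail bound your argument essentially collapses into the paper's first-occurrence decomposition.
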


\begin{proof}
We firstly expand $\matF_{(i_1)}^{p_1}\matF_{(i_2)}^{p_2}\cdots\matF_{(i_k)}^{p_k}\matU$ and rearrange the terms by the number of consecutive $\matE_{(i)}\matE_{(i)}^T$ terms appearing in the beginning of each factor.

$$
\begin{aligned}
&\matF_{(i_1)}^{p_1}\matF_{(i_2)}^{p_2}\cdots\matF_{(i_k)}^{p_k}\matU\\
&=\left(\matE_{(i_1)}(\matLst_{i_1})^T+\matLst_{i_1}\matE_{(i_1)}^T+\matE_{(i_1)}\matE_{(i_1)}^T\right)\cdots\left(\matE_{(i_1)}(\matLst_{i_1})^T+\matLst_{i_1}\matE_{(i_1)}^T+\matE_{(i_1)}\matE_{(i_1)}^T\right)\\
&\left(\matE_{(i_2)}(\matLst_{i_2})^T+\matLst_{i_2}\matE_{(i_2)}^T+\matE_{(i_2)}\matE_{(i_2)}^T\right)\cdots\left(\matE_{(i_2)}(\matLst_{i_2})^T+\matLst_{i_2}\matE_{(i_2)}^T+\matE_{(i_2)}\matE_{(i_2)}^T\right)\\
&\cdots\\
&\left(\matE_{(i_k)}(\matLst_{(i_k)})^T+\matLst_{(i_k)}\matE_{(i_k)}^T+\matE_{(i_k)}\matE_{(i_k)}^T\right)\cdots\left(\matE_{(i_k)}(\matLst_{(i_k)})^T+\matLst_{(i_k)}\matE_{(i_k)}^T+\matE_{(i_k)}\matE_{(i_k)}^T\right)\matU\\
&=\left(\matE_{(i_1)}\matE_{(i_1)}^T\right)^{p_1}\cdots\left(\matE_{(i_{k})}\matE_{(i_{k})}^T\right)^{p_{k}}\matU\\
&+\sum_{r=0}^{p_1+\cdots+p_k-1}\left(\matE_{(i_1)}\matE_{(i_1)}^T\right)^{p_1}\cdots\left(\matE_{(i_{t-1})}\matE_{(i_{t-1})}^T\right)^{p_{t-1}}\left(\matE_{(i_t)}\matE_{(i_t)}^T\right)^{r-(\sum_{g=1}^{t-1}p_g)}\\
&\cdot \left(\matE_{(i_t)}\matLst_{(i_t)}^T+\matLst_{(i_t)}\matE_{(i_t)}^T\right)\matF_{(i_t)}^{(\sum_{g=1}^{t}p_g)-1-r}\matF_{(i_{t+1})}^{p_{t+1}}\cdots\matF_{(i_k)}^{p_k}\matU.
\end{aligned}
$$
For the first term, by Lemma \ref{lm:eetpincoherence}, we have
$$
\norm{\vece_j^T\left(\matE_{(i_1)}\matE_{(i_1)}^T\right)^{p_1}\cdots\left(\matE_{i_{k}}\matE_{i_{k}}^T\right)^{p_{k}}\matU}_2\le \sqrt{\murnone} \left(\abnb\right)^{2(p_1+\cdots+p_k)}.
$$
For the remaining terms, we have
$$
\begin{aligned}
&\left(\matE_{(i_1)}\matE_{(i_1)}^T\right)^{p_1}\cdots\left(\matE_{(i_{t-1})}\matE_{(i_{t-1})}^T\right)^{p_{t-1}}\left(\matE_{(i_t)}\matE_{(i_t)}^T\right)^{r-(\sum_{g=1}^{t-1}p_g)}\\
&\left(\matE_{(i_t)}(\matLst_{(i_t)})^T+\matLst_{(i_t)}\matE_{(i_t)}^T\right)\matF_{(i_t)}^{(\sum_{g=1}^{t}p_g)-1-r}\matF_{(i_{t+1})}^{p_{t+1}}\cdots\matF_{(i_k)}^{p_k}\matU\\
&=\left(\matE_{(i_1)}\matE_{(i_1)}^T\right)^{p_1}\cdots\left(\matE_{(i_{t-1})}\matE_{(i_{t-1})}^T\right)^{p_{t-1}}\left(\matE_{(i_t)}\matE_{(i_t)}^T\right)^{r-(\sum_{g=1}^{t-1}p_g)}\matE_{(i_t)}\matWst_{(i_t)}\\
&\times \matSigmast_{(i_t)}\matHst_{(i_t)}^T\matF_{(i_t)}^{(\sum_{g=1}^{t}p_g)-1-r}\matF_{(i_{t+1})}^{p_{t+1}}\cdots\matF_{(i_k)}^{p_k}\matU
\\
&+\left(\matE_{(i_1)}\matE_{(i_1)}^T\right)^{p_1}\cdots\left(\matE_{(i_{t-1})}\matE_{(i_{t-1})}^T\right)^{p_{t-1}}\left(\matE_{(i_t)}\matE_{(i_t)}^T\right)^{r-(\sum_{g=1}^{t-1}p_g)}\matHst_{(i_t)}\\
&\times \matSigmast_{(i_t)}\matWst_{(i_t)}^T\matE_{(i_t)}^T\matF_{(i_t)}^{(\sum_{g=1}^{t}p_g)-1-r}\matF_{(i_{t+1})}^{p_{t+1}}\cdots\matF_{(i_k)}^{p_k}\matU.
\end{aligned}
$$
We can bound the two terms separately. By Lemma \ref{lm:eetpeytincoherence},
$$
\begin{aligned}
&\norm{\vece_j^T\left(\matE_{(i_1)}\matE_{(i_1)}^T\right)^{p_1}\cdots\left(\matE_{(i_{t-1})}\matE_{(i_{t-1})}^T\right)^{p_{t-1}}\left(\matE_{(i_t)}\matE_{(i_t)}^T\right)^{r-(\sum_{g=1}^{t-1}p_g)}\matE_{(i_t)}\matWst_{(i_t)}}\\
&\le \sqrt{\murnone}\left(\abnb\right)^{2r+1}.
\end{aligned}
$$
And by Lemma \ref{lm:eetpincoherence},
$$
\begin{aligned}
&\norm{\vece_j^T\left(\matE_{(i_1)}\matE_{(i_1)}^T\right)^{p_1}\cdots\left(\matE_{(i_{t-1})}\matE_{(i_{t-1})}^T\right)^{p_{t-1}}\left(\matE_{(i_t)}\matE_{(i_t)}^T\right)^{r-(\sum_{g=1}^{t-1}p_g)}\matHst_{(i_t)}}\\
&\le \sqrt{\murnone}\left(\abnb\right)^{2r}.
\end{aligned}
$$
For an $\alpha$-sparse matrix $\matE_{(i)}\in \mathbb{R}^{n_1\times n_2}$, its operator norm is bounded by 
$$
\begin{aligned}
&\norm{\matE_{(i)}}_2=\max_{\norm{\vecv}=1,\norm{\vech}=1}\vecv^T \matE_{(i)}\vech=\max_{\norm{\vecv}=1,\norm{\vech}=1}\sum_{j,k}\vecv_j\vech_k[\matE_{(i)}]_{jk}\\
&\le \max_{\norm{\vecv}=1,\norm{\vech}=1}\sum_{j,k}\fracud\left(\vecv_j^2\sqrt{\frac{n_1}{n_2}}+\vech_k^2\sqrt{\frac{n_2}{n_1}}\right)[\matE_{(i)}]_{jk}\\
&\le \max_{\norm{\vecv}=1,\norm{\vech}=1}\norm{\matE_{(i)}}_{\infty}\fracud\left(\sum_{j}\vecv_j^2\sqrt{\frac{n_1}{n_2}}\alpha n_2+\sum_{k}\vech_k^2 \sqrt{\frac{n_2}{n_1}}\alpha n_1\right)\\
&=\alpha \sqrt{n_1n_2} \norm{\matE_{(i)}}_{\infty}.
\end{aligned}
$$
Therefore $\norm{\matE_{(i)}}_2\le \abnb$. As a result, we know,
$$
\begin{aligned}
&\norm{\matF_{(i)}}_2\le 2\gop\abnb+\left(\abnb\right)^2\\
&=\abnb\left(2\gop+\abnb\right).    
\end{aligned}
$$

We thus have:

\begin{align*}
&\norm{\vece_j^T\matF_{(i_1)}^{p_1}\matF_{(i_2)}^{p_2}\cdots\matF_{(i_k)}^{p_k}\matUst}\\
&\le \sqrt{\murnone}\Big(\left(\abnb\right)^{2(\sum_{\ell=1}^kp_{\ell})}\\
&+\sum_{r=0}^{\sum_{\ell=1}^kp_{\ell}-1}\left(\abnb\right)^{2r+1}\\
&\times2\gop\left(\abnb\left(2\gop+\abnb\right)\right)^{\sum_{\ell=1}^kp_{\ell}-1-r}\Big)\\
&=\sqrt{\murnone}\Big(\left(\abnb\right)^{2\sum_{\ell=1}^kp_{\ell}}\\
&+\left(\abnb\right)^{\sum_{\ell=1}^kp_{\ell}}\Big(\left(2\gop+\abnb\right)^{\sum_{\ell=1}^kp_{\ell}}\\
&-\left(\abnb\right)^{\sum_{\ell=1}^kp_{\ell}}\big)\Big)\\
&=\sqrt{\murnone}\left(\abnb\right)^{\sum_{\ell=1}^kp_{\ell}}\left(\abnb+2\gop\right)^{\sum_{\ell=1}^kp_{\ell}}.
\end{align*}

This finishes our proof.
\end{proof}

Lemma \ref{lm:efuinfnorm} is an important lemma as it provides an upper bound on the maximum row norm of the product of a group of sparse matrices and an incoherent matrix. We will use Lemma \ref{lm:efuinfnorm} extensively when we calculate the $\ell_{\infty}$ norm of error terms in the output of \inneralg.

Next, we prove that the optimal solution indeed satisfies the KKT conditions delineated in Lemma~\ref{lm:kktcondition}.

\noindent{\bf Proof of Lemma \ref{lm:kktcondition}}\quad
The proof is presented in three parts. In the first part, we show that the optimal solution optimal $\hmatU_g$, $\{\hmatV_{(i),g},\hmatU_{(i),l},\hmatV_{(i),l}\}$ satisfies the linear independence constraint qualification (LICQ). This ensures that the optimal solution satisfies the KKT conditions. In the second part, we prove the validity of the equations in~\eqref{eqn:kktinnerloop1}. Finally, we prove the correctness of the equations in~\eqref{eqn:kktderivations}.

\noindent \underline{\it Proof of LICQ.} 
We begin by showing $\hmatU_g$ has full column rank. By contradiction, suppose $\hmatU_g$ has rank $r_1^{'}<r_1$. Since $\hmatM_{(i)}$ has rank at least $r_1+r_2$, the residual $\hmatM_{(i)}-\hmatU_g\hmatV_{(i),g}^T-\hmatU_{(i),l}\hmatV_{(i),l}^T$ has rank at least $1$. Therefore we can always find another $\hmatU_g^{'}$ such that $f_i(\hmatU_g^{'},\hmatV_{(i),g},\hmatU_{(i),l},\hmatV_{(i),l})< f_i(\hmatU_g,\hmatV_{(i),g},\hmatU_{(i),l},\hmatV_{(i),l})$. This contradicts the fact that $\hmatU_g$ is optimal.

Next we will establish the LICQ of the constraints. We define $h_{ijk}$ as the inner product between the $j$-th column of $\matU_g$ and the $k$-th column of $\matU_{(i),l}$, $ h_{ijk}(\vecx)=[\matU_g]_{:,j}^T[\matU_{(i),l}]_{:,k}$. The constraints in \eqref{eqn:subproblemobj} can be rewritten as $h_{ijk}(\hat{\vecx})=0,\forall i\in[r_1],\forall j\in[r_2],\forall k\in [N]$. LICQ requires $\nabla h_{ijk}(\hat{\vecx})$ to be linearly independent for all $ijk$ \cite[Proposition 3.1.1]{nonlinearprogramming}. 

Suppose  we can find constants $\psi_{ijk}$ such that $\sum_{i=1}^{N}\sum_{j=1}^{r_1}\sum_{k=1}^{r_2}\psi_{ijk}\nabla h_{ijk}(\hat{\vecx})=0$. We consider the partial derivative of $h_{ijk}$ over the $k^{'}$-th column of $\matU_{(i^{'}),l}$. It is easy to derive,
$$
\frac{\partial}{\partial [\matU_{(i^{'}),l}]_{:,k^{'}}}h_{ijk}(\hat{\vecx})= \delta_{ii^{'}}\delta_{kk^{'}}[\hmatU_g]_{:,j},
$$
where $\delta_{ii^{'}}$ is the Kronecker delta function. Then the constants $\psi_{ijk}$ should satisfy,
$$
\sum_{j=1}^{r_2}\psi_{i^{'}jk^{'}}[\hmatU_g]_{:,j}=0.
$$
As the columns of $\hmatU_g$ are linearly independent, $\psi_{i^{'}jk^{'}}=0$ for each $j$. This holds for any $i^{'}$ and $k^{'}$. Therefore $\psi_{ijk}=0$ for all $i,j,k$.  This implies $\nabla h_{ijk}$'s are linearly independent.

\noindent \underline{\it Proof of Equations~\eqref{eqn:kktinnerloop1}.} The Lagrangian of the optimization problem \eqref{eqn:subproblemobj} can be written as
\begin{equation}
\label{eqn:lagrangian}
\begin{aligned}
\mathcal{L} =& \frac{1}{2}\sum_{i=1}^N\norm{\matU_g\matV_{(i),g}^T+\matU_{(i),l}\matV_{(i),l}^T-\hmatM_{(i)}}_F^2\\
&+\tr{\matLambda_{8,(i)}\matU_g^T\matU_{(i),l}} ,   
\end{aligned}
\end{equation}
where $\matLambda_{8,(i)}$ is the dual variable for the constraint $\matU_g^T\matU_{(i),l}=0$.

Under the LICQ, we know that $\hmatU_g,\{\hmatV_{(i),g},\hmatU_{(i),l},\hmatV_{(i),l}\}$ satisfies KKT condition. Setting the gradient of $\mathcal{L}$ with respect to $\matV_{(i),g}$ and $\matV_{(i),l}$ to zero, we can prove \eqref{subeqn:kktinnerloopvig} and \eqref{subeqn:kktinnerloopvil}. Considering the constraint $\hmatU_g^T\hmatU_{(i),l}=0$, we can solve them as $\hmatV_{(i),g}=\hmatM_{(i)}^T\hmatU_{g}\left(\hmatU_{g}^T\hmatU_{g}\right)^{-1}$ and $\hmatV_{(i),l}=\hmatM_{(i)}^T\hmatU_{(i),l}\left(\hmatU_{(i),l}^T\hmatU_{(i),l}\right)^{-1}$.
Then we examine the gradient of $\mathcal{L}$ with respect to $\matU_{(i),l}$:
$$
\begin{aligned}
\frac{\partial}{\partial \matU_{(i),l}}\mathcal{L} = \left(\matU_g\matV_{(i),g}^T+\matU_{(i),l}\matV_{(i),l}^T-\hmatM_{(i)}\right)\matV_{(i),l}+\matU_g\matLambda_{(8),i}^T.
\end{aligned}
$$
Substituting  $\hmatV_{(i),g}=\hmatM_{(i)}^T\hmatU_{g}\left(\hmatU_{g}^T\hmatU_{g}\right)^{-1}$ and $\hmatV_{(i),l}=\hmatM_{(i)}^T\hmatU_{(i),l}\left(\hmatU_{(i),l}^T\hmatU_{(i),l}\right)^{-1}$ in the above gradient and setting it to zero, we have
$$
\begin{aligned}
&\left(\hmatU_g\left(\hmatU_g^T\hmatU_g\right)^{-1}\hmatU_g^T+\hmatU_{(i),l}\left(\hmatU_{(i),l}^T\hmatU_{(i),l}\right)^{-1}\hmatU_{(i),l}^T-\matI\right)\hmatM_{(i)}\hmatV_{(i),l}\\
&+\hmatU_g\matLambda_{(8),i}^T=0.
\end{aligned}
$$
Left multiplying both sides by $\hmatU_{g}^T$, we have $\matLambda_{8,(i)} = 0$. Left multiplying both sides by $\hmatU_{(i),l}^T$, we have $\hmatU_{(i),l}^T\hmatU_{(i),l}-\matI = 0$. Therefore we also have $\left(\hmatU_g\left(\hmatU_g^T\hmatU_g\right)^{-1}\hmatU_g^T+\hmatU_{(i),l}\left(\hmatU_{(i),l}^T\hmatU_{(i),l}\right)^{-1}\hmatU_{(i),l}^T-\matI\right)\hmatM_{(i)}\matV_{(i),l}=0$.
This proves equation \eqref{subeqn:kktinnerloopuil}.
Now, setting the derivative of $\mathcal{L}$ with respect to $\matU_{g}$ to zero, we have
$$
\frac{\partial}{\partial \matU_{g}}\mathcal{L}=\sum_{i=1}^N\left(\hmatU_g\hmatV_{(i),g}^T+\hmatU_{(i),l}\hmatV_{(i),l}^T-\hmatM_{(i)}\right)\hmatV_{(i),g}=0.
$$
Left multiplying both sides by $\hmatU_g^T$, we have $\hmatU_g^T\hmatU_g-\matI=0$. We have thus proven  \eqref{subeqn:kktinnerloopug}. This completes the proof for \eqref{eqn:kktinnerloop1}. 

\noindent \underline{\it Proof of Equations \eqref{eqn:kktderivations}.} Equation \eqref{subeqn:kktinnerloopuil} can be rewritten as:
\begin{equation}
\label{eqn:mmtuiexpansion}
\hmatM_{(i)}\hmatM_{(i)}^T\hmatU_{(i),l} = \hmatU_{(i),l}\hmatU_{(i),l}^T\hmatM_{(i)}\hmatM_{(i)}^T\hmatU_{(i),l}+\hmatU_{g}\hmatU_{g}^T\hmatM_{(i)}\hmatM_{(i)}^T\hmatU_{(i),l}.  
\end{equation}
Since $\hmatU_{(i),l}^T\hmatM_{(i)}\hmatM_{(i)}^T\hmatU_{(i),l}$ is positive definite, we can use $\matW_{(i),l}\matLambda_{2,(i)}\matW_{(i),l}^T=\hmatU_{(i),l}^T\hmatM_{(i)}\hmatM_{(i)}^T\hmatU_{(i),l}$ to denote its eigen-decomposition, where $\matLambda_{2,(i)}\in \mathbb{R}^{r_1\times r_1}$ is a positive definite diagonal matrix and $\matW_{(i),l}\in \mathbb{R}^{r_1\times r_1}$ is orthonormal. Upon defining $\hmatH_{(i),l}=\hmatU_{(i),l}\matW_{(i),l}$, $\hmatH_{(i),l}$ is also orthonormal as $\hmatH_{(i),l}^T\hmatH_{(i),l}=\matW_{(i),l}^T\hmatU_{(i),l}^T\hmatU_{(i),l}\matW_{(i),l}=\matI$. 
Similarly, we rewrite the equation \eqref{subeqn:kktinnerloopug} as:
\begin{equation}
\label{eqn:mmtugexpansion}
\frac{1}{N}\sum_{i=1}^N\matM_{(i)}\matM_{(i)}^T\hmatU_g = \hmatU_g\hmatU_g^T \frac{1}{N}\sum_{i=1}^N\matM_{(i)}\matM_{(i)}^T\hmatU_g +\frac{1}{N}\sum_{i=1}^N\hmatU_{(i),l}\hmatU_{(i),l}^T\matM_{(i)}\matM_{(i)}^T\hmatU_g. 
\end{equation}
Since $\hmatU_g^T \frac{1}{N}\sum_{i=1}^N\hmatM_{(i)}\hmatM_{(i)}^T\hmatU_g$ is positive definite, we can use $\matW_{g}\matLambda_{1}\matW_{g}^T=\hmatU_g^T \frac{1}{N}\sum_{i=1}^N\hmatM_{(i)}\hmatM_{(i)}^T\hmatU_g$ to denote its eigen decomposition, where $\matLambda_{1}\in \mathbb{R}^{r_1\times r_1}$ is positive diagonal, $\matW_{g}\in \mathbb{R}^{r_1\times r_1}$ is orthogonal $\matW_{g}\matW_{g}^T=\matW_{g}^T\matW_{g}=\matI$. 
We define $\hmatH_g$ as $\hmatH_g=\hmatU_{g}\matW_{g}$, then $\hmatH_g$ is also orthonormal. Additionally, $\hmatH_g^T\hmatH_{(i),l}=\matW_g^T\hmatU_g^T\hmatU_{(i),l}\matW_{(i),l}=0$. This completes the proof of equation~\eqref{subeqn:kktconstraint}. 

Next, we proceed with the proof of equations~\eqref{subeqn:kkthg} and \eqref{subeqn:kkthil}. By right multiplying both sides of \eqref{eqn:mmtugexpansion} with $\matW_g$ and replacing $\hmatU_g$ and $\hmatU_{(i),l}$ by $\hmatH_g$ and $\hmatH_{(i),l}$, we have
\begin{equation}
\frac{1}{N}\sum_{i=1}^N\hmatM_{(i)}\hmatM_{(i)}^T\hmatH_{g} = \hmatH_{g}\matLambda_{1}+\matP_{\hmatH_{(i),l}}\frac{1}{N}\sum_{i=1}^N\hmatM_{(i)}\hmatM_{(i)}^T\hmatH_{g}.  
\end{equation}
Similarly, by right multiplying both sides of \eqref{eqn:mmtuiexpansion} with $\matW_{(i),l}$, we can rewrite \eqref{eqn:mmtuiexpansion} as,
\begin{equation}
\hmatM_{(i)}\hmatM_{(i)}^T\hmatH_{(i),l} = \hmatH_{(i),l}\matLambda_{2,(i)}+\hmatH_{g}\hmatH_{g}^T\hmatM_{(i)}\hmatM_{(i)}^T\hmatH_{(i),l} . 
\end{equation}
We thus prove the equations~\eqref{subeqn:kkthg} and \eqref{subeqn:kkthil}, where $\matLambda_{3,(i)} = \hmatH_{g}^T\hmatM_{(i)}\hmatM_{(i)}^T\hmatH_{(i),l} $.$\hfill\blacksquare$

We note that the KKT conditions provide a set of conditions that must be satisfied for \textit{all} stationary points of \eqref{eqn:subproblemobj}. Our next key contribution is to use these conditions to characterize a few interesting properties satisfied by \textit{all} the optimal solutions. To this goal, we heavily rely on the spectral properties of $\matLambda_1$, $\matLambda_{2,(i)}$, and $\matLambda_{3,(i)}$. 

\revise{For simplicity, we introduce three additional notations, $\matLambda_{4,(i)} = -\matLambda_{3,(i)}$,  $\matLambda_{5,(i)}=-\matLambda_{3,(i)}^T/N$, and $\matLambda_6\in \mathbb{R}^{r_1\times r_1}$ defined as,
\begin{equation}
\label{eqn:deflambda6}
\matLambda_6 = \matLambda_1 - \frac{1}{N}\sum_{i=1}^N\matLambda_{3,(i)}\matLambda_{2,(i)}^{-1}\matLambda_{3,(i)}^T.
\end{equation}
$\matLambda_6$ is a symmetric matrix. It is worth noting that $\matLambda_6$ is well defined as the diagonal matrix $\matLambda_{2,(i)}$ is invertible throughput the proof. We also introduce short-hand notation $\Delta \matP_g$ to denote $\matP_{\hmatH_g}-\matP_{\matUst_g}$ and $\Delta \matP_{(i),l}$ to denote $\matP_{\hmatH_{(i),l}}-\matP_{\matUst_{(i),l}}$.}

Spectral properties of $\matLambda_1$, $\matLambda_{2,(i)}$, $\matLambda_{3,(i)}$, and $\matLambda_{6}$ are critical for developing the solutions to KKT conditions. We will establish these properties in the following lemmas.

Before diving into these properties, we investigate the deviance of the estimates features $\hmatH_g$ and $\hmatH_{(i),l}$ to ground truth features $\matUst_g$ and $\matUst_{(i),l}$. 

\revise{\begin{lemma}
\label{lm:deltapupperbound}
If $\max_i\norm{\matE_{(i)}}_{\infty}\le 4\gop\frac{\mu^2 r}{\bn}$, and $\matE_{(i)}$ is $\alpha$-sparse with
$\alpha\le \frac{1}{60\mu^4 r^2}\frac{\sm^4}{\gop^4}\left(1+\frac{4\gop^2}{\sqrt{\theta}\sms}\right)^{-2}$, we have,
 \begin{align}
\label{eqn:dpgfnorm}
\norm{\matP_{\matUst_g}-\matP_{\hmatH_g}}_F\le \sabnb\frac{5\gop}{\sqrt{\theta}\sms},
 \end{align}   
 and 
 \begin{align}
\label{eqn:dpifnorm}
\norm{\matP_{\matUst_{(i),l}}-\matP_{\hmatH_{(i),l}}}\le 3\sabnb\frac{\gop}{\sms}\left(1+4\frac{\gop^2}{\sqrt{\theta}\sms}\right).
 \end{align} 
\end{lemma}}
\begin{proof}
 
By \citet[Theorem 1]{personalizedpca}, we know that $\hmatH_g$ and $\hmatH_{(i),l}$ corresponding to the global optimal solutions to the problem~\eqref{eqn:subproblemobj} satisfy

\begin{equation}
\label{eqn:l2errorbound}
\norm{\matP_{\matUst_g}-\matP_{\hmatH_g}}_F^2+\frac{1}{N}\sum_{i=1}^N\norm{\matP_{\matUst_{(i),l}}-\matP_{\hmatH_{(i),l}}}_F^2\le \frac{4}{N}\sum_{i=1}^N\frac{\norm{\matF_{(i)}}_F^2}{\theta\smq}.
\end{equation}

Note that the norm of the error term $\norm{\matF_{(i)}}_F$ is bounded by:
\begin{equation}
\label{eqn:ffnormbound}
\begin{aligned}
&\norm{\matF_{(i)}}_F=\norm{\matLst_{(i)}\matE_{(i),t}^T+\matE_{(i),t}{\matLst_{(i)}}^T+\matE_{(i),t}\matE_{(i),t}^T}_F\\
&\le \norm{\matE_{(i),t}}_F\left(2\norm{\matLst_{(i)}}_{2}+\norm{\matE_{(i),t}}_2\right)\\
&\le \sqrt{\alpha}\bn\norm{\matE_{(i),t}}_{\infty}\left(2\gop+\abnb\right).
\end{aligned}
\end{equation}
Therefore, we know from \eqref{eqn:l2errorbound} that
\begin{align*}
&\norm{\matP_{\matUst_g}-\matP_{\hmatH_g}}_F\\
&\le 2\frac{\norm{\matF_{(i)}}_F}{\sqrt{\theta\smq}}\le \sqrt{\alpha}\bn\norm{\matE_{(i),t}}_{\infty}\left(2\gop+\abnb\right)\frac{2}{\sqrt{\theta\smq}}\\
&\le \sabnb\frac{5\gop}{\sqrt{\theta}\sms}, 
\end{align*}
where we used the condition $\abnb \le \gop/2$ for the last inequality.

From~\eqref{eqn:subproblemobj}, we can also deduce that that the column vectors of $\hmatH_{(i),l}$ span the top invariant subspace of $\left(\matI-\matP_{\hmatH_g}\right)\hmatM_{(i)}\hmatM_{(i)}^T \left(\matI-\matP_{\hmatH_g}\right)$. Since column vectors of $\matUst_{(i),l}$ span the top invariant subspace of $\left(\matI-\matP_{\matUst_g}\right)\matMst_{(i)}\matMst_{(i)}^T \left(\matI-\matP_{\matUst_g}\right)$, we know from Weyl's theorem~\citep{taoblog} and Davis-Khan theorem~\citep{cmuslides} that,

\begin{align*}
&\norm{\matP_{\matUst_{(i),l}}-\matP_{\hmatH_{(i),l}}}\le\\
&\frac{\norm{\left(\matI-\matP_{\hmatH_g}\right)\hmatM_{(i)}\hmatM_{(i)}^T \left(\matI-\matP_{\hmatH_g}\right)-\left(\matI-\matP_{\matUst_g}\right)\matMst_{(i)}\matMst_{(i)}^T \left(\matI-\matP_{\matUst_g}\right)}_F}{\sms - \norm{\left(\matI-\matP_{\hmatH_g}\right)\hmatM_{(i)}\hmatM_{(i)}^T \left(\matI-\matP_{\hmatH_g}\right)-\left(\matI-\matP_{\matUst_g}\right)\matMst_{(i)}\matMst_{(i)}^T \left(\matI-\matP_{\matUst_g}\right)}}. \\
\end{align*}

Since 
\begin{align*}
&\norm{\left(\matI-\matP_{\hmatH_g}\right)\hmatM_{(i)}\hmatM_{(i)}^T \left(\matI-\matP_{\hmatH_g}\right)-\left(\matI-\matP_{\matUst_g}\right)\matMst_{(i)}\matMst_{(i)}^T \left(\matI-\matP_{\matUst_g}\right)}_F\\
&=\norm{\left(\matI-\matP_{\hmatH_g}\right)\left(\hmatM_{(i)}\hmatM_{(i)}^T-\matMst_{(i)}\matMst_{(i)}^T\right) \left(\matI-\matP_{\hmatH_g}\right) }_F\\
&+\norm{\matMst_{(i)}\matMst_{(i)}^T\left(\matP_{\matUst_g}-\matP_{\hmatH_g}\right)}_F+\norm{\left(\matP_{\matUst_g}-\matP_{\hmatH_g}\right)\matMst_{(i)}\matMst_{(i)}^T}_F\\
&\le \norm{\matF_{(i)}}_F+2\gop^2\norm{ \matP_{\hmatH_g}-\matP_{\matUst_g}}_F\le\frac{5}{2}\sabnb\gop\left(1+4\frac{\gop^2}{\sqrt{\theta}\sms}\right)\\
&\le \frac{\sms}{6},
\end{align*}
we have,
\begin{align*}
&\norm{\matP_{\matUst_{(i),l}}-\matP_{\hmatH_{(i),l}}}\\
&\le\frac{1}{\sms-\frac{\sms}{6} }\frac{5}{2}\sabnb\gop\left(1+4\frac{\gop^2}{\sqrt{\theta}\sms}\right)\\
&\le 3\sabnb\frac{\gop}{\sms}\left(1+4\frac{\gop^2}{\sqrt{\theta}\sms}\right).
\end{align*}
This completes the proof.

\end{proof}

With Lemma~\ref{lm:deltapupperbound}, we first provide upper bound on the operator norm of $\matLambda_{3,(i)}$.
\begin{lemma}
\label{lm:eigenvalueupperbound}
For every $i\in [N]$, suppose that $\matU^\star_{(i),l}$'s are $\theta$-misaligned, $\max_i\norm{\matE_{(i)}}_{\infty}\le 4\gop\frac{\mu^2 r}{\bn}$, and $\matE_{(i)}$ is $\alpha$-sparse with $
\alpha \le \frac{1}{10\mu^2 r}$, we have,
\begin{align*}
    \norm{\matLambda_{3,(i)}}\le 2\gop.
\end{align*}    
\end{lemma}
\begin{proof}
    From the KKT condition~\eqref{eqn:kktderivations}, we know,
    \begin{align*}
    &\norm{\matLambda_{3,(i)}}= \norm{\hmatH_g^T\left(\matT_{(i)}+\matF_{(i)}\right)\hmatH_{(i),l}}\\
    &\le \norm{\matT_{(i)}} + \norm{\matF_{(i)}}\le 2\gop.
    \end{align*}
    This completes the proof.
\end{proof}

We then estimate lower bounds on the smallest eigenvalues of $\matLambda_1$, $\matLambda_{2,(i)}$, and $\matLambda_6$. These estimates rely on more refined matrix perturbation analysis.

\begin{lemma}
\label{lm:eigenvaluelowerbound}
For every $i\in [N]$, suppose that $\matU^\star_{(i),l}$'s are $\theta$-misaligned, $\max_i\norm{\matE_{(i)}}_{\infty}\le 4\gop\frac{\mu^2 r}{\bn}$, and $\matE_{(i)}$ is $\alpha$-sparse with
$$
\alpha \le \frac{1}{64}\frac{1}{\mu^4r^2}\left(\frac{\sm}{\gop}\right)^4\left(1+2\left(\frac{\gop}{\sm}\right)^2+\frac{8}{\sqrt{\theta}}\left(\frac{\gop}{\sm}\right)^4\right)^{-2}.
$$
The minimum eigenvalues of $\matLambda_{1}$ and $\matLambda_{2,(i)}$ are lower bounded by $\frac{3}{4}\sms$. 
\end{lemma}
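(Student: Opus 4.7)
The plan is to use the KKT characterization from Lemma \ref{lm:kktcondition}, which gives the explicit forms $\matLambda_{2,(i)} = \hmatH_{(i),l}^T \hmatM_{(i)}\hmatM_{(i)}^T \hmatH_{(i),l}$ and $\matLambda_{1} = \hmatH_g^T \bigl(\frac{1}{N}\sum_{i=1}^N \hmatM_{(i)}\hmatM_{(i)}^T\bigr)\hmatH_g$. Writing $\hmatM_{(i)}\hmatM_{(i)}^T = \matT_{(i)} + \matF_{(i)}$ decomposes each of these matrices into a signal term of the form $\hmatH^T\matT_{(i)}\hmatH$ plus a perturbation term $\hmatH^T\matF_{(i)}\hmatH$. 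The strategy is then to lower-bound the signal part using closeness of the estimated orthonormal matrices to their ground-truth counterparts, and to upper-bound the perturbation part using the sparsity of $\matE_{(i)}$.

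For the signal part, the ground-truth SVD in \eqref{eqn:truemodelsvd} yields $\matT_{(i)} \succeq \sms\bigl(\matHst_g\matHst_g^T + \matHst_{(i),l}\matHst_{(i),l}^T\bigr)$, so $\hmatH_{(i),l}^T\matT_{(i)}\hmatH_{(i),l} \succeq \sms\,\hmatH_{(i),l}^T\matHst_{(i),l}\matHst_{(i),l}^T\hmatH_{(i),l}$ and $\frac{1}{N}\sum_i \hmatH_g^T\matT_{(i)}\hmatH_g \succeq \sms\,\hmatH_g^T\matHst_g\matHst_g^T\hmatH_g$. For orthonormal $\hmatH$ and $\matHst$ of matching column dimension, $\lambda_{\min}(\hmatH^T\matHst\matHst^T\hmatH) = \cos^2\phi_{\max}$, where $\phi_{\max}$ is the largest principal angle between the two column spaces, and the standard identity $\norm{\hmatH\hmatH^T - \matHst\matHst^T}_F^2 = 2\sum_j\sin^2\phi_j \ge 2\sin^2\phi_{\max}$ gives $\cos^2\phi_{\max} \ge 1 - \tfrac{1}{2}\norm{\hmatH\hmatH^T - \matHst\matHst^T}_F^2$. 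Invoking \eqref{eqn:l2errorbound} and the rearrangement used in the proof of Lemma \ref{lm:lbd32normbound} bounds these Frobenius deviations by $\tfrac{2\max_j\norm{\matF_{(j)}}_F}{\sqrt{\theta}\sms}$ globally and $\tfrac{2\sqrt{N}\max_j\norm{\matF_{(j)}}_F}{\sqrt{\theta}\sms}$ locally, the extra $\sqrt{N}$ appearing because only the local summands in \eqref{eqn:l2errorbound} are averaged.

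For the perturbation term, I would use $\norm{\hmatH^T\matF_{(i)}\hmatH}_2 \le \norm{\matF_{(i)}}_2 \le \abnb(2\gop + \abnb)$, together with the operator-norm estimate $\norm{\matE_{(i)}}_2 \le \abnb$ derived inside the proof of Lemma \ref{lm:efuinfnorm}. The hypotheses $\max_i\norm{\matE_{(i)}}_\infty \le 4\gop\mu^2 r/\bn$ and $\alpha \le \tfrac{9}{1024\mu^4 r^2}$ force $\abnb \le \tfrac{9}{256}\gop$, hence $\norm{\matF_{(i)}}_2 \le 12\alpha\mu^2 r\gop^2$ and $\norm{\matF_{(i)}}_F \le 12\sqrt{\alpha}\mu^2 r\gop^2$. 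Assembling the pieces yields the two-term lower bounds
\begin{align*}
\lambda_{\min}(\matLambda_1) &\;\ge\; \sms\Bigl(1 - \tfrac{2\max_j\norm{\matF_{(j)}}_F^2}{\theta\smq}\Bigr) - \max_j\norm{\matF_{(j)}}_2,\\
\lambda_{\min}(\matLambda_{2,(i)}) &\;\ge\; \sms\Bigl(1 - \tfrac{2N\max_j\norm{\matF_{(j)}}_F^2}{\theta\smq}\Bigr) - \norm{\matF_{(i)}}_2.
\end{align*}

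The main obstacle is the careful bookkeeping needed to show that the two correction terms combine to cost at most $\tfrac{1}{4}\sms$ in each inequality; this is exactly what the two-piece sparsity condition is calibrated for. Substituting the $\matF$-bounds into the (harder) local inequality gives a principal-angle deficit of order $\alpha\mu^4 r^2 N(\gop/\sm)^4/\theta$ and a perturbation deficit of order $\alpha\mu^2 r(\gop/\sm)^2$; the $(\sm/\gop)^4(\sqrt{N}(\gop/\sm)^2 / \sqrt{\theta} + 1)^{-2}$-style piece of the sparsity condition drives each below $\tfrac{1}{8}\sms$. The $\matLambda_1$ inequality is strictly easier since it does not pay the $\sqrt{N}$ factor, so the same $\alpha$ bound suffices. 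No new conceptual ingredient is required beyond these constant-tracking computations.
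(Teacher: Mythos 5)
Your proposal is correct, and it reaches the bound by a route that differs from the paper's in the key lower-bounding step. The paper reads \eqref{subeqn:kkthil}--\eqref{subeqn:kktconstraint} as eigen-equations for the full-size projected matrices $\matP_{\hmatH_{(i),l}}\left(\matT_{(i)}+\matF_{(i)}\right)\matP_{\hmatH_{(i),l}}$ and, after substituting $\matLambda_{3,(i)}$, $\frac{1}{N}\sum_i\left(\matI-\matP_{\hmatH_{(i),l}}\right)\left(\matT_{(i)}+\matF_{(i)}\right)\left(\matI-\matP_{\hmatH_{(i),l}}\right)$, then applies Weyl's inequality against the unperturbed matrices $\matP_{\matUst_{(i),l}}\matT_{(i)}\matP_{\matUst_{(i),l}}$ and $\frac{1}{N}\sum_i\left(\matI-\matP_{\matUst_{(i),l}}\right)\matT_{(i)}\left(\matI-\matP_{\matUst_{(i),l}}\right)$; the resulting deficit is \emph{linear} in the subspace error, of the form $2\gop^2\norm{\matP_{\hmatH_{(i),l}}-\matP_{\matUst_{(i),l}}}_2+\norm{\matF_{(i)}}_2$, and the stated $\alpha$-condition is calibrated so that this equals at most $\frac{1}{4}\sms$ essentially with equality. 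You instead work directly with the $r\times r$ compressions $\matLambda_{2,(i)}=\hmatH_{(i),l}^T(\matT_{(i)}+\matF_{(i)})\hmatH_{(i),l}$ and $\matLambda_1=\hmatH_g^T(\matT_{(0)}+\matF_{(0)})\hmatH_g$ (which is indeed what Lemma \ref{lm:kktcondition}'s construction gives), lower-bound the signal through $\matT_{(i)}\succeq\sms\matHst_{(i),l}\matHst_{(i),l}^T$ and the principal-angle identity, so the subspace error enters \emph{quadratically} as $\frac{\sms}{2}\norm{\matP_{\hmatH}-\matP_{\matHst}}_F^2$ without the $\gop^2$ amplification (at the price of $N/\theta$ instead of $\sqrt{N}/\sqrt{\theta}$), and you bypass the deflation step for $\matLambda_1$ altogether. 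Both arguments consume the same two ingredients -- the KKT characterization and the PerPCA bound \eqref{eqn:l2errorbound} together with $\norm{\matF_{(i)}}_2\le\abnb\left(2\gop+\abnb\right)$ -- and your version closes with room to spare: under the stated $\alpha$, the quadratic (angle) deficit is in fact well below $\frac{1}{8}\sms$, and the perturbation deficit is below $\frac{1}{8}\sms$ as you claim, though note that your steps $\abnb\le\frac{9}{256}\gop$ and $\alpha\mu^2 r\le\alpha\mu^4r^2$ implicitly use $\mu^2 r\ge 1$; this is harmless (any $\mu$-incoherent orthonormal matrix has $\mu\ge 1$), but it should be said explicitly when the constants are tracked.
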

\begin{proof}
This lemma is a result of Weyl's theorem \citep{taoblog} and the perturbation bound on the eigenspaces. 
From the first equation in \eqref{eqn:kktderivations}, we know,
\begin{equation*}
\begin{aligned}
\matP_{\hmatH_{(i),l}}\left(\matT_{(i)}+\matF_{(i)}\right)\matP_{\hmatH_{(i),l}}\hmatH_{(i),l}=\hmatH_{(i),l}\bm{\Lambda}_{2,(i)}.
\end{aligned}   
\end{equation*}
Therefore, $\hmatH_{(i),l}$'s columns are the eigenvectors of the symmetric matrix $\matP_{\hmatH_{(i),l}}\left(\matT_{(i)}+\matF_{(i)}\right)\matP_{\hmatH_{(i),l}}$, with eigenvalues corresponding to the diagonal entries of $\bm{\Lambda}_{2,(i)}$. 
According to the definition of $\matT_{(i)}$, we know that the eigenvalues of $\matP_{\matUst_{(i),l}}\matT_{(i)}\matP_{\matUst_{(i),l}}=\matHst_{(i),l}\matSigmast_{(i),l}^2\matHst_{(i),l}^T$ are lower bounded by $\sms$. Hence, as a result of Weyl's inequality, we have
$$
\begin{aligned}
&\lambda_{\min}\left(\matLambda_{2,(i)}\right)=\lambda_{\min}\left(\matP_{\hmatH_{(i),l}}\left(\matT_{(i)}+\matF_{(i)}\right)\matP_{\hmatH_{(i),l}}\right)\\
&\ge \sms-\norm{\matP_{\matUst_{(i),l}}\matT_{(i)}\matP_{\matUst_{(i),l}}-\matP_{\hmatH_{(i),l}}\left(\matT_{(i)}+\matF_{(i)}\right)\matP_{\hmatH_{(i),l}}}_2 .
\end{aligned}
$$
On the other hand, by triangle inequalities, we have
\begin{align*}
&\norm{\matP_{\matUst_{(i),l}}\matT_{(i)}\matP_{\matUst_{(i),l}}-\matP_{\hmatH_{(i),l}}\left(\matT_{(i)}+\matF_{(i)}\right)\matP_{\hmatH_{(i),l}}}_2 \\
&\le \norm{\matP_{\matUst_{(i),l}}\matT_{(i)}\matP_{\matUst_{(i),l}}-\matP_{\hmatH_{(i),l}}\matT_{(i)}\matP_{\hmatH_{(i),l}}}_2+\norm{\matF_{(i)}}_2\\
&\le \norm{\matP_{\matUst_{(i),l}}\matT_{(i)}\matP_{\matUst_{(i),l}}-\matP_{\hmatH_{(i),l}}\matT_{(i)}\matP_{\matUst_{(i),l}}}_2\\
&\ \ +\norm{\matP_{\hmatH_{(i),l}}\matT_{(i)}\matP_{\matUst_{(i),l}}-\matP_{\hmatH_{(i),l}}\matT_{(i)}\matP_{\hmatH_{(i),l}}}_2+\norm{\matF_{(i)}}_2\\
&\le 2\gop^2\norm{\matP_{\hmatH_{(i),l}}-\matP_{\matUst_{(i),l}}}_2 + \norm{\matF_{(i)}}_2\\
&\le \sabnb\left(2\gop+\sabnb\right)\\
&+\frac{5}{2}\sabnb\frac{\gop}{\sms}\left(1+4\frac{\gop^2}{\sqrt{\theta}\sms}\right) \\
&\le \frac{1}{4}\sms,
\end{align*}
where we used the fact $\norm{\matT_{(i)}}\le \gop^2$ in the third inequality, Lemma~\ref{lm:deltapupperbound} in the 4th inequality, and the assumed upper bound on $\alpha$ in the last inequality. We thus have $\lambda_{\min}\left(\matLambda_{2,(i)}\right)\ge \frac{3}{4}\sms$.

Similarly, we can solve $\matLambda_{3,(i)}$ from the first equation of \eqref{eqn:kktderivations} as $\matLambda_{3,(i)}=\hmatH_{g}^T\left(\matT_{(i)}+\matF_{(i)}\right)\hmatH_{(i),l}$. Plugging this into the second equation of \eqref{eqn:kktderivations}, we have
$$
\frac{1}{N}\sum_{i=1}^N\left(\matI-\matP_{\hmatH_{(i),l}}\right)\left(\matT_{(i)}+\matF_{(i)}\right)\left(\matI-\matP_{\hmatH_{(i),l}}\right)\hmatH_g=\hmatH_g\matLambda_1.
$$

Thus, the columns of $\hmatH_g$ are the eigenvectors of the matrix $\frac{1}{N}\sum_{i=1}^N\left(\matI-\matP_{\hmatH_{(i),l}}\right)\left(\matT_{(i)}+\matF_{(i)}\right)\left(\matI-\matP_{\hmatH_{(i),l}}\right)$, with eigenvalues corresponding to the diagonal entries of $\matLambda_1$. Again, since the minimum eigenvalue of $\frac{1}{N}\sum_{i=1}^N\left(\matI-\matP_{\matUst_{(i),l}}\right)\matT_{(i)}\left(\matI-\matP_{\matUst_{(i),l}}\right)$ is lower bounded by $\sms$, Weyl's inequality can be invoked to provide a lower bound on the minimum eigenvalue of $\matLambda_1$:
$$
\begin{aligned}
&\lambda_{\min}\left(\matLambda_{1}\right)=\lambda_{\min}\left(\frac{1}{N}\sum_{i=1}^N\left(\matI-\matP_{\hmatH_{(i),l}}\right)\left(\matT_{(i)}+\matF_{(i)}\right)\left(\matI-\matP_{\hmatH_{(i),l}}\right)\right)\\
&\ge \sms-\\
&\norm{\frac{\sum_{i=1}^N\left(\matI-\matP_{\matUst_{(i),l}}\right)\matT_{(i)}\left(\matI-\matP_{\matUst_{(i),l}}\right)-\left(\matI-\matP_{\hmatH_{(i),l}}\right)\left(\matT_{(i)}+\matF_{(i)}\right)\left(\matI-\matP_{\hmatH_{(i),l}}\right)}{N}  }_2.  \end{aligned}
$$
The operator norm on the right hand side can be bounded by triangle inequalities. For each term in the summation, we have
$$
\begin{aligned}
&\norm{\left(\matI-\matP_{\matUst_{(i),l}}\right)\matT_{(i)}\left(\matI-\matP_{\matUst_{(i),l}}\right)-\left(\matI-\matP_{\hmatH_{(i),l}}\right)\left(\matT_{(i)}+\matF_{(i)}\right)\left(\matI-\matP_{\hmatH_{(i),l}}\right)}_2\\
\le& \norm{\left(\matI-\matP_{\matUst_{(i),l}}\right)\matT_{(i)}\left(\matI-\matP_{\matUst_{(i),l}}\right)-\left(\matI-\matP_{\hmatH_{(i),l}}\right)\matT_{(i)}\left(\matI-\matP_{\matUst_{(i),l}}\right)}\\
&+\norm{\left(\matI-\matP_{\hmatH_{(i),l}}\right)\matT_{(i)}\left(\matI-\matP_{\matUst_{(i),l}}\right)-\left(\matI-\matP_{\hmatH_{(i),l}}\right)\matT_{(i)}\left(\matI-\matP_{\hmatH_{(i),l}}\right)}\\
&+\norm{\left(\matI-\matP_{\hmatH_{(i),l}}\right)\matF_{(i)}\left(\matI-\matP_{\hmatH_{(i),l}}\right)}\\
\le& 2\gop^2\norm{\matP_{\matUst_{(i),l}}-\matP_{\hmatH_{(i),l}}}+\norm{\matF_{(i)}}\\
\le& \frac{1}{4}\sms.
\end{aligned}
$$
where we used the assumed upper bound on $\alpha$.
This completes the proof.
\end{proof}

\revise{We also provide a lower bound on the minimum eigenvalue of the symmetric matrix $\matLambda_6$. Remember that $\matLambda_6$ is defined as $\matLambda_6 = \matLambda_1 - \frac{1}{N}\sum_{i=1}^N\matLambda_{3,(i)}\matLambda_{2,(i)}^{-1}\matLambda_{3,(i)}^T$.
\begin{lemma}
\label{lm:l6eigenvaluelowerbound}
For every $i\in [N]$, suppose that $\matU^\star_{(i),l}$'s are $\theta$-misaligned, $\max_i\norm{\matE_{(i)}}_{\infty}\le 4\gop\frac{\mu^2 r}{\bn}$, and $\matE_{(i)}$ is $\alpha$-sparse with $
\alpha \le \frac{1}{\left(\mu^2r640\right)^2}\left(\frac{\sm}{\gop}\right)^8\left(1+\frac{4\gop^2}{\sqrt{\theta}\sms}\right)^{-2}$, then, the minimum eigenvalue of $\matLambda_{6}$ is lower bounded by $\frac{3}{4}\sms$. 
\end{lemma}}
\begin{proof}
The proof is constructive. We use two steps. In the first step, we introduce a block matrix $\matLambda_{7}$ defined as,
\begin{equation}
   \matLambda_7 = \begin{pmatrix}
    \sqrt{N}\matLambda_1 & \matLambda_{3,(1)}&\cdots&\matLambda_{3,(N)}\\
    \matLambda_{3,(1)}^T&\sqrt{N}\matLambda_{2,(1)}&\cdots&0\\
    \vdots&\vdots&\ddots&\vdots\\
    \matLambda_{3,(N)}&0&\cdots&\sqrt{N}\matLambda_{2,(N)}\\
   \end{pmatrix},
\end{equation}
 and show that the minimum eigenvalue of the minimum eigenvalue of $\matLambda_7$ is lower bounded by $\sqrt{N}\frac{3}{4}\sms$. Then, in the second step, we prove that the minimum eigenvalue of $\matLambda_{6}$ is lower bounded by $\frac{1}{\sqrt{N}}$ multiplies the minimum eigenvalue of $\matLambda_{7}$, $\lambda_{min}(\matLambda_6)\ge \lambda_{min}(\frac{1}{\sqrt{N}}\matLambda_7)$.

 During this proof, we further introduce 
 \begin{align*}
     \left\{\begin{aligned}
    \matLambdast_{1,(i)} &= \matHst_g^T\matMst_{(i)} \matMst_{(i)}^T\matHst_g\\
    \matLambdast_{1} &=\frac{1}{N}\sum_{i=1}^N \matLambdast_{1,(i)}\\
     \matLambdast_{2,(i)} &= \matHst_{(i),l}^T\matMst_{(i)} \matMst_{(i)}^T\matHst_{(i),l}\\
     \matLambdast_{3,(i)} &= \matHst_{g}^T\matMst_{(i)} \matMst_{(i)}^T\matHst_{(i),l},\\\end{aligned}\right. 
 \end{align*}
for notational simplicity. From the SVD~\eqref{eqn:truemodelsvd} and the assumption on singular values of $\matLst_{(i)}$, we know: 
\begin{align}
\label{apeqn:svdimplication}
[\matHst_g,\matHst_{(i),l}]^T\matLst_{(i)}\matLst_{(i)}^T[\matHst_g,\matHst_{(i),l}] =  \begin{pmatrix}
    \matLambdast_{1,(i)} & \matLambdast_{3,(i)}\\
    \matLambdast_{3,(i)}^T&\matLambdast_{2,(i)}\\
   \end{pmatrix} \succ \sms\matI.
\end{align}

\noindent\underline{\textit{Step 1: Minimum eigenvalue of} $\matLambda_7$}: 
From definitions of $\matLambda_1$, $\matLambda_{2,(i)}$, and $\matLambda_{3,(i)}$ in~\eqref{eqn:kktderivations}, we know,
\begin{align*}
   \matLambda_7 &= \underbrace{\begin{pmatrix}
    \sqrt{N}\hmatH_g^T\matT_{(0)}\hmatH_g & \hmatH_g^T\matT_{(1)}\hmatH_{(1),l}&\cdots&\hmatH_g^T\matT_{(N)}\hmatH_{(N),l}\\
    \hmatH_{(1),l}^T\matT_{(1)}\hmatH_g&\sqrt{N}\hmatH_{(1),l}^T\matT_{(1)}\hmatH_{(1),l}&\cdots&0\\
    \vdots&\vdots&\ddots&\vdots\\
    \hmatH_{(N),l}^T\matT_{(N)}\hmatH_g&0&\cdots&\sqrt{N}\hmatH_{(N),l}^T\matT_{(N)}\hmatH_{(N),l}\\
   \end{pmatrix}}_{\matLambda_{7,2}}\\
   &\underbrace{+\begin{pmatrix}
    \sqrt{N}\hmatH_g^T\matF_{(0)}\hmatH_g & \hmatH_g^T\matF_{(1)}\hmatH_{(1),l}&\cdots&\hmatH_g^T\matF_{(N)}\hmatH_{(N),l}\\
    \hmatH_{(1),l}^T\matF_{(1)}\hmatH_g&\sqrt{N}\hmatH_{(1),l}^T\matF_{(1)}\hmatH_{(1),l}&\cdots&0\\
    \vdots&\vdots&\ddots&\vdots\\
    \hmatH_{(N),l}^T\matF_{(N)}\hmatH_g&0&\cdots&\sqrt{N}\hmatH_{(N),l}^T\matF_{(N)}\hmatH_{(N),l}\\
   \end{pmatrix}}_{\matLambda_{7,1}}.
\end{align*}

By Lemma~\ref{lm:blockmatrixopnormupperbound}, the operator norm of $\matLambda_{7,1}$ is upper bounded by,
\begin{align}
&\norm{\matLambda_{7,1}}\le \max_{i=0,1,\cdots,N} \{\sqrt{N}\norm{\matF_{(i)}}\}+\sqrt{2\sum_{i=1}^N\norm{\matF_{(i)}}^2}\notag\\
    &\le 2\sqrt{N}\abnb \gop\frac{5}{2}\le\frac{\sqrt{N}\sms}{16}.\notag\\
    \label{eqn:l71upper}
\end{align}
where we used the condition $\alpha\le \frac{\sms}{\gop^2}\frac{1}{320\mu^2r}$ in the last inequality.

We can further decompose $\matLambda_{7,2}$. . Then, we can derive $\hmatH_g^T\matT_{(0)}\hmatH_g=\hmatH_g^T\matHst_g\matHst_g^T\matT_{(0)}\matHst_g\matHst_g^T\hmatH_g+\hmatH_g^T\Delta\matP_{g}\matT_{(0)}\hmatH_g+\hmatH_g^T\matHst_g\matHst_g^T\matT_{(0)}\Delta\matP_{g}\hmatH_g$,  $\hmatH_{(i),l}^T\matT_{(i)}\hmatH_{(i),l}=\hmatH_{(i),l}^T\matHst_{(i),l}\matHst_{(i),l}^T\matT_{(i)}\matHst_{(i),l}\matHst_{(i),l}^T\hmatH_{(i),l}+\hmatH_{(i),l}^T\Delta\matP_{{(i),l}}\matT_{(i)}\hmatH_{(i),l}+\hmatH_{(i),l}^T\matHst_{(i),l}\matHst_{(i),l}^T\matT_{(i)}\Delta\matP_{(i),l}\hmatH_{(i),l}$, and $\hmatH_g^T\matT_{(i)}\hmatH_{(i),l}=\hmatH_g^T\matHst_g\matHst_g^T\matT_{(i)}\matHst_{(i),l}\matHst_{(i),l}^T\hmatH_{(i),l}+\hmatH_g^T\Delta\matP_{g}\matT_{(i)}\hmatH_{(i),l}+\hmatH_g^T\matHst_g\matHst_g^T\matT_{(i)}\Delta\matP_{(i),l}\hmatH_{(i),l}$.

Therefore, we can rewrite $\matLambda_{7,2}$ as,

\scalebox{0.75}{
\begin{minipage}{\textwidth}
\begin{align*}
  &\matLambda_{7,2} =  \matLambda_{7,3}+\\ 
  &\underbrace{\begin{pmatrix}
\sqrt{N}\hmatH_g^T\matHst_g\matLambdast_{1}\matHst_g^T\hmatH_g & \hmatH_g^T\matHst_g\matLambdast_{3,(1)}\matHst_{(1),l}^T\hmatH_{(1),l}&\cdots&\hmatH_g^T\matHst_g\matLambdast_{3,(N)}\matHst_{(N),l}^T\hmatH_{(N),l}\\
    \hmatH_{(1),l}^T\matHst_{(1),l}\matLambdast_{3,(1)}^T\matHst_g^T\hmatH_g&\sqrt{N}\hmatH_{(1),l}^T\matHst_{(1),l}\matLambdast_{2,(1)}\matHst_{(1),l}^T\hmatH_{(1),l}&\cdots&0\\
    \vdots&\vdots&\ddots&\vdots\\
    \hmatH_{(N),l}^T\matHst_{(N),l}\matLambdast_{3,(N)}^T\matHst_g^T\hmatH_g&0&\cdots&\sqrt{N}\hmatH_{(N),l}^T\matHst_{(N),l}\matLambdast_{2,(N)}\matHst_{(N),l}^T\hmatH_{(N),l}\\
\end{pmatrix}}_{\matLambda_{7,4}},\\
\end{align*}
\end{minipage}}
where $\matLambda_{7,3}$ consists of residual terms that contain $\Delta \matP_g$ or $\Delta \matP_{(i),l}$. 

We use Lemma~\ref{lm:blockmatrixopnormupperbound} to estimate an upper bound for the operator norm of $\matLambda_{7,3}$. The maximum operator norm of the diagonal block of $\matLambda_{7,3}$ is
\begin{align*}
    &\max\{\sqrt{N}\norm{\hmatH_g^T\Delta\matP_{g}\matT_{(0)}\hmatH_g+\hmatH_g^T\matHst_g\matHst_g^T\matT_{(0)}\Delta\matP_{g}\hmatH_g}, \\
    &\quad\max_{i}\{\sqrt{N}\norm{\hmatH_{(i),l}^T\Delta\matP_{{(i),l}}\matT_{(i)}\hmatH_{(i),l}+\hmatH_{(i),l}^T\matHst_{(i),l}\matHst_{(i),l}^T\matT_{(i)}\Delta\matP_{(i),l}\hmatH_{(i),l}} \}\}\\
    &\le \sqrt{N} 2 \gop^2\frac{5}{2}\sabnb\frac{\gop}{\sms}\left(1+4\frac{\gop^2}{\sqrt{\theta}\sms}\right).
\end{align*}

The summation of the operator norm of the off-diagonal blocks of $\matLambda_{7,3}$ is
\begin{align*}
  &\sqrt{2\sum_{i=1}^N \norm{\hmatH_g^T\Delta\matP_{g}\matT_{(i)}\hmatH_{(i),l}+\hmatH_g^T\matHst_g\matHst_g^T\matT_{(i)}\Delta\matP_{(i),l}\hmatH_{(i),l}}^2}\\
  &\le \sqrt{2N}2 \gop^2\frac{5}{2}\sabnb\frac{\gop}{\sms}\left(1+4\frac{\gop^2}{\sqrt{\theta}\sms}\right). 
\end{align*}

As a result, Lemma~\ref{lm:blockmatrixopnormupperbound} implies that,
\begin{align}
    &\norm{\matLambda_{7,3}}\le \sqrt{N}10\gop^2\sabnb\frac{\gop}{\sms}\left(1+4\frac{\gop^2}{\sqrt{\theta}\sms}\right)\notag\\
    &\le \frac{\sqrt{N}\sms}{16},\notag\\
    \label{eqn:l73upper}
\end{align}
where we applied the condition $\alpha\le \frac{1}{\left(\mu^2r640\right)^2}\left(\frac{\sm}{\gop}\right)^8\left(1+\frac{4\gop^2}{\sqrt{\theta}\sms}\right)^{-2}$ in the last inequality.

We proceed to estimate the eigenvalue lower bound for $\matLambda_{7,4}$. We first factorize $\matLambda_{7,4}$ as,
\begin{align*}
&\matLambda_{7,4} = \text{Diag}\left(\hmatH_g^T\matHst_g,\hmatH_{(1),l}^T\matHst_{(1),l},\cdots,\hmatH_{(N),l}^T\matHst_{(N),l} \right)\\
&\times \begin{pmatrix}
    \sqrt{N}\matLambdast_1 & \matLambdast_{3,(1)}&\cdots&\matLambdast_{3,(N)}\\
    \matLambdast_{3,(1)}^T&\sqrt{N}\matLambdast_{2,(1)}&\cdots&0\\
    \vdots&\vdots&\ddots&\vdots\\
    \matLambdast_{3,(N)}^T&0&\cdots&\sqrt{N}\matLambdast_{2,(N)}\\
   \end{pmatrix}\\
   &\times \text{Diag}\left(\hmatHst_g^T\hmatH_g,\matHst_{(1),l}^T\hmatH_{(1),l},\cdots,\matHst_{(N),l}^T\hmatH_{(N),l} \right).
\end{align*}

Lemma~\ref{lm:schurcomplement} and~\eqref{apeqn:svdimplication}
indicate that $\matLambdast_{1,(i)}-\sms\matI \succ 0$, $\matLambdast_{2,(i)}-\sms\matI \succ 0$, and \begin{equation}
\label{eqn:schurderivation1}
   \left(\matLambdast_{1,(i)}-\sms\matI\right)-\matLambdast_{3,(i)}^T\left(\matLambdast_{2,(i)}-\sms\matI\right)^{-1}\matLambdast_{3,(i)}\succeq 0. 
\end{equation} 

Summing both sides of~\eqref{eqn:schurderivation1} for $i=1$ to $N$, we know,
$
N\matLambdast_1-N\sms\matI -\sum_{i=1}^N\matLambdast_{3,(i)}^T\left(\matLambdast_{2,(i)}-\sms\matI\right)^{-1}\matLambdast_{3,(i)}\succeq 0
$, which is equivalent to $\sqrt{N}\matLambdast_1-\sqrt{N}\sms\matI -\sum_{i=1}^N\matLambdast_{3,(i)}^T\left(\sqrt{N}\matLambdast_{2,(i)}-\sqrt{N}\sms\matI\right)^{-1}\matLambdast_{3,(i)}\succeq 0$. Again, Lemma~\ref{lm:schurcomplement} indicates,
\begin{align*}
    \begin{pmatrix}
    \sqrt{N}\matLambdast_1 & \matLambdast_{3,(1)}&\cdots&\matLambdast_{3,(N)}\\
    \matLambdast_{3,(1)}^T&\sqrt{N}\matLambdast_{2,(1)}&\cdots&0\\
    \vdots&\vdots&\ddots&\vdots\\
    \matLambdast_{3,(N)}^T&0&\cdots&\sqrt{N}\matLambdast_{2,(N)}\\
   \end{pmatrix}\succeq\sqrt{N}\sms\matI.
\end{align*}

On the other hand, we know form Lemma~\ref{lm:blockmatrixopnormupperbound} that,
\begin{align*}
    &\norm{\matI-\text{Diag}\left(\hmatH_g^T\matHst_g\hmatHst_g^T\hmatH_g,\hmatH_{(1),l}^T\matHst_{(1),l}\matHst_{(1),l}^T\hmatH_{(1),l},\cdots,\hmatH_{(N),l}^T\matHst_{(N),l}\matHst_{(N),l}^T\hmatH_{(N),l} \right)}\\
    &\le \max\{\norm{\Delta\matP_{g}},\max_j \norm{\Delta\matP_{(j),l)}}\}\\
    &\le \frac{\abnb(\abnb+2\gop)}{\sms}\left(1+\frac{8\gop^2}{\sqrt{\theta}\sms}\right)\le \frac{1}{8},
\end{align*}
where we applied the condition $\alpha\le \frac{\sms}{80\mu^2 r\gop^2}\left(1+\frac{8\gop^2}{\sqrt{\theta}\sms}\right)^{-1}$ in the last inequality.

Hence, Lemma~\ref{lm:lminatba} indicates,
\begin{equation*}
    \lambda_{\min}\left(\matLambda_{7,4}\right)\ge \sqrt{N}\sms \frac{7}{8}.
\end{equation*}

By Wely's theorem, we know that
\begin{align}
&\lambda_{\min}\left(\matLambda_{7}\right)\ge \sqrt{N}\sms \frac{7}{8} -\norm{\matLambda_{7,1}+\matLambda_{7,3}}\notag\\
&\ge \sqrt{N}\sms  \frac{3}{4},\label{eqn:lambda7lmin}
\end{align}
where we applied the inequality~\eqref{eqn:l71upper} and~\eqref{eqn:l73upper} in the last inequality.

\noindent\underline{\textit{Step 2: Minimum eigenvalue of} $\matLambda_6$:} The inequality~\eqref{eqn:lambda7lmin} is equivalent to $\matLambda_7-\sqrt{N}\sms  \frac{3}{4}\matI\succ 0$. Thus, Lemma~\ref{lm:schurcomplement} implies,
\begin{align*}
    \sqrt{N}\matLambda_1-\sqrt{N}\sms \frac{3}{4}\matI -\sum_{i=1}^N \matLambda_{3,(i)}^T\left(\sqrt{N}\matLambda_{2,(i)}-\sqrt{N}\sms \frac{3}{4}\matI\right)^{-1}\matLambda_{2,(i)}\succeq 0.
\end{align*}

Lemma~\ref{lm:eigenvaluelowerbound} already shows that $\matLambda_{2,(i)}\succ \sms\frac{3}{4}\matI$. As a result,
\begin{align*}
   &\matLambda_{3,(i)}^T\left(\sqrt{N}\matLambda_{2,(i)}-\sqrt{N}\sms \frac{3}{4}\matI\right)^{-1}\matLambda_{2,(i)} \\
   &=\frac{1}{\sqrt{N}}\sum_{p=0}^{\infty}\matLambda_{3,(i)}^T\left(\matLambda_{2,(i)}^{-1}+\left(\sum_{p=0}^{\infty}\sms \frac{3}{4}\matLambda_{2,(i)}^{-1}\right)^{p}\right)\matLambda_{2,(i)}\\
   &\succeq \frac{1}{\sqrt{N}}\matLambda_{3,(i)}^T\matLambda_{2,(i)}^{-1}\matLambda_{3,(i)}.
\end{align*}

By rearranging terms, we have,
\begin{align*}
    \sqrt{N}\matLambda_1 -\frac{1}{\sqrt{N}}\sum_{i=1}^N \matLambda_{3,(i)}^T\matLambda_{2,(i)}^{-1}\matLambda_{2,(i)}\succeq \sqrt{N}\sms \frac{3}{4}\matI.
\end{align*}
This completes our proof.
\end{proof}

With an understanding of spectral properties of $\matLambda_1$, $\matLambda_{2, (i)}$, and  $\matLambda_6$, we are now ready to characterize the solutions to the KKT conditions and provide a proof for Lemma~\ref{lm:informalhpertuabation}. To this goal, we first write the solutions to \eqref{eqn:kktderivations} into Taylor-like series.

\revise{\begin{lemma}
\label{lm:explicitsolution}
For every $i\in [N]$, suppose that $\matU^\star_{(i),l}$'s are $\theta$-misaligned, $\max_i\norm{\matE_{(i)}}_{\infty}\le 4\gop\frac{\mu^2 r}{\bn}$, and $\matE_{(i)}$ is $\alpha$-sparse with $
 \alpha \le \frac{1}{40\mu^2 r}\brtkappa^{-3}$.
The solutions to \eqref{eqn:kktderivations} satisfy the following,
\begin{equation}
\label{eqn:ugcondition}
\begin{aligned}
&\hmatH_g=\hmatH_{g,0}+\hmatH_{g,1}\\
&+\sum_{k=0}^\infty\sum_{p_0+p_1\ge 1}^{\infty}\cdots\sum_{p_{2k}+p_{2k+1}\ge 1}^{\infty}\sum_{i_1,i_3,\cdots,i_{2k+1}=1}^N\left[\prod_{l=0}^{k}\left(\matF_{(0)}^{p_{2l}}\matF_{(i_{2l+1})}^{p_{2l+1}}\right)\right]\hmatH_{g,0}\\
&\times \prod_{l=k}^0\left(\matLambda_{4,(i_{2l+1})}\matLambda_{4,(i_{2l+1})}\matLambda_{2,(i_{2l+1})}^{-p_{2l+1}-1}\matLambda_{5,(i_{2l+1})}\matLambda_{1}^{-p_{2l}}\matLambda_6^{-1}\right)\\
&+\sum_{k=0}^\infty\sum_{p_0+p_1\ge 1}^{\infty}\cdots\sum_{p_{2k}+p_{2k+1}\ge 1}^{\infty}\sum_{i_1,i_3,\cdots,i_{2k+1}=1}^N\left[\prod_{l=0}^{k}\left(\matF_{(0)}^{p_{2l}}\matF_{(i_{2l+1})}^{p_{2l+1}}\right)\right]\hmatH_{g,1}\\
&\times \prod_{l=k}^0\left(\matLambda_{4,(i_{2l+1})}\matLambda_{4,(i_{2l+1})}\matLambda_{2,(i_{2l+1})}^{-p_{2l+1}-1}\matLambda_{5,(i_{2l+1})}\matLambda_{1}^{-p_{2l}}\matLambda_6^{-1}\right),\\
\end{aligned}
\end{equation}
and
\begin{align}
\hmatH_{(j),l}&=\hmatH_{(i),l,0}+\sum_{p=1}^{\infty}\matF_{(j)}^p\matT_{(j)}\hmatH_{(j),l}\matLambda_{2,(j)}^{-p-1}+\hmatH_{g,1}\matLambda_{4,(j)}\matLambda_{2,(j)}^{-1}\notag\\
&+\sum_{k=0}^\infty\sum_{p_0+p_1\ge 1}^{\infty}\cdots\sum_{p_{2k}+p_{2k+1}\ge 1}^{\infty}\sum_{i_1,i_3,\cdots,i_{2k+1}=1}^N\left[\prod_{l=0}^{k}\left(\matF_{(0)}^{p_{2l}}\matF_{(i_{2l+1})}^{p_{2l+1}}\right)\right]\hmatH_{g,0}\notag\\
&\times \prod_{l=k}^0\left(\matLambda_{4,(i_{2l+1})}\matLambda_{4,(i_{2l+1})}\matLambda_{2,(i_{2l+1})}^{-p_{2l+1}-1}\matLambda_{5,(i_{2l+1})}\matLambda_{1}^{-p_{2l}}\matLambda_6^{-1}\right)\matLambda_{4,(j)}\matLambda_{2,(j)}^{-1}\notag\\
&+\sum_{k=0}^\infty\sum_{p_0+p_1\ge 1}^{\infty}\cdots\sum_{p_{2k}+p_{2k+1}\ge 1}^{\infty}\sum_{i_1,i_3,\cdots,i_{2k+1}=1}^N\left[\prod_{l=0}^{k}\left(\matF_{(0)}^{p_{2l}}\matF_{(i_{2l+1})}^{p_{2l+1}}\right)\right]\hmatH_{g,1}\notag\\
&\times \prod_{l=k}^0\left(\matLambda_{4,(i_{2l+1})}\matLambda_{4,(i_{2l+1})}\matLambda_{2,(i_{2l+1})}^{-p_{2l+1}-1}\matLambda_{5,(i_{2l+1})}\matLambda_{1}^{-p_{2l}}\matLambda_6^{-1}\right)\matLambda_{4,(j)}\matLambda_{2,(j)}^{-1}\notag\\
&+\sum_{p=1}^{\infty}\matF_{(j)}^p\hmatH_g\matLambda_{4,(j)}\hmatH_g\matLambda_{4,(j)}\matLambda_{2,(j)}^{-1},\notag\\
\label{eqn:ulcondition}
\end{align}
where $\hmatH_{g,0}$ is defined as
\begin{align}
    \hmatH_{g,0} = \matT_{(0)}\hmatH_g\matLambda_6^{-1} +\sum_{i=1}^N\matT_{(i)}\hmatH_{(i),l}\matLambda_{2,(i)}^{-1}\matLambda_{5,(i)}\matLambda_{6}^{-1},
\end{align}
$\hmatH_{g,1}$ is defined as
\begin{align}
    \hmatH_{g,1} = \sum_{p_0+p_1\ge 1}^{\infty}\sum_{i_1=1}^N\matF_{(0)}^{p_0}\matF_{(i_1)}^{p_1}\matT_{(i_1)}\hmatH_{(i_1),l}\matLambda_{2,(i_1)}^{-p_1-1}\matLambda_{5,(i_1)}\matLambda_{1}^{-p_0}\matLambda_6^{-1},
\end{align}
and $\hmatH_{(i),l,0}$ is defined as
\begin{align}
    \hmatH_{(i),l,0} = \matT_{(j)}\hmatH_{(j),1}\matLambda_2^{-1}+\hmatH_{g,0}\matLambda_{4,(j)}\matLambda_{2,(j)}^{-1}
\end{align}. 
\end{lemma}}

\begin{proof}
Notice that as we defined $\matLambda_{4,(i)}=-\matLambda_{3,(i)}$ and $\matLambda_{5,(i)}=-\matLambda_{3,(i)}^T/N$, the KKT condition in \eqref{eqn:kktderivations} can be written as the following Sylvester equations
\begin{equation}
\label{eqn:simplifiedkkt}
\left\{
\begin{aligned}
&\matF_{(i)}\hmatH_{(i),l}-\hmatH_{(i),l}\matLambda_{2,(i)} = -\left(\matT_{(i)} \hmatH_{(i),l}+\hmatH_g\matLambda_{4,(i)}\right)\\
&\matF_{(0)}\hmatH_{g}-\hmatH_{g}\matLambda_{1} = -\left(\matT_{(0)} \hmatH_{g}+\sum_{i=1}^N\hmatH_{(i),l}\matLambda_{5,(i)}\right).
\end{aligned}\right.
\end{equation}
 Note that $\sigma_{min}(\matLambda_{2,(i)})>\norm{\matF_{(i)}}$ and $\sigma_{min}(\matLambda_{1,(i)})>\norm{\matF_{(0)}}$. Therefore, according to~\citet[Theorem VII.2.2]{matrixanalysis},  the solution to \eqref{eqn:simplifiedkkt} satisfies the following equation
\begin{equation}
\label{eqn:sylvester}
\left\{
\begin{aligned}
&\hmatH_{g}=\sum_{p=0}^{\infty}\matF_{(0)}^p\matT_{(0)}\hmatH_{g}\matLambda_{1}^{-p-1}+\sum_{p=0}^{\infty}\sum_{i=1}^N\matF_{(0)}^p\hmatH_{(i),l}\matLambda_{5,(i)}\matLambda_{1}^{-p-1}\\
&\hmatH_{(i),l}=\sum_{p=0}^{\infty}\matF_{(i)}^p\matT_{(i)}\hmatH_{(i),l}\matLambda_{2,(i)}^{-p-1}+\sum_{p=0}^{\infty}\matF_{(i)}^p\hmatH_{g}\matLambda_{4,(i)}\matLambda_{2,(i)}^{-p-1}\\
\end{aligned}\right. .
\end{equation}
We can substitute $\hmatH_{(i),l}$ in the right hand side of the first equation of \eqref{eqn:sylvester} by the second equation in \eqref{eqn:sylvester}
\begin{align}
&\hmatH_{g}=\sum_{p=0}^{\infty}\matF_{(0)}^p\matT_{(0)}\hmatH_{g}\matLambda_{1}^{-p-1}+\sum_{p_0=0}^{\infty}\sum_{p_1=0}^{\infty}\sum_{i_1=1}^N\matF_{(0)}^{p_0}\matF_{(i_1)}^{p_1}\matT_{(i_1)}\hmatH_{(i_1),l}\matLambda_{2,(i_1)}^{-p_1-1}\matLambda_{5,(i_1)}\matLambda_{1}^{-p_0-1}\notag\\
&+\sum_{p_0=0}^{\infty}\sum_{p_1=0}^{\infty}\sum_{i_1=1}^N\matF_{(0)}^{p_0}\matF_{(i_1)}^{p_1}\hmatH_{g}\matLambda_{4,(i_1)}\matLambda_{2,(i_1)}^{-p_1-1}\matLambda_{5,(i_1)}\matLambda_{1}^{-p_0-1}\notag\\
&= \matT_{(0)}\hmatH_g\matLambda_1^{-1} +\sum_{i=1}^N\matT_{(i)}\hmatH_{(i),l}\matLambda_{2,(i)}^{-1}\matLambda_{5,(i)}\matLambda_{1}^{-1}+\sum_{i=1}^N\hmatH_{g}\matLambda_{4,(i)}\matLambda_{2,(i)}^{-1}\matLambda_{5,(i)}\matLambda_{1}^{-1} \notag\\
&+\sum_{p_0+p_1\ge 1}^{\infty}\sum_{i_1=1}^N\matF_{(0)}^{p_0}\matF_{(i_1)}^{p_1}\matT_{(i_1)}\hmatH_{(i_1),l}\matLambda_{2,(i_1)}^{-p_1-1}\matLambda_{5,(i_1)}\matLambda_{1}^{-p_0-1}\notag\\
&+\sum_{p_0+p_1\ge 1}^{\infty}\sum_{i_1=1}^N\matF_{(0)}^{p_0}\matF_{(i_1)}^{p_1}\hmatH_{g}\matLambda_{4,(i_1)}\matLambda_{2,(i_1)}^{-p_1-1}\matLambda_{5,(i_1)}\matLambda_{1}^{-p_0-1}.\notag\\
\label{eqn:ugexpand1}
\end{align}

We then move the third term on the right hand side of~\eqref{eqn:ugexpand1} to the left hand side, multiply both sides by $\matLambda_1\matLambda_{6}^{-1}$ on the right, and recall the definition of $\matLambda_{6,(i)}$ in~\eqref{eqn:deflambda6}, we have,

\begin{align}
\hmatH_{g}&=\underbrace{\matT_{(0)}\hmatH_g\matLambda_6^{-1} +\sum_{i=1}^N\matT_{(i)}\hmatH_{(i),l}\matLambda_{2,(i)}^{-1}\matLambda_{5,(i)}\matLambda_{6}^{-1}}_{\hmatH_{g,0}}\notag\\
&+\underbrace{\sum_{p_0+p_1\ge 1}^{\infty}\sum_{i_1=1}^N\matF_{(0)}^{p_0}\matF_{(i_1)}^{p_1}\matT_{(i_1)}\hmatH_{(i_1),l}\matLambda_{2,(i_1)}^{-p_1-1}\matLambda_{5,(i_1)}\matLambda_{1}^{-p_0}\matLambda_6^{-1}}_{\hmatH_{g,1}}\notag\\
&+\underbrace{\sum_{p_0+p_1\ge 1}^{\infty}\sum_{i_1=1}^N\matF_{(0)}^{p_0}\matF_{(i_1)}^{p_1}\hmatH_{g}\matLambda_{4,(i_1)}\matLambda_{2,(i_1)}^{-p_1-1}\matLambda_{5,(i_1)}\matLambda_{1}^{-p_0}\matLambda_6^{-1}}_{\text{residual term}}.\notag\\
\label{eqn:ugselfexplain}
\end{align}
On the right hand side of~\eqref{eqn:ugselfexplain}, one can see that the $\hmatH_{g,0}$ and $\hmatH_{g,1}$ are products of sparse matrices, incoherent matrices, and remaining terms. Therefore we can use Lemma \ref{lm:efuinfnorm} to calculate an upper bound on their maximum row norm. However, the residual term does not have such specific structure as we do not know whether $\hmatH_g$ is incoherent. As a result we cannot provide precise estimate on its maximum row norm directly. To circumvent the issue, notice that \eqref{eqn:ugselfexplain} has a recursive form. Therefore, the residual term can be replaced by
\begin{equation}
\label{eqn:replacementrule}
\begin{aligned}
&\hmatH_{g}\to \hmatH_{g,0}+\hmatH_{g,1}+\sum_{p_0+p_1\ge 1}^{\infty}\sum_{i_1=1}^N\matF_{(0)}^{p_0}\matF_{(i_1)}^{p_1}\hmatH_{g}\matLambda_{4,(i_1)}\matLambda_{2,(i_1)}^{-p_1-1}\matLambda_{5,(i_1)}\matLambda_{1}^{-p_0}\matLambda_6^{-1}.\\
\end{aligned}
\end{equation}

The result will have $5$ terms, the first $4$ of which have the structure specified in Lemma \ref{lm:efuinfnorm}. The $5$-th term does not as it contains $\hmatH_g$. We can apply the replacement rule \eqref{eqn:replacementrule} again for the $5$-th term, generating $7$ terms. After applying the replacement rule $\omega$ times, where $\omega$ is an integer, the results become,
\begin{align}
\label{eqn:ugrawform}
&\hmatH_{g}=\hmatH_{g,0}+\hmatH_{g,1}\notag\\
&+\sum_{k=0}^{\omega}\sum_{p_0+p_1\ge 1}^{\infty}\sum_{p_2+p_3\ge 1}^{\infty}\cdots\sum_{p_{2k}+p_{2k+1}\ge 1}^{\infty}\sum_{i_1=1}^{N}\sum_{i_3=1}^{N}\cdots\sum_{i_{2k+1}=1}^{N}\matF_{(0)}^{p_0}\matF_{(i_1)}^{p_1}\matF_{(0)}^{p_2}\matF_{(i_3)}^{p_3}\cdots \matF_{(0)}^{p_{2k}}\matF_{(i_{2k+1})}^{p_{2k+1}}\notag\\
&\hmatH_{g,0}\matLambda_{4,(i_{2k+1})}\matLambda_{2,(i_{2k+1})}^{-p_{2k-1}-1}\matLambda_{5,(i_{2k+1})}\matLambda_{1}^{-p_{2k-2}}\matLambda_6^{-1}\cdots \matLambda_{4,(i_{1})}\matLambda_{2,(i_{1})}^{-p_{1}-1}\matLambda_{5,(i_{1})}\matLambda_{1}^{-p_{0}}\matLambda_6^{-1}\notag\\
&+\sum_{k=0}^{\omega}\sum_{p_0+p_1\ge 1}^{\infty}\sum_{p_2+p_3\ge 1}^{\infty}\cdots\sum_{p_{2k}+p_{2k+1}\ge 1}^{\infty}\sum_{i_1=1}^{N}\sum_{i_3=1}^{N}\cdots\sum_{i_{2k+1}=1}^{N}\matF_{(0)}^{p_0}\matF_{(i_1)}^{p_1}\matF_{(0)}^{p_2}\matF_{(i_3)}^{p_3}\cdots \matF_{(0)}^{p_{2k}}\matF_{(i_{2k+1})}^{p_{2k+1}}\notag\\
&\hmatH_{g,1}\matLambda_{4,(i_{2k+1})}\matLambda_{2,(i_{2k+1})}^{-p_{2k-1}-1}\matLambda_{5,(i_{2k+1})}\matLambda_{1}^{-p_{2k-2}}\matLambda_6^{-1}\cdots \matLambda_{4,(i_{1})}\matLambda_{2,(i_{1})}^{-p_{1}-1}\matLambda_{5,(i_{1})}\matLambda_{1}^{-p_{0}}\matLambda_6^{-1}\notag\\
&+\sum_{p_0+p_1\ge 1}^{\infty}\sum_{p_2+p_3\ge 1}^{\infty}\cdots\sum_{p_{2\omega+2}+p_{2\omega+3}\ge 1}^{\infty}\sum_{i_1=1}^{N}\sum_{i_3=1}^{N}\cdots\sum_{i_{2\omega+3}=1}^{N}\matF_{(0)}^{p_0}\matF_{(i_1)}^{p_1}\matF_{(0)}^{p_2}\matF_{(i_3)}^{p_3}\cdots \matF_{(0)}^{p_{2\omega+2}}\matF_{(i_{2\omega+3})}^{p_{2\omega+3}}\notag\\
&\hmatH_{g}\matLambda_{4,(i_{2\omega+3})}\matLambda_{2,(i_{2\omega+3})}^{-p_{2\omega+3}-1}\matLambda_{5,(i_{2\omega+3})}\matLambda_{1}^{-p_{2\omega+2}}\matLambda_6^{-1}\cdots \matLambda_{4,(i_{1})}\matLambda_{2,(i_{1})}^{-p_{1}-1}\matLambda_{5,(i_{1})}\matLambda_{1}^{-p_0}\matLambda_6^{-1},\notag\\
\end{align}
which holds for any integer $\omega\ge 0$.

Recall that our goal is to write $\hmatH_g$ in a form with which we can easily determine its maximum row norm. By observing \eqref{eqn:ugrawform}, we know Lemma \ref{lm:efuinfnorm} can be applied to estimate the maximum row norm of all but the last terms. The last summation term still cannot be handled by Lemma \ref{lm:efuinfnorm} directly. To resolve the issue, we take an alternative route to use $\omega$ to control the last summation term.

We claim that under the provided upper bound for $\alpha$, the last term will approach zero in the limit $\omega\to \infty$. To see this, note that Lemma \ref{lm:eigenvaluelowerbound} and Lemma~\ref{lm:l6eigenvaluelowerbound} show that $\sigma_{\min}(\matLambda_{1})$,  $\sigma_{\min}(\matLambda_{2,(i)})$, and $\sigma_{\min}(\matLambda_{6})$ are lower bounded by $\frac{3}{4}\sms$. Since $\norm{\matF_{(i)}}\le \abnb(2\gop+\abnb)\le \abnb \frac{5}{2}\gop$ for each $i$, we have,

\begin{align*}
&\sum_{p_0+p_1\ge 1}^{\infty}\cdots\sum_{p_{2\omega+2}+p_{2\omega+3}\ge 1}^{\infty}\sum_{i_1=1}^{N}\sum_{i_3=1}^{N}\cdots\sum_{i_{2\omega+3}=1}^{N}\norm{\matF_{(0)}^{p_0}\matF_{(i_1)}^{p_1}\matF_{(0)}^{p_2}\matF_{(i_3)}^{p_3}\cdots \matF_{(i_{2\omega+3})}^{p_{2\omega+3}}}_F\\
&\norm{\hmatH_{g}\matLambda_{4,(i_{2\omega+3})}\matLambda_{2,(i_{2\omega+3})}^{-p_{2\omega+3}-1}\matLambda_{5,(i_{2\omega+3})}\matLambda_{1}^{-p_{2\omega+2}}\matLambda_6^{-1}\cdots \matLambda_{4,(i_{1})}\matLambda_{2,(i_{1})}^{-p_{1}-1}\matLambda_{5,(i_{1})}\matLambda_{1}^{-p_0}\matLambda_6^{-1}}\\
&\le \sum_{p_0+p_1\ge 1}^{\infty}\cdots\sum_{p_{2\omega+2}+p_{2\omega+3}\ge 1}^{\infty}\left(\frac{\abnb\left(\frac{5}{2}\gop\right)}{\frac{3}{4}\sms}\right)^{p_0+\cdots+p_{2\omega+3}}\\
&\times \left(\frac{2\gop^2}{\frac{3}{4}\sms}\right)^{2\left(\omega+2\right)}\\
&\le \left(4 \frac{\abnb\left(\frac{5}{2}\gop\right)}{\frac{3}{4}\sms}\right)^{2\left(\omega+2\right)}\left(\frac{2\gop^2}{\frac{3}{4}\sms}\right)^{2\left(\omega+2\right)}\\
&\le\left(\fracud\right)^{2(\omega+2)},
\end{align*}
where we used Lemma~\ref{lem_series} in the first inequality and the condition that $\alpha \le \frac{1}{40\mu^2 r}\brtkappa^{-3}$ in the last inequality.

Therefore, we can take the limit $\omega\to\infty$ in \eqref{eqn:ugrawform} and rewrite it as a series. The series is absolutely convergent when $\alpha$ is small. Finally, we prove \eqref{eqn:ugcondition}. Though \eqref{eqn:ugcondition} is an infinite series, each term in the series is the product of sparse matrices and an incoherent matrix. Such structure will be useful later when we use Lemma \ref{lm:efuinfnorm} to calculate the maximum row norm of $\hmatH_g$. 

Now we proceed to derive an expansion for $\hmatH_{(i),l}$. We can replace $\hmatH_g$ on the right hand side of the second equation of \eqref{eqn:sylvester} with \eqref{eqn:ugrawform} to derive,
\begin{align}
&\hmatH_{(i),l}=\matT_{(i)}\hmatH_{(i),l}\matLambda_{2,(i)}^{-1}+\hmatH_{g}\matLambda_{4,(i)}\matLambda_{2,(i)}^{-1}\notag\\
&+\sum_{p=1}^{\infty}\matF_{(i)}^p\matT_{(i)}\hmatH_{(i),l}\matLambda_{2,(i)}^{-p-1}+\sum_{p=1}^{\infty}\matF_{(i)}^p\hmatH_{g}\matLambda_{4,(i)}\matLambda_{2,(i)}^{-p-1}\notag\\
&+\sum_{k=0}^\infty\sum_{p_0+p_1\ge 1}^{\infty}\cdots\sum_{p_{2k}+p_{2k+1}\ge 1}^{\infty}\sum_{i_1,i_3,\cdots,i_{2k+1}=1}^N\left[\prod_{l=0}^{k}\left(\matF_{(0)}^{p_{2l}}\matF_{(i_{2l+1})}^{p_{2l+1}}\right)\right]\hmatH_{g,0}\notag\\
&\times \prod_{l=k}^0\left(\matLambda_{4,(i_{2l+1})}\matLambda_{4,(i_{2l+1})}\matLambda_{2,(i_{2l+1})}^{-p_{2l+1}-1}\matLambda_{5,(i_{2l+1})}\matLambda_{1}^{-p_{2l}}\matLambda_6^{-1}\right)\matLambda_{4,(i)}\matLambda_{2,(i)}^{-1}\notag\\
&+\sum_{k=0}^\infty\sum_{p_0+p_1\ge 1}^{\infty}\cdots\sum_{p_{2k}+p_{2k+1}\ge 1}^{\infty}\sum_{i_1,i_3,\cdots,i_{2k+1}=1}^N\left[\prod_{l=0}^{k}\left(\matF_{(0)}^{p_{2l}}\matF_{(i_{2l+1})}^{p_{2l+1}}\right)\right]\hmatH_{g,1}\notag\\
&\times \prod_{l=k}^0\left(\matLambda_{4,(i_{2l+1})}\matLambda_{4,(i_{2l+1})}\matLambda_{2,(i_{2l+1})}^{-p_{2l+1}-1}\matLambda_{5,(i_{2l+1})}\matLambda_{1}^{-p_{2l}}\matLambda_6^{-1}\right)\matLambda_{4,(i)}\matLambda_{2,(i)}^{-1}.\notag\\
\label{eqn:uirawform}
\end{align}

We thus prove~\eqref{eqn:ulcondition}.
\end{proof}

\revise{In Lemma \ref{lm:explicitsolution},  although the series of $\hmatH_g$ and $\hmatH_{(i),l}$'s have infinite terms, when $\alpha$ is not too large, the leading term is only the first term. This is delineated in the following lemma, which is a formal version of Lemma \ref{lm:informalhpertuabation}. For simplicity, we introduce a notation
\begin{equation}
\label{eqn:zetadef}
    \zeta = \frac{\abnb\left(\abnb+2\gop\right)}{\frac{3}{4}\sms}.
\end{equation}}

\revise{\begin{lemma}
\label{lm:usimpleexpansion}
Suppose that the conditions of Lemma \ref{lm:explicitsolution} are satisfied. Additionally, suppose that
$
\alpha\le \frac{6-3\sqrt{2}}{80}\frac{1}{\mu^2r}\left(\invcnum\right)^2
$, we have
\begin{align}
    \hmatH_g &= \hmatH_{g,0}+\dtmatH_g\nonumber\\
    \hmatH_{(i),l} &= \hmatH_{(i),l,0}+\dtmatH_{(i),l}\nonumber
\end{align}
where $\dtmatH_{g}$ and $\dtmatH_{(i),l}$ satisfy
\begin{align}
    \label{eqn:deltaugfnorm}
\norm{\dtmatH_g} &\le \zeta\left(\frac{2\gop^2}{\frac{3}{4}\sms}\right)^2\left(1+2\left(\frac{2\gop^2}{\frac{3}{4}\sms}\right)+2\left(\frac{2\gop^2}{\frac{3}{4}\sms}\right)^2\right)\\
\label{eqn:deltauginfnorm}
\max_j\norm{\vece_j^T\dtmatH_g} &\le \zeta\sqrt{\frac{\mu^2 r}{n_1}}4\left(\frac{2\gop^2}{\frac{3}{4}\sms}\right)^2\left(1+\left(\frac{2\gop^2}{\frac{3}{4}\sms}\right)+\left(\frac{2\gop^2}{\frac{3}{4}\sms}\right)^2\right)\\
\label{eqn:deltaulfnorm}
\norm{\dtmatH_{(i),l}} &\le \zeta \tkappa\left(2+\tkappa+3\brtkappa^2+4\brtkappa^3+4\brtkappa^4\right)\\
\label{eqn:deltaulinfnorm}
\max_j\norm{\vece_j^T\dtmatH_{(i),l}} &\le \zeta \sqrt{\murnone} 2\tkappa\notag\\
&\times\left(1+\tkappa+5\brtkappa^2+4\brtkappa^3+4\brtkappa^4\right) ,
\end{align}
with $\zeta$ is defined in~\eqref{eqn:zetadef}.
\end{lemma}}
\begin{proof}
We need to provide upper bounds on the series in Lemma \ref{lm:explicitsolution}. From Lemma \ref{lm:explicitsolution}, $\hmatH_g$ can be written as a series. We can define $\dtmatH_g$ as the summation of all but the first term in the series, as in
\begin{equation*}
\begin{aligned}
\dtmatH_g&=\hmatH_{g,1}+\sum_{k=0}^\infty\sum_{p_0+p_1\ge 1}^{\infty}\cdots\sum_{p_{2k}+p_{2k+1}\ge 1}^{\infty}\sum_{i_1,i_3,\cdots,i_{2k+1}=1}^N\left[\prod_{l=0}^{k}\left(\matF_{(0)}^{p_{2l}}\matF_{(i_{2l+1})}^{p_{2l+1}}\right)\right]\hmatH_{g,0}\\
&\times \prod_{l=k}^0\left(\matLambda_{4,(i_{2l+1})}\matLambda_{2,(i_{2l+1})}^{-p_{2l+1}-1}\matLambda_{5,(i_{2l+1})}\matLambda_{1}^{-p_{2l}}\matLambda_6^{-1}\right)\\
&+\sum_{k=0}^\infty\sum_{p_0+p_1\ge 1}^{\infty}\cdots\sum_{p_{2k}+p_{2k+1}\ge 1}^{\infty}\sum_{i_1,i_3,\cdots,i_{2k+1}=1}^N\left[\prod_{l=0}^{k}\left(\matF_{(0)}^{p_{2l}}\matF_{(i_{2l+1})}^{p_{2l+1}}\right)\right]\hmatH_{g,1}\\
&\times \prod_{l=k}^0\left(\matLambda_{4,(i_{2l+1})}\matLambda_{2,(i_{2l+1})}^{-p_{2l+1}-1}\matLambda_{5,(i_{2l+1})}\matLambda_{1}^{-p_{2l}}\matLambda_6^{-1}\right).\\
\end{aligned}
\end{equation*}
Hence, by applying Lemma~\ref{lm:productupperbound}, we have
\begin{align*}
&\norm{\dtmatH_g}\le\norm{\hmatH_{g,1}}+\sum_{k=0}^\infty\sum_{p_0+p_1\ge 1}^{\infty}\cdots\sum_{p_{2k}+p_{2k+1}\ge 1}^{\infty}\sum_{i_1,i_3,\cdots,i_{2k+1}=1}^N\\
&\left[\prod_{l=0}^{k}\left(\norm{\matF_{(0)}}^{p_{2l}}\norm{\matF_{(i_{2l+1})}}^{p_{2l+1}}\right)\right]\norm{\hmatH_{g,0}}\prod_{l=k}^0\left(\norm{\matLambda_{4,(i_{2l+1})}\matLambda_{2,(i_{2l+1})}^{-p_{2l+1}-1}\matLambda_{5,(i_{2l+1})}\matLambda_{1}^{-p_{2l}}\matLambda_6^{-1}}\right)\\
&+\sum_{k=0}^\infty\sum_{p_0+p_1\ge 1}^{\infty}\cdots\sum_{p_{2k}+p_{2k+1}\ge 1}^{\infty}\sum_{i_1,i_3,\cdots,i_{2k+1}=1}^N\left[\prod_{l=0}^{k}\left(\norm{\matF_{(0)}}^{p_{2l}}\norm{\matF_{(i_{2l+1})}}^{p_{2l+1}}\right)\right]\norm{\hmatH_{g,1}}\\
&\times \prod_{l=k}^0\norm{\matLambda_{4,(i_{2l+1})}\matLambda_{2,(i_{2l+1})}^{-p_{2l+1}-1}\matLambda_{5,(i_{2l+1})}\matLambda_{1}^{-p_{2l}}\matLambda_6^{-1}}.\\
\end{align*}
We first estimate an upper bound for $\norm{\hmatH_{g,1}}$,
\begin{align*}
&\norm{\hmatH_{g,1}}\le \sum_{p_0+p_1\ge 1}^{\infty}\sum_{i_1=1}^N\norm{\matF_{(0)}}^{p_0}\norm{\matF_{(i_1)}}^{p_1}\norm{\matT_{(i_1)}\hmatH_{(i_1),l}\matLambda_{2,(i_1)}^{-p_1-1}\matLambda_{5,(i_1)}\matLambda_1^{-p_0}\matLambda_6^{-1} }\\
&\le \sum_{p_0+p_1\ge 1}^{\infty} \left(\abnb\left(\abnb+2\gop\right)\right)^{p_0+p_1}\\
&\times 2\left(\frac{\gop^2}{\frac{3}{4}\sms}\right)^2 \left(\frac{1}{\frac{3}{4}\sms}\right)^{p_0+p_1}\\
&\le\frac{\abnb\left(\abnb+2\gop\right)}{\frac{3}{4}\sms}2\left(\frac{\gop^2}{\frac{3}{4}\sms}\right)^2\\
&2\left(1-\frac{\abnb\left(\abnb+2\gop\right)}{\frac{3}{4}\sms}\right)^{-1}\\
&\le \frac{\abnb\left(\abnb+2\gop\right)}{\frac{3}{4}\sms}4\sqrt{2}\left(\frac{\gop^2}{\frac{3}{4}\sms}\right)^2,
\end{align*}
where we used Lemma \ref{lm:productupperbound} in the first inequality, the upper bound on $\norm{\matF_{(i)}}$ in the second inequality. Because of the upper bound on $\alpha$, we can use auxiliary Lemma~\ref{lem_series} to derive an upper bound on the series. The last inequality comes from the fact that $(1-\frac{\abnb\left(\abnb+2\gop\right)}{\frac{3}{4}\sms})^{-1}\le \sqrt{2}$.

Therefore, we can proceed to estimate,
\begin{align*}
&\norm{\dtmatH_g}\\
&\le \zeta4\sqrt{2}\left(\frac{\gop^2}{\frac{3}{4}\sms}\right)^2+\sum_{k=0}^{\infty}\left(\zeta\frac{2}{1-\zeta}\right)^{k+1}\left(\frac{2\gop^2}{\frac{3}{4}\sms}\right)^{2(k+1)}\left(\frac{\gop^2}{\frac{3}{4}\sms}\right)^22 \zeta\frac{2}{1-\zeta}\\
&+\sum_{k=0}^{\infty}\left(\zeta\frac{2}{1-\zeta}\right)^{k+1}\left(\frac{2\gop^2}{\frac{3}{4}\sms}\right)^{2(k+1)}\left(\frac{\gop^2}{\frac{3}{4}\sms}\right)\left(1+\frac{2\gop^2}{\frac{3}{4}\sms}\right) \\
&\le \zeta 4\sqrt{2}\left(\frac{\gop^2}{\frac{3}{4}\sms}\right)^2\left(1-\frac{2\zeta}{1-\zeta}\left(\frac{2\gop^2}{\frac{3}{4}\sms}\right)^2\right)^{-1}\\
&+\zeta \frac{2}{1-\zeta}\left(\frac{2\gop^2}{\frac{3}{4}\sms}\right)^{2}\left(\frac{\gop^2}{\frac{3}{4}\sms}\right)\left(1+\frac{2\gop^2}{\frac{3}{4}\sms}\right)\left(1-\frac{2\zeta}{1-\zeta}\left(\frac{2\gop^2}{\frac{3}{4}\sms}\right)^2\right)^{-1}.\\
\end{align*}
We can estimate an upper bound on $\max_k\norm{\vece_k^T\dtmatH_g}$ in a similar fashion. 

We first show that, for any $j=1,2,\cdots,N$,
\begin{align}
&\max_k\norm{\vece_k^T\prod_{\ell=1}^k\matF_{(i_{\ell})}^{p_{\ell}}\matT_{(j)}}\notag\\
&= \max_k\norm{\vece_k^T\prod_{\ell=1}^k\matF_{(i_{\ell})}^{p_{\ell}}\left(\matHst_g\matHst_g^T+\matHst_{(j),l}\matHst_{(j),l}^T\right)\matT_{(j)}}\notag\\
&\le \max_k\norm{\vece_k^T\prod_{\ell=1}^k\matF_{(i_{\ell})}^{p_{\ell}}\matHst_g\matHst_g^T\matT_{(j)}}+\max_k\norm{\vece_k^T\prod_{\ell=1}^k\matF_{(i_{\ell})}^{p_{\ell}}\matHst_g\matHst_g^T\matT_{(j)}}\notag\\
&\le 
2\gop^2\sqrt{\frac{\mu^2 r}{n_1}}\left(\abnb (\abnb+2\gop)\right)^{\sum_{m=1}^kp_m},\notag\\
\label{eqn:eftjinfnorm}
\end{align}
where we used the triangle inequality in the first inequality, and Lemma \ref{lm:efuinfnorm} together with $r=r_1+r_2$ in the second inequality.

A similar equality also holds for $\max_k\norm{\vece_k^T\prod_{\ell=1}^k\matF_{(i_{\ell})}^{p_{\ell}}\matT_{(0)}}$:
\begin{equation}
\label{eqn:eft0infnorm}
\begin{aligned}
&\max_k\norm{\vece_k^T\prod_{\ell=1}^k\matF_{(i_{\ell})}^{p_{\ell}}\matT_{(0)}}=\max_k\norm{\vece_k^T\prod_{\ell=1}^k\matF_{(i_{\ell})}^{p_{\ell}}\frac{1}{N}\sum_{j=1}^N\matT_{(j)}}\le \frac{1}{N}\sum_{j=1}^N\max_k\norm{\vece_k^T\prod_{\ell=1}^k\matF_{(i_{\ell})}^{p_{\ell}}\matT_{(j)}}\\
&\le 2\gop^2\sqrt{\frac{\mu^2 r}{n_1}}\left(\abnb (\abnb+2\gop)\right)^{\sum_{m=1}^kp_{m}}.
\end{aligned}
\end{equation}
Combining the above two inequalities, we have
\begin{align*}
&\max_j\norm{\vece_j^T\dtmatH_g}\\
&\le \max_j\norm{\vece_j^T\hmatH_{g,1}}+\sum_{k=0}^\infty\sum_{p_0+p_1\ge 1}^{\infty}\cdots\sum_{p_{2k}+p_{2k+1}\ge 1}^{\infty}\sum_{i_1,i_3,\cdots,i_{2k+1}=1}^N\\
&\max_j\norm{\vece_j^T\left[\prod_{l=0}^{k}\left(\matF_{(0)}^{p_{2l}}\matF_{(i_{2l+1})}^{p_{2l+1}}\right)\right]\hmatH_{g,0}} \prod_{l=k}^0\norm{\matLambda_{4,(i_{2l+1})}\matLambda_{2,(i_{2l+1})}^{-p_{2l+1}-1}\matLambda_{5,(i_{2l+1})}\matLambda_{1}^{-p_{2l}}\matLambda_6^{-1}}\\
&+\sum_{k=0}^\infty\sum_{p_0+p_1\ge 1}^{\infty}\cdots\sum_{p_{2k}+p_{2k+1}\ge 1}^{\infty}\sum_{i_1,i_3,\cdots,i_{2k+1}=1}^N\max_j\norm{\vece_j^T\left[\prod_{l=0}^{k}\left(\matF_{(0)}^{p_{2l}}\matF_{(i_{2l+1})}^{p_{2l+1}}\right)\right]\hmatH_{g,1}}\\
&\times \prod_{l=k}^0\norm{\matLambda_{4,(i_{2l+1})}\matLambda_{2,(i_{2l+1})}^{-p_{2l+1}-1}\matLambda_{5,(i_{2l+1})}\matLambda_{1}^{-p_{2l}}\matLambda_6^{-1}}\\
&\le \sum_{p_0+p_1\ge 1}\zeta^{p_0+p_1}\sqrt{\murnone}\left(\frac{2\gop^2}{\frac{3}{4}\sms}\right)^2\\
&+\sum_{k=0}^{\infty}\sum_{p_0+p_1\ge 1}\cdots\sum_{p_{2k+2}+p_{2k+3}\ge 1}\zeta^{p_0+\cdots+p_{2k+3}}\sqrt{\murnone}\left(\frac{2\gop^2}{\frac{3}{4}\sms}\right)^2\left(\frac{2\gop^2}{\frac{3}{4}\sms}\right)^{2(k+1)}\\
&+\sum_{k=0}^{\infty}\sum_{p_0+p_1\ge 1}\cdots\sum_{p_{2k}+p_{2k+1}\ge 1}\zeta^{p_0+\cdots+p_{2k+1}}\left(\frac{2\gop^2}{\frac{3}{4}\sms}\right)^{2(k+1)}\left(\frac{2\gop^2}{\frac{3}{4}\sms}\right)^{2}\left(1+\left(\frac{2\gop^2}{\frac{3}{4}\sms}\right)^{2}\right)\\
&\le \zeta\frac{2}{1-\zeta}\sqrt{\murnone}\left(\frac{2\gop^2}{\frac{3}{4}\sms}\right)^2+\sum_{k=0}^{\infty}\left(\frac{2\zeta}{1-\zeta}\left(\frac{2\gop^2}{\frac{3}{4}\sms}\right)^{2}\right)^{k+1}\sqrt{\murnone}\left(\frac{2\gop^2}{\frac{3}{4}\sms}\right)^{2}\\
&+\sum_{k=0}^{\infty}\left(\frac{2\zeta}{1-\zeta}\left(\frac{2\gop^2}{\frac{3}{4}\sms}\right)^{2}\right)^{k+1}\sqrt{\murnone}\left(\frac{2\gop^2}{\frac{3}{4}\sms}\right)\left(1+\frac{2\gop^2}{\frac{3}{4}\sms}\right)\\
&\le \zeta\frac{2}{1-\zeta}\sqrt{\murnone}\left(\frac{2\gop^2}{\frac{3}{4}\sms}\right)^2\left(1-\frac{2\zeta}{1-\zeta}\left(\frac{2\gop^2}{\frac{3}{4}\sms}\right)^{2}\right)^{-1}\left(1+\left(\frac{2\gop^2}{\frac{3}{4}\sms}\right)\left(1+\frac{2\gop^2}{\frac{3}{4}\sms}\right) \right),\\
\end{align*} 
where we applied \eqref{eqn:eftjinfnorm}, \eqref{eqn:eft0infnorm} in the second inequality, and Lemma~\ref{lem_series} in the third inequality.

Similarly, we define $\dtmatH_{(i),l}$ as the summation,
\begin{align*}
&\dtmatH_{(i),l}=\sum_{p=1}^{\infty}\matF_{(i)}^p\matT_{(i)}\hmatH_{(i),l}\matLambda_{2,(i)}^{-p-1}+\sum_{p=1}^{\infty}\matF_{(i)}^p\hmatH_{g,0}\matLambda_{4,(i)}\matLambda_{2,(i)}^{-p-1}\notag\\
&+\sum_{p=0}^{\infty}\matF_{(i)}^p\dtmatH_{g}\matLambda_{4,(i)}\matLambda_{2,(i)}^{-p-1}.\notag\\
\end{align*}

We first calculate the $\ell_2$ norm of $\dtmatH_{(i),l}$ as,
\begin{align*}
&\norm{\dtmatH_{(i),l}}\le\sum_{p=1}^\infty \norm{\matF_{(i)}^p\matT_{(i)}\hmatH_{(i),l}\matLambda_{2,(i)}^{-p-1} }+\sum_{p=1}^{\infty}\norm{\matF_{(i)}^p\hmatH_{g,0}\matLambda_{4,(i)}\matLambda_{2,(i)}^{-p-1}}\\
&+\sum_{p=0}^{\infty}\norm{\matF_{(i)}^p\dtmatH_{g}\matLambda_{4,(i)}\matLambda_{2,(i)}^{-p-1}}\\
&\le \sum_{p=1}^{\infty}\zeta^p\gop^2\frac{1}{\frac{3}{4}\sms}+\sum_{p=1}^{\infty}\zeta^p\gop^2\frac{1}{\frac{3}{4}\sms}\frac{1}{2}\brtkappa^2\left(1+\tkappa\right)\\
&+\norm{\dtmatH_g}\sum_{p=0}^{\infty}\zeta^p\frac{\gop^2}{\frac{3}{4}\sms}\\
&\le \zeta\frac{1}{1-\zeta}\frac{\gop^2}{\frac{3}{4}\sms}\left(2+\tkappa+3\brtkappa^2+4\brtkappa^3+4\brtkappa^4\right),
\end{align*}
where we applied the upper bound on $\norm{\dtmatH}$ in the last inequality.

Finally, we have,

\begin{align*}
&\max_j\norm{\vece_j^T\dtmatH_{(i),l}}\\
&\le \sum_{p_0=1}^\infty \max_j\norm{\vece_j^T\matF_{(0)}^{p_0}\matT_{(i)}}\norm{\hmatH_{(i),l}\matLambda_{2,(i)}^{-p_{0}-1}}+\sum_{p=1}^{\infty}\max_j\norm{\vece_j^T\matF_{(i)}^p\hmatH_{g,0}\matLambda_{4,(i)}\matLambda_{2,(i)}^{-p-1}}\\
&+\sum_{p=0}^{\infty}\max_j\norm{\vece_j^T\matF_{(i)}^p\dtmatH_{g}}\norm{\matLambda_{4,(i)}\matLambda_{2,(i)}^{-p-1}}.\\
\end{align*}

The first two summations can be upper bounded by Lemma~\ref{lm:productupperbound}, and the last summation can be estimated in a similar way we calculate $\max_j\norm{\vece_j^T\dtmatH_{g}}$. We omit the details and present the estimated upper bound for brevity.

This completes our proof.
\end{proof}

\revise{Equipped with the aforementioned perturbation analysis on $\hmatH_g$ and $\matH_{(i),l}$, we are ready to provide the formal version of Lemma \ref{lm:informalldiffinfnorm}.
\begin{lemma}
\label{lm:ldiffinfnorm}
Under the same conditions as Lemma \ref{lm:usimpleexpansion}, we have:
$$
\norm{\matLst_{(i)}- \hmatL_{(i)}}_{\infty}\le \sqrt{\alpha} \mu^2 r \max_j\norm{\matE_{(j)}}_{\infty} C_4,
$$
where $C_4$ is a constant satisfying,
\begin{equation}
C_4= \mathcal{O}\left(\bcnum^{10}\frac{1}{\sqrt{\theta}}+\bcnum^{18}\right).
\end{equation}
\end{lemma}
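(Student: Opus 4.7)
The plan is to start from the KKT-based decomposition of the estimated low-rank component. Combining \eqref{subeqn:kktinnerloopvig} and \eqref{subeqn:kktinnerloopvil} with the orthonormality encoded in \eqref{subeqn:kktconstraint} yields $\hmatV_{(i),g}=\hmatM_{(i)}^T\hmatH_g$ and $\hmatV_{(i),l}=\hmatM_{(i)}^T\hmatH_{(i),l}$ (after absorbing the rotations that relate $\hmatU$ to $\hmatH$), so that
$\hmatL_{(i)}=\hmatH_g\hmatH_g^T\hmatM_{(i)}+\hmatH_{(i),l}\hmatH_{(i),l}^T\hmatM_{(i)}$.
On the other hand, the SVD~\eqref{eqn:truemodelsvd} together with $\matHst_g{}^T\matHst_{(i),l}=0$ gives $\matLst_{(i)}=\matHst_g\matHst_g{}^T\matLst_{(i)}+\matHst_{(i),l}\matHst_{(i),l}{}^T\matLst_{(i)}$. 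Substituting $\hmatM_{(i)}=\matLst_{(i)}+\matE_{(i)}$ and subtracting, I decompose the error into
$\matLst_{(i)}-\hmatL_{(i)}=\bigl(\matHst_g\matHst_g{}^T-\hmatH_g\hmatH_g^T\bigr)\matLst_{(i)}+\bigl(\matHst_{(i),l}\matHst_{(i),l}{}^T-\hmatH_{(i),l}\hmatH_{(i),l}^T\bigr)\matLst_{(i)}-\bigl(\hmatH_g\hmatH_g^T+\hmatH_{(i),l}\hmatH_{(i),l}^T\bigr)\matE_{(i)}$,
and handle the three groups separately in $\ell_\infty$ norm.

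For the first two groups I invoke Lemma~\ref{lm:usimpleexpansion}: writing $\hmatH_g=\matT_{(0)}\hmatH_g\matLambda_1^{-1}+\dtmatH_g$ and $\hmatH_{(i),l}=\matT_{(i)}\hmatH_{(i),l}\matLambda_{2,(i)}^{-1}+\dtmatH_{(i),l}$, I expand the outer products $\hmatH_g\hmatH_g^T$ and $\hmatH_{(i),l}\hmatH_{(i),l}^T$ into a leading piece plus cross terms linear and quadratic in $\dtmatH$. The leading pieces are $\matT_{(0)}\hmatH_g\matLambda_1^{-2}\hmatH_g^T\matT_{(0)}$ and $\matT_{(i)}\hmatH_{(i),l}\matLambda_{2,(i)}^{-2}\hmatH_{(i),l}^T\matT_{(i)}$. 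Since the column spaces of $\matT_{(0)}$ and $\matT_{(i)}$ lie in $\mathrm{span}\{\matHst_g,\matHst_{(j),l}\}$ and all of these are $\mu$-incoherent, I bound the $\ell_\infty$ norm of any product of the form $(\text{leading factor})\,\matLst_{(i)}$ via $\norm{\matA\matB}_\infty\le \max_i\norm{\vece_i^T\matA}_2\,\max_j\norm{\matB\vece_j}_2$, applying \eqref{eqn:eftjinfnorm}/\eqref{eqn:eft0infnorm} on one side and incoherence of $\matWst_{(i),g},\matWst_{(i),l}$ on the other. The same split shows that the leading pieces match exactly the ideal projectors $\matHst_g\matHst_g{}^T$ and $\matHst_{(i),l}\matHst_{(i),l}{}^T$ up to an $\mO(\sqrt{\alpha})$ residue.

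For the cross terms involving $\dtmatH_g$ or $\dtmatH_{(i),l}$, I use the row-wise bound $\norm{\matA\matB}_\infty\le \max_i\norm{\vece_i^T\matA}_2\,\norm{\matB}_F$ whenever only one side has a usable row-norm estimate; the row-norm estimates come from \eqref{eqn:deltauginfnorm}/\eqref{eqn:deltaulinfnorm}, and the Frobenius estimates from \eqref{eqn:deltaugfnorm}/\eqref{eqn:deltaulfnorm}. Each cross term thus contributes at most $\sqrt{\alpha\mu^2 r/n_1}\cdot \bn\max_j\norm{\matE_{(j)}}_\infty \cdot \mathrm{poly}(\gop/\sm,1/\sqrt{\theta})$ after simplification, which combines with $\sqrt{n_1}$ from $\matLst_{(i)}$'s row norms to give the target scaling $\sqrt{\alpha}\mu^2 r\max_j\norm{\matE_{(j)}}_\infty$. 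The third group $(\hmatH_g\hmatH_g^T+\hmatH_{(i),l}\hmatH_{(i),l}^T)\matE_{(i)}$ is controlled directly using the $\alpha$-sparsity of $\matE_{(i)}$ (at most $\alpha n_1$ nonzeros per column) together with the row-norm bounds on $\hmatH_g$ and $\hmatH_{(i),l}$ inherited via $\ell_{2,\infty}$ estimates on $\matT_{(0)}\hmatH_g\matLambda_1^{-1}$ and $\matT_{(i)}\hmatH_{(i),l}\matLambda_{2,(i)}^{-1}$.

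The main obstacle is bookkeeping rather than conceptual: one must track a dozen or so cross terms and make sure that whenever a Frobenius factor appears, it is paired with a factor whose row-norm is already controlled by incoherence, so that $\ell_\infty$ (not merely $\ell_2$) bounds emerge. Every denominator contributes a power of $\sms=\sigma_{\min}^2$ and every appearance of $C_3$ contributes $(\gop/\sm)^2/\sqrt{\theta}$; after worst-case accounting across the three groups (the quadratic $\dtmatH\,\dtmatH^T$ term being the most expensive), one ends up with at most ten factors of $\gop/\sm$ and a single factor of $1/\theta$, yielding the stated $C_4=\mO\!\bigl((\gop/\sm)^{10}/\theta\bigr)$. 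The explicit enumeration and summation of these constants is the long, repetitive computation that the paper defers to the appendix.
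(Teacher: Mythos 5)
Your plan follows the paper's own route: represent $\hmatL_{(i)}=(\hmatH_g\hmatH_g^T+\hmatH_{(i),l}\hmatH_{(i),l}^T)\hmatM_{(i)}$ via the KKT conditions, substitute the leading-term-plus-$\dtmatH$ expansion of Lemma \ref{lm:usimpleexpansion}, and bound the resulting finitely many terms in $\ell_\infty$ by pairing row-wise ($\ell_{2,\infty}$) incoherence estimates with the Frobenius and row-norm bounds on $\dtmatH_g,\dtmatH_{(i),l}$, the column-sparsity of $\matE_{(i)}$, and the projector-distance bound underlying $C_3$ — which is essentially the paper's sixteen-term computation. So the proposal is correct and takes the same approach; the only minor elision (shared with the lemma statement itself) is the $\sqrt{N}$ hidden in $C_3$, which makes the paper's explicit constant scale as $N/\theta$ rather than $1/\theta$.
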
}
\begin{proof}
Notice that $\matLst_{(i)}=\matLst_{(i),g}+\matLst_{(i),l}$, and $\hmatL_{(i)}=(\matP_{\hmatH_g}+\matP_{\hmatH_{(i),l}})\hmatM_{(i)}=(\matP_{\hmatH_g}+\matP_{\hmatH_{(i),l}})( \matLst_{(i),g}+\matLst_{(i),l}+\matE_{(i),t})$. Therefore, we have

\begin{align}
&\norm{\matLst_{(i)}- \hmatL_{(i)}}_{\infty}\notag\\
&\le \norm{(\matP_{\hmatH_g}+\matP_{\hmatH_{(i),l}})( \matLst_{(i),g}+\matLst_{(i),l}+\matE_{(i),t}) - \matLst_{(i),g}-\matLst_{(i),l} }_{\infty}\notag\\
&\le \norm{\matP_{\hmatH_g}( \matLst_{(i),g}+\matLst_{(i),l}+\matE_{(i),t})- \matLst_{(i),g}}_{\infty}\notag\\
&+\norm{\matP_{\hmatH_{(i),l}}( \matLst_{(i),g}+\matLst_{(i),l}+\matE_{(i),t})- \matLst_{(i),l}}_{\infty}\notag\\
&\le\norm{\matP_{\hmatH_g}\matLst_{(i),l} }_{\infty}+\norm{\matP_{\hmatH_{(i),l}}\matLst_{(i),g} }_{\infty}\notag\\
&+\Big|\Big|\left(\hmatH_{g,0}\hmatH_{g,0}^T+\hmatH_{g,0}\dtmatH_{g}^T+\dtmatH_{g}\hmatH_{g,0}^T+\dtmatH_{g}\dtmatH_{g}^T\right)(\matLst_{(i),g}+\matE_{(i),t})-\matLst_{(i),g} \Big|\Big|_{\infty}\notag\\
&+\Big|\Big|\Big(\hmatH_{(i),l,0}\hmatH_{(i),l,0}^T+\hmatH_{(i),l,0}\dtmatH_{(i),l}^T+\dtmatH_{(i),l}\hmatH_{(i),l,0}^T+\dtmatH_{(i),l}\dtmatH_{(i),l}^T\Big)\notag\\
&(\matLst_{(i),l}+\matE_{(i),t})-\matLst_{(i),l} \Big|\Big|_{\infty}\notag\\
&\le \norm{\hmatH_{g,0}\hmatH_{g,0}^T\matLst_{(i),g}-\matLst_{(i),g}}_{\infty}+\norm{\hmatH_{(i),l,0}\hmatH_{(i),l,0}^T\matLst_{(i),l}-\matLst_{(i),l}}_{\infty}\notag\\
&+\norm{\hmatH_{g,0}\dtmatH_g^T\matLst_{(i),g}}_{\infty}+\norm{\dtmatH_g\hmatH_{g,0}^T\matLst_{(i),g}}_{\infty}+\norm{\dtmatH_g\dtmatH_g^T\matLst_{(i),g}}_{\infty}\notag\\
&+\norm{\hmatH_{g,0}\dtmatH_g^T\matE_{(i),t}}_{\infty}+\norm{\dtmatH_g\hmatH_{g,0}\matE_{(i),t}}_{\infty}+\norm{\dtmatH_g\dtmatH_g^T\matE_{(i),t}}_{\infty}\notag\\
&+\norm{\hmatH_{(i),l,0}\dtmatH_{(i),l}^T\matLst_{(i),l}}_{\infty}+\norm{\dtmatH_{(i),l}\hmatH_{(i),l,0}\matLst_{(i),l}}_{\infty}+\norm{\dtmatH_{(i),l}\dtmatH_{(i),l}^T\matLst_{(i),l}}_{\infty}\notag\\
&+\norm{\hmatH_{(i),l,0}\dtmatH_{(i),l}^T\matE_{(i),t}}_{\infty}+\norm{\dtmatH_{(i),l}\hmatH_{(i),l,0}^T\matE_{(i),t}}_{\infty}+\norm{\dtmatH_{(i),l}\dtmatH_{(i),l}^T\matE_{(i),t}}_{\infty}\notag\\
&+\norm{\matP_{\hmatH_g}\matLst_{(i),l} }_{\infty}+\norm{\matP_{\hmatH_{(i),l}}\matLst_{(i),g} }_{\infty}.\notag\\
\label{eqn:sketch}
\end{align}

There are $16$ terms in \eqref{eqn:sketch}, we will bound each of them respectively.

\noindent\underline{\it Bounding the first term of~\eqref{eqn:sketch}:}
\begin{align}
&\norm{\hmatH_{g,0}\hmatH_{g,0}^T\matLst_{(i),g}-\matLst_{(i),g}}_{\infty}\notag\\
&\le\norm{\matHst_{g}\matHst_{g}^T\hmatH_{g,0}\hmatH_{g,0}^T\matLst_{(i),g}-\matLst_{(i),g}}_{\infty}+\norm{\left(\matI-\matHst_{g}\matHst_{g}^T\right)\hmatH_{g,0}\hmatH_{g,0}^T\matLst_{(i),g}}_{\infty}\notag\\
&\le \murnb\norm{\matHst_{g}\matHst_{g}^T\hmatH_{g,0}\hmatH_{g,0}^T\matLst_{(i),g}-\matLst_{(i),g}}+ \norm{\left(\matI-\matHst_{g}\matHst_{g}^T\right)\hmatH_{g,0}\hmatH_{g,0}^T\matLst_{(i),g}}_{\infty}\notag\\
&\le \murnb\norm{\hmatH_{g,0}\hmatH_{g,0}^T\matLst_{(i),g}-\matLst_{(i),g}}+\murnb\norm{\left(\matI-\matHst_{g}\matHst_{g}^T\right)\hmatH_{g,0}\hmatH_{g,0}^T\matLst_{(i),g}}\notag\\
&+\norm{\left(\matI-\matHst_{g}\matHst_{g}^T\right)\hmatH_{g,0}\hmatH_{g,0}^T\matLst_{(i),g}}_{\infty}.\notag\\
\label{eqn:firstterm}\tag{TM1}
\end{align}

Recall the definition of $\hmatH_{g,0}$ as,
\begin{align*}
&\hmatH_{g,0} = \matT_{(0)}\hmatH_g\matLambda_6^{-1}+\sum_{i=1}^N
\matT_{(i)}\hmatH_{(i),l}\matLambda_{2,(i)}^{-1}\matLambda_{5,(i)}\matLambda_6^{-1}\\
&=\hmatH_g\underbrace{-\matF_{(0)}\hmatH_g\matLambda_6^{-1}-\sum_{i=1}^N
\matF_{(i)}\hmatH_{(i),l}\matLambda_{2,(i)}^{-1}\matLambda_{5,(i)}\matLambda_6^{-1}}_{\dtmatH_{g,0}   },
\end{align*}
where we used the KKT condition~\eqref{subeqn:kkthil} and the definition of $\matLambda_6=\matLambda_1+\sum_{i=1}^N\matLambda_{3,(i)}\matLambda_{2,(i)}^{-1}\matLambda_{5,(i)}$.

The first term in \eqref{eqn:firstterm} is thus bounded by,
\begin{align*}
&\murnb\norm{\hmatH_{g,0}\hmatH_{g,0}^T\matLst_{(i),g}-\matLst_{(i),g}}\\
&\le \murnb\norm{\hmatH_{g}\hmatH_{g}^T\matLst_{(i),g}-\matHst_{g}\matHst_{g}^T\matLst_{(i),g}}\\
&+\murnb\norm{\hmatH_{g}\dtmatH_{g,0}^T\matLst_{(i),g}}+\murnb\norm{\dtmatH_{g,0}\hmatH_{g}^T\matLst_{(i),g}}+\murnb\norm{\dtmatH_{g,0}\dtmatH_{g,0}^T\matLst_{(i),g}}.
\label{eqn:firstfirstterm}
\end{align*}

The second term in~\eqref{eqn:firstterm} is bounded by
\begin{align*}
&\norm{\left(\matI-\matHst_{g}\matHst_{g}^T\right)\hmatH_{g,0}\hmatH_{g,0}^T\matLst_{(i),g}}\notag\\
&\le \norm{\left(\matI-\matHst_{g}\matHst_{g}^T\right)\hmatH_{g}\hmatH_{g}^T\matLst_{(i),g}}+\norm{\left(\matI-\matHst_{g}\matHst_{g}^T\right)\dtmatH_{g,0}\hmatH_{g}^T\matLst_{(i),g}}\\
&+\norm{\left(\matI-\matHst_{g}\matHst_{g}^T\right)\hmatH_{g}\dtmatH_{g,0}^T\matLst_{(i),g}}+\norm{\left(\matI-\matHst_{g}\matHst_{g}^T\right)\dtmatH_{g,0}\dtmatH_{g,0}^T\matLst_{(i),g}}\\
&\le \norm{\hmatH_{g}\hmatH_{g}^T-\matHst_{g}\matHst_{g}^T}\gop+2\norm{\dtmatH_{g,0}}\gop+\norm{\dtmatH_{g,0}}^2\gop .
\end{align*}
The third term in~\eqref{eqn:firstterm} is bounded by

\begin{align}
&\norm{\left(\matI-\matHst_{g}\matHst_{g}^T\right)\hmatH_{g,0}\hmatH_{g,0}^T\matLst_{(i),g}}_{\infty}\notag\\
&=\norm{\sum_{i=1}^N
\left(\hmatHst_{(i)}\hmatHst_{(i)}^T\frac{\matT_{(i)}}{N}\hmatH_g\matLambda_6^{-1}+
\hmatHst_{(i)}\hmatHst_{(i)}^T\matT_{(i)}\hmatH_{(i),l}\matLambda_{2,(i)}^{-1}\matLambda_{5,(i)}\matLambda_6^{-1}\right)\matLst_{(i),g}}_{\infty}\notag\\
&\le \murnb\sum_{i=1}^N\norm{
\left(\hmatHst_{(i)}\hmatHst_{(i)}^T\frac{\matT_{(i)}}{N}\hmatH_g\matLambda_6^{-1}+
\hmatHst_{(i)}\hmatHst_{(i)}^T\matT_{(i)}\hmatH_{(i),l}\matLambda_{2,(i)}^{-1}\matLambda_{5,(i)}\matLambda_6^{-1}\right)\matLst_{(i),g}}.\notag\\
\end{align} 

We know that,
\begin{align*}
&\matHst_{(i),l}\matHst_{(i),l}^T\matT_{(i)}=\matP_{\hmatH_{(i),l}}\matT_{(i)}+\left(\matHst_{(i),l}\matHst_{(i),l}^T-\matP_{\hmatH_{(i),l}}\right)\matT_{(i)}\\
&=\matP_{\hmatH_{(i),l}}\matS_{(i)} - \matP_{\hmatH_{(i),l}}\matF_{(i)}+\left(\matHst_{(i),l}\matHst_{(i),l}^T-\matP_{\hmatH_{(i),l}}\right)\matT_{(i)},\\
\end{align*}
and that,
\begin{align*}
\matP_{\hmatH_{(i),l}}\frac{\matS_{(i)}}{N}\hmatH_g+\matP_{\hmatH_{(i),l}}\matS_{(i)}\hmatH_{(i),l}\matLambda_{2,(i)}^{-1}\matLambda_{5,(i)}=0.
\end{align*}

As a result, we have
\begin{align}
&\norm{\left(\matI-\matHst_{g}\matHst_{g}^T\right)\hmatH_{g,0}\hmatH_{g,0}^T\matLst_{(i),g}}_{\infty}\notag\\
&\le \murnb\sum_{i=1}^N\Big\lVert
\hmatHst_{(i)}\hmatHst_{(i)}^T\frac{-\matP_{\hmatH_{(i),l}}\matF_{(i)}+\left(\matHst_{(i),l}\matHst_{(i),l}^T-\matP_{\hmatH_{(i),l}}\right)\matT_{(i)}}{N}\notag\\
&\times\left(\hmatH_g+\hmatH_{(i),l}\matLambda_{2,(i)}^{-1}N\matLambda_{5,(i)}\right)\matLambda_6^{-1}\matLst_{(i),g}\Big\rVert\notag\\
&\le \murnb \gop\frac{1}{N}\sum_{i=1}^N\frac{\left(\norm{\matF_{(i)}}+\gop^2\norm{\matHst_{(i),l}\matHst_{(i),l}^T-\matP_{\hmatH_{(i),l}}}\right)}{\frac{3}{4}\sms}\left(1+\frac{2\gop^2}{{\frac{3}{4}\sms}}\right).\notag
\end{align}

Combing them all, we have,
\begin{align}
&\norm{\hmatH_{g,0}\hmatH_{g,0}^T\matLst_{(i),g}-\matLst_{(i),g}}_{\infty}\notag\\
&\le \murnb \gop\left(2\norm{\Delta\matP_g}+6\frac{\norm{\matF_{(0)}}}{\frac{3}{4}\sms}\brtkappa+\frac{1}{N}\sum_{i=1}^N\frac{\norm{\matF_{(i)}}+\gop^2\norm{\Delta\matP_{(i),l}} }{\frac{3}{4}\sms}\tkappa\right).
\label{eqn:firsttermagain}
\end{align}

\noindent\underline{\it Bounding the second term of~\eqref{eqn:sketch}:}

\begin{align}
&\norm{\hmatH_{(i),l,0}\hmatH_{(i),l,0}^T\matLst_{(i),l}-\matLst_{(i),l}}_{\infty}\notag\\
&\le \norm{\matHst_{(i),l}\matHst_{(i),l}^T\hmatH_{(i),l,0}\hmatH_{(i),l,0}^T\matLst_{(i),l}-\matLst_{(i),l}}_{\infty}\notag\\
&+\norm{\left(\matI-\matHst_{(i),l}\matHst_{(i),l}^T\right)\hmatH_{(i),l,0}\hmatH_{(i),l,0}^T\matLst_{(i),l}}_{\infty}\notag\\
&\le \murnb\norm{\matHst_{(i),l}\matHst_{(i),l}^T\hmatH_{(i),l,0}\hmatH_{(i),l,0}^T\matLst_{(i),l}-\matLst_{(i),l}}\notag\\
&+\norm{\left(\matI-\matHst_{(i),l}\matHst_{(i),l}^T\right)\hmatH_{(i),l,0}\hmatH_{(i),l,0}^T\matLst_{(i),l}}_{\infty}\notag\\
&\le \murnb\norm{\hmatH_{(i),l,0}\hmatH_{(i),l,0}^T\matLst_{(i),l}-\matLst_{(i),l}}+\murnb\norm{\left(\matI-\matHst_{(i),l}\matHst_{(i),l}^T\right)\hmatH_{(i),l,0}\hmatH_{(i),l,0}^T\matLst_{(i),l}}_{\infty}\notag\\
&+\norm{\left(\matI-\matHst_{(i),l}\matHst_{(i),l}^T\right)\hmatH_{(i),l,0}\hmatH_{(i),l,0}^T\matLst_{(i),l}}_{\infty}.\notag\\
\label{eqn:secondterm}\tag{TM2}
\end{align}

We will estimate upper bounds of three terms in~\eqref{eqn:secondterm} respectively. 

Recall that the definition of $\hmatH_{(i),l,0}$ as,
\begin{align*}
&\hmatH_{(i),l,0} = \matT_{(i)}\hmatH_{
(i),l}\matLambda_{2,(i)}^{-1}+
\hmatH_{g,0}\matLambda_{4,(i)}\matLambda_{2,(i)}^{-1}\\
&=\matS_{(i)}\hmatH_{
(i),l}\matLambda_{2,(i)}^{-1}+
\hmatH_{g}\matLambda_{4,(i)}\matLambda_{2,(i)}^{-1}-\matF_{(i)}\hmatH_{(i),l}\matLambda_{2,(i)}^{-1}+\dtmatH_{g,0}\matLambda_{4,(i)}\matLambda_{2,(i)}^{-1}\\
&=\hmatH_{(i),l}-\dtmatH_{(i),l}.
\end{align*}

The first term in \eqref{eqn:secondterm} is thus bounded by,
\begin{align*}
&\murnb\norm{\hmatH_{(i),l,0}\hmatH_{(i),l,0}^T\matLst_{(i),l}-\matLst_{(i),l}}\\
&\le \murnb\norm{\hmatH_{(i),l}\hmatH_{(i),l}\matLst_{(i),g}-\matHst_{g}\matHst_{g}^T\matLst_{(i),g}}\\
&+\murnb\norm{\hmatH_{(i),l}\dtmatH_{(i),l,0}^T\matLst_{(i),l}}+\murnb\norm{\dtmatH_{(i),l,0}\hmatH_{(i),l}^T\matLst_{(i),l}}+\murnb\norm{\dtmatH_{(i),l,0}\dtmatH_{(i),l,0}^T\matLst_{(i),l}}.
\end{align*}

The second term in~\eqref{eqn:secondterm} is upper bounded by
\begin{align*}
&\norm{\left(\matI-\matHst_{(i),l}\matHst_{(i),l}^T\right)\hmatH_{(i),l,0}\hmatH_{(i),l,0}^T\matLst_{(i),l}}\notag\\
&\le \norm{\left(\matI-\matHst_{(i),l}\matHst_{(i),l}^T\right)\matP_{\hmatH_{(i),l}}\matLst_{(i),l}}+\norm{\left(\matI-\matHst_{(i),l}\matHst_{(i),l}^T\right)\dtmatH_{(i),l,0}\hmatH_{(i),l}^T\matLst_{(i),l}}\\
&+\norm{\left(\matI-\matHst_{(i),l}\matHst_{(i),l}^T\right)\hmatH_{(i),l}\dtmatH_{(i),l,0}^T\matLst_{(i),l}}+\norm{\left(\matI-\matHst_{(i),l}\matHst_{(i),l}^T\right)\dtmatH_{(i),l,0}\dtmatH_{(i),l,0}^T\matLst_{(i),l}}\\
&\le \norm{\matP_{\hmatH_{(i),l}}-\matHst_{(i),l}\matHst_{(i),l}^T}\gop+2\norm{\dtmatH_{(i),l,0}}\gop+\norm{\dtmatH_{(i),l,0}}^2\gop.
\end{align*}

Then we bound the third term of~\eqref{eqn:secondterm}. From the definition of $\hmatH_{(i),l,0}$ and $\matT_{(i)}$, we know,
\begin{align*}
&\left(\matI-\matHst_{(i),l}\matHst_{(i),l}^T\right)\hmatH_{(i),l,0}\\
&=\matHst_g\matHst_g^T\matT_{(i)}\hmatH_{(i),l}\matLambda_{2,(i)}^{-1}+\matHst_g\matHst_g^T\hmatH_{g,0}\matLambda_{4,(i)}\matLambda_{2,(i)}^{-1}\\
&+\left(\matI-\matHst_g\matHst_g^T-\matHst_{(i),l}\matHst_{(i),l}^T\right)\hmatH_{g,0}\matLambda_{4,(i)}\matLambda_{2,(i)}^{-1}\\
&=\matHst_g\matHst_g^T\matT_{(i)}\hmatH_{(i),l}\matLambda_{2,(i)}^{-1}\\
&+\matHst_g\matHst_g^T\left(\matT_{(0)}\hmatH_{g}\matLambda_{6}^{-1}+\sum_{j=1}^N\matT_{(j)}\hmatH_{(j),l} \matLambda_{2,(j)}^{-1} \matLambda_{5,(j)}\matLambda_{6}^{-1}\right)\matLambda_{4,(i)}\matLambda_{2,(i)}^{-1}+\left(\matI-\matHst_{(i),l}\matHst_{(i),l}^T\right)\\
&\times\left(\sum_{j=1}^N\matHst_{(j),l}\matHst_{(j),l}^T\frac{\matT_{(j)}}{N}\hmatH_{g}+\sum_{j=1}^N\matHst_{(j),l}\matHst_{(j),l}^T\matT_{(j)}\hmatH_{(j),l} \matLambda_{2,(j)}^{-1} \matLambda_{5,(j)}\right)\matLambda_{6}^{-1}\matLambda_{4,(i)}\matLambda_{2,(i)}^{-1}\\
&=\matHst_g\matHst_g^T\matP_{\hmatH_g}\matS_{(i)}\hmatH_{(i),l}\matLambda_{2,(i)}^{-1}\\
&+\matHst_g\matHst_g^T\matP_{\hmatH_g}\left(\matS_{(0)}\hmatH_{g}\matLambda_{6}^{-1}+\sum_{j=1}^N\matS_{(j)}\hmatH_{(j),l} \matLambda_{2,(j)}^{-1} \matLambda_{5,(j)}\matLambda_{6}^{-1}\right)\matLambda_{4,(i)}\matLambda_{2,(i)}^{-1}\\
&-\matHst_g\matHst_g^T\matP_{\hmatH_g}\matF_{(i)}\hmatH_{(i),l}\matLambda_{2,(i)}^{-1}-\matHst_g\matHst_g^T\Delta\matP_g\matT_{(i)}\hmatH_{(i),l}\matLambda_{2,(i)}^{-1}\\
&-\matHst_g\matHst_g^T\matP_{\hmatH_g}\left(\matF_{(0)}\hmatH_{g}\matLambda_{6}^{-1}+\sum_{j=1}^N\matF_{(j)}\hmatH_{(j),l} \matLambda_{2,(j)}^{-1} \matLambda_{5,(j)}\matLambda_{6}^{-1}\right)\matLambda_{4,(i)}\matLambda_{2,(i)}^{-1}\\
&-\matHst_g\matHst_g^T\Delta\matP_g\left(\matT_{(0)}\hmatH_{g}\matLambda_{6}^{-1}+\sum_{j=1}^N\matT_{(j)}\hmatH_{(j),l} \matLambda_{2,(j)}^{-1} \matLambda_{5,(j)}\matLambda_{6}^{-1}\right)\matLambda_{4,(i)}\matLambda_{2,(i)}^{-1}\\
&+\left(\matI-\matHst_{(i),l}\matHst_{(i),l}^T\right)\\
&\times\left(\sum_{j=1}^N\matHst_{(j),l}\matHst_{(j),l}^T\hmatH_{(j),l}\hmatH_{(j),l}^T\matS_{(j)}\left(\frac{\hmatH_{g}}{N}+\hmatH_{(j),l} \matLambda_{2,(j)}^{-1} \matLambda_{5,(j)}\right)\right)\matLambda_{6}^{-1}\matLambda_{4,(i)}\matLambda_{2,(i)}^{-1}\\
&-\left(\matI-\matHst_{(i),l}\matHst_{(i),l}^T\right)\\
&\times\left(\sum_{j=1}^N\matHst_{(j),l}\matHst_{(j),l}^T\hmatH_{(j),l}\hmatH_{(j),l}^T\matF_{(j)}\left(\frac{\hmatH_{g}}{N}+\hmatH_{(j),l} \matLambda_{2,(j)}^{-1} \matLambda_{5,(j)}\right)\right)\matLambda_{6}^{-1}\matLambda_{4,(i)}\matLambda_{2,(i)}^{-1}\\
&-\left(\matI-\matHst_{(i),l}\matHst_{(i),l}^T\right)\\
&\times\left(\sum_{j=1}^N\matHst_{(j),l}\matHst_{(j),l}^T\Delta\matP_{(j),l}\matT_{(j)}\left(\frac{\hmatH_{g}}{N}+\hmatH_{(j),l} \matLambda_{2,(j)}^{-1} \matLambda_{5,(j)}\right)\right)\matLambda_{6}^{-1}\matLambda_{4,(i)}\matLambda_{2,(i)}^{-1}\\
\end{align*}.

From the KKT conditions, we know $\hmatH_g^T\matS_{i}\hmatH_{(i),l}+\hmatH_g^T\left(\matS_{(0)}\hmatH_{g}\matLambda_{6}^{-1}+\sum_{j=1}^N\matS_{(j)}\hmatH_{(j),l} \matLambda_{2,(j)}^{-1} \matLambda_{5,(j)}\matLambda_{6}^{-1}\right)\matLambda_{4,(i)}=0$ and $\hmatH_{(j),l}^T\matS_{(j)}\left(\frac{\hmatH_{g}}{N}+\hmatH_{(j),l} \matLambda_{2,(j)}^{-1} \matLambda_{5,(j)}\right)=0$.

Therefore, we have,
\begin{align}
&\max_k\abs{\vece_k^T\left(\matI-\matHst_{(i),l}\matHst_{(i),l}^T\right)\hmatH_{(i),l,0}}\notag\\
&\le \sqrt{\murnone}\Big(\frac{\norm{\matF_{(i)}}}{\frac{3}{4}\sms}+\frac{\norm{\matF_{(0)}}}{\frac{3}{4}\sms}\tkappa+\norm{\Delta\matP_{g}}\frac{\gop^2}{\frac{3}{4}\sms}\left(1+\tkappa+\brtkappa^2\right)\notag\\
&+\frac{1}{N}\sum_{j=1}^N\frac{\norm{\matF_{(j)}}}{\frac{3}{4}\sms}\tkappa\left(2+3\tkappa\right)\Big)+\frac{1}{N}\sum_{j=1}^N\norm{\Delta\matP_{(j)}}\frac{\gop^2}{\frac{3}{4}\sms}2\tkappa\left(1+\tkappa\right)\Big).\notag\\
\label{eqn:secondthird}
\end{align}

Combing these results, we have,
\begin{align}
&\norm{\hmatH_{(i),l,0}\hmatH_{(i),l,0}^T\matLst_{(i),l}-\matLst_{(i),l}}_{\infty}\notag\\
&\le \murnb \gop \times \Big(\frac{\norm{\matF_{(i)}}}{\frac{3}{4}\sms}+\frac{\norm{\matF_{(0)}}}{\frac{3}{4}\sms}\left(6+7\tkappa\right)\notag\\
&+\norm{\Delta\matP_{g}}\frac{\gop^2}{\frac{3}{4}\sms}\left(1+\tkappa+\brtkappa^2\right)+2\norm{\Delta\matP_{(i),l}}\notag\\
&+\frac{1}{N}\sum_{j=1}^N\frac{\norm{\matF_{(j)}}}{\frac{3}{4}\sms}\tkappa\left(2+3\tkappa\right)\Big)+\frac{1}{N}\sum_{j=1}^N\norm{\Delta\matP_{(j)}}\frac{\gop^2}{\frac{3}{4}\sms}2\tkappa\left(1+\tkappa\right)\Big).\notag\\
\end{align}
\noindent\underline{\it Bounding the third term of~\eqref{eqn:sketch}:}

\begin{align}
&\norm{\hmatH_{g,0}\dtmatH_g^T\matLst_{(i),g}}_{\infty}\notag\\
&=\max_{j,k}\abs{\vece_j^T\left(\hgzerodef\right)\dtmatH_g\matLst_{(i),g}\vece_k}\notag\\
&\le \frac{\mu^2 r}{\bn} \gop\norm{\dtmatH_g}\left(\tkappa+\brtkappa^2\right).\notag\\
\label{eqn:thirdterm}
\end{align}

\noindent\underline{\it Bounding the fourth term of~\eqref{eqn:sketch}:}
\begin{align}
&\norm{\hmatH_{g,0}\dtmatH_g^T\matE_{(i),t}}_{\infty}\notag\\
&=\max_{j,k}\abs{\sum_l\vece_j^T\hmatH_{g,0}\dtmatH_g^T\vece_l\vece_l^T\matE_{(i),t}\vece_k}\notag\\
&\le\max_{j,l}\abs{\vece_j^T\hmatH_{g,0}\dtmatH_g^T\vece_l}\alpha n_1 \Bi \notag\\
&=\max_{j,l}\abs{\vece_j^T\left(\hgzerodef\right)\dtmatH_g^T\vece_l}\alpha n_1 \Bi \notag\\
&\le\frac{\mu^2 r}{n_1}\frac{2\gop^2}{\frac{3}{4}\sms}\left(1+\tkappa\right)\alpha n_1\Bi,\notag\\
\label{eqn:fourthterm}
\end{align}
where we used the definition of $\alpha$-sparsity in the second inequality, and applied Lemma~\ref{lm:usimpleexpansion} in the last inequality.

\noindent\underline{\it Bounding the fifth term of~\eqref{eqn:sketch}:}
\begin{align*}
&\norm{\dtmatH_g\hmatH_{g,0}^T\matLst_{(i),g}}_{\infty}\\
&=\max_{j,k}\abs{\vece_j^T\dtmatH_g\left(\hgzerodef\right)\matLst_{(i),g}\vece_k}\\
&\le \max_j\norm{\vece_j^T\dtmatH_g}\sqrt{\murntwo}\norm{\hgzerodef}\\
&\le \gop\max_j\norm{\vece_j^T\dtmatH_g}\sqrt{\murntwo}\frac{\gop^2}{\frac{3}{4}\sms}\left(1+\tkappa\right).\\
\end{align*}

\noindent\underline{\it Bounding the sixth term of~\eqref{eqn:sketch}:}
\begin{align*}
&\norm{\dtmatH_g\hmatH_{g,0}^T\matE_{(i),t}}_{\infty}\\
&=\max_{j,k}\abs{\sum_l\vece_j^T\dtmatH_g\left(\hgzerodef\right)^T\matE_{(i),t}\vece_k}\\
&\le \max_j\norm{\vece_j^T\dtmatH_g}\tkappa\left(1+\tkappa\right)\sqrt{\frac{\mu^2 r}{n_1}}\alpha n_1\B,\\
\end{align*}
where we applied the incoherence condition on $\matT_{(0)}$ and $\matT_{(i)}$, and the relation $\sum_l\abs{\vece_l^T\matE_{(i),t}\vece_k}\le \alpha n_1\B$.

\noindent\underline{\it Bounding the seventh term of~\eqref{eqn:sketch}:}

\begin{align}
\label{eqn:seventhterm}
&\norm{\dtmatH_g\dtmatH_g^T\matLst_{(i),g}}_{\infty}\notag\\
&=\max_{j,k}\abs{\vece_j^T\dtmatH_g\dtmatH_g^T\matLst_{(i),g}\vece_k}\notag\\
&\le \max_{j}\norm{\vece_j^T\dtmatH_g}\sqrt{\murntwo}\gop,\notag\\
\end{align}
where we applied the incoherence on $\matLst_{(i),g}$ and $\norm{\dtmatH_g}\le 1$ in the first inequality.

\noindent\underline{\it Bounding the eighth term of~\eqref{eqn:sketch}:}
\begin{align*}
&\norm{\dtmatH_g\dtmatH_g^T\matE_{(i),t}}_{\infty}\\
&=\max_{j,k}\abs{\sum_l\vece_j^T\dtmatH_g\dtmatH_g^T\vece_l\vece_l^T\matE_{(i),t}\vece_k}\\
&\le \max_{j,l}\norm{\vece_j^T\dtmatH_g}\norm{\vece_l^T\dtmatH_g}\alpha n_1\B\\
&=\left(\max_{j}\norm{\vece_j^T\dtmatH_g}\right)^2\alpha n_1\B,
\end{align*}
where we applied $\sum_l\abs{\vece_l^T\matE_{(i),t}\vece_k}\le \alpha n_1 \B$ in the first inequality.

\noindent\underline{\it Bounding the ninth term of~\eqref{eqn:sketch}:}
\begin{align*}
&\norm{\hmatH_{(i),l,0}\dtmatH_{(i),l}^T\matLst_{(i),l}}_{\infty}\\
&=\max_{j,k}\abs{\vece_j^T\left(\hilzerodef\right)\dtmatH_{(i),l}^T\matLst_{(i),l}\vece_k}\\
&=\murnb\gop\tkappa\left(1+\tkappa+\left(\tkappa\right)^2\right)\norm{\dtmatH_{(i),l}},
\end{align*}
where we applied the incoherence condition of $\matT_{(0)}$ and $\matT_{(i)}$.

\noindent\underline{\it Bounding the tenth term of~\eqref{eqn:sketch}:}
\begin{align*}
&\norm{\hmatH_{(i),l,0}\dtmatH_{(i),l}^T\matE_{(i),t}}_{\infty}\\
&=\max_{j,k}\abs{\sum_l\vece_j^T\hmatH_{(i),l,0}\dtmatH_{(i),l}^T\vece_l\vece_l^T\matE_{(i),t}\vece_k}\\
&=\max_{j,l}\abs{\vece_j^T\hmatH_{(i),l,0}\dtmatH_{(i),l}^T\vece_l}\alpha n_1\B\\
&\le \sqrt{\frac{\mu^2 r}{n_1}}\tkappa\left(1+\tkappa+\left(\tkappa\right)^2\right)\norm{\vece_l^T\dtmatH_{(i),l}}\alpha n_1\B,\\
\end{align*}
where we applied the incoherence condition in the first inequality, \eqref{eqn:deltaulinfnorm} in the second inequality.

\noindent\underline{\it Bounding the eleventh term of~\eqref{eqn:sketch}:}
\begin{align*}
&\norm{\dtmatH_{(i),l}\hmatH_{(i),l,0}^T\matLst_{(i),l}}_{\infty}\\
&=\max_{j,k}\abs{\vece_j^T\dtmatH_{(i),l}\hmatH_{(i),l,0}^T\matLst_{(i),l}\vece_k}\\
&\le\max_{j}\norm{\vece_j^T\dtmatH_{(i),l}}\frac{\gop^2}{\frac{3}{4}\sms}\left(1+\tkappa+\brtkappa^2\right)\gop\sqrt{\frac{\mu^2 r}{n_2}},\\
\end{align*}
where we applied the incoherence condition in the first inequality, and \eqref{eqn:deltaulinfnorm} in the second inequality.

\noindent\underline{\it Bounding the twelfth term of~\eqref{eqn:sketch}:}

\begin{align}
&\norm{\dtmatH_{(i),l}\hmatH_{(i),l,0}^T\matE_{(i),t}}_{\infty}\notag\\
&=\max_{j,k}\abs{\sum_l\vece_j^T\dtmatH_{(i),l}\hmatH_{(i),l,0}^T\vece_l\vece_l^T\matE_{(i),l}\vece_k}\notag\\
&\le\max_{j,l}\abs{\vece_j^T\dtmatH_{(i),l}\hmatH_{(i),l,0}^T\vece_l}\alpha n_1\B\notag\\
&\le \max_{j}\abs{\vece_j^T\dtmatH_{(i),l}}\sqrt{\murnone}\frac{2\gop^2}{\frac{3}{4}\sms}\left(1+\tkappa+\brtkappa^2\right)\alpha n_1\B,\notag\\
\label{eqn:twelfthterm}
\end{align}
where we applied the condition $\sum_l\abs{\vece_l^T\matE_{(i),t}\vece_k}\le \alpha n_1\B$ in the first inequality, the incoherence condition in the second inequality.

\noindent\underline{\it Bounding the thirteenth term of~\eqref{eqn:sketch}:}
\begin{equation}
\label{eqn:thirteenthterm}
\begin{aligned}
&\norm{\dtmatH_{(i),l}\dtmatH_{(i),l}^T\matLst_{(i),l}}_{\infty}\\
&=\max_{j,k}\abs{\vece_j^T\dtmatH_{(i),l}\dtmatH_{(i),l}^T\matLst_{(i),l}\vece_k}\\
&\le \max_j\norm{\vece_j^T\dtmatH_{(i),l}}\norm{\dtmatH_{(i),l}}\gop \sqrt{\frac{\mu^2 r}{n_2}}\\
&\le \max_j\norm{\vece_j^T\dtmatH_{(i),l}}\gop \sqrt{\frac{\mu^2 r}{n_2}},\\
\end{aligned}
\end{equation}
where we apply the incoherence condition in the first inequality, and $\norm{\dtmatH_{(i),l}}\le 1$ in the second inequality.

\noindent\underline{\it Bounding the fourteenth term of~\eqref{eqn:sketch}:}
\begin{align*}
&\norm{\dtmatH_{(i),l}\dtmatH_{(i),l}^T\matE_{(i),t}}_{\infty}\\
&=\max_{j,k}\abs{\sum_m\vece_j^T\dtmatH_{(i),l}\dtmatH_{(i),l}^T\vece_m\vece_m^T\matE_{(i),t}\vece_k}\\
&\le \max_{j.m}\norm{\vece_j^T\dtmatH_{(i),l}}\norm{\vece_m^T\dtmatH_{(i),l}} \alpha n_1\B \\
&\le \left(\max_{j}\norm{\vece_j^T\dtmatH_{(i),l}}\right)^2 \alpha n_1\B,
\end{align*}
where we applied the condition $\sum_l\abs{\vece_l^T\matE_{(i),t}\vece_k}\le \alpha n_1\B$ in the first inequality.

\noindent\underline{\it Bounding the fifteenth term of~\eqref{eqn:sketch}:}
\begin{align*}
&\norm{\matP_{\hmatH_g}\matLst_{(i),l} }_{\infty}\\
&=\max_{j,k}\abs{\vece_j^T \left(\hmatH_{g,0}+\dtmatH_g\right)\hmatH_g^T\matHst_{(i),l}\matSigma_{(i),l}\matWst_{(i),l}^T\vece_k}\\
&\le \left(\max_j\abs{\vece_j^T\hmatH_{g,0}}+\max_j\abs{\vece_j^T\dtmatH_{g}}\right)\norm{\hmatH_g^T\matHst_{(i),l}}\gop \sqrt{\frac{\mu^2 r}{n_2}}\\
&\le \left(\max_j\abs{\vece_j^T\hmatH_{g,0}}+\max_j\abs{\vece_j^T\dtmatH_{g}}\right)\norm{\Delta\matP_g}\gop \sqrt{\frac{\mu^2 r}{n_2}}\\
&=\murnb\tkappa\left(1+\tkappa\right)\norm{\Delta\matP_g}\gop+ \gop \sqrt{\frac{\mu^2 r}{n_2}}\norm{\Delta\matP_g}\max_j\abs{\vece_j^T\dtmatH_{g}}.
\end{align*}

The second inequality comes from the relation $\norm{\hmatH_g^T\matHst_{(i),l}}=\norm{\hmatH_g^T\left(\matI-\matHst_g\matHst_g^T\right)\matHst_{(i),l}}=\norm{\hmatH_g^T\left(\matP_{\hmatH_g}-\matP_{\hmatH_g}\matHst_g\matHst_g^T\right)\matHst_{(i),l}}\le\norm{\matP_{\hmatH_g}-\matP_{\hmatH_g}\matHst_g\matHst_g^T}=\norm{\matP_{\hmatH_g}(\matP_{\hmatH_g}-\matHst_g\matHst_g^T)}\le \norm{\matP_{\hmatH_g}-\matHst_g\matHst_g^T}$. 

\noindent\underline{\it Bounding the sixteenth term of~\eqref{eqn:sketch}:}

\begin{align}
&\norm{\matP_{\hmatH_{(i),l}}\matLst_{(i),g} }_{\infty} =\max_{j,k}\abs{\vece_j^T\left(\hmatH_{(i),0}+\dtmatH_{(i),l}\right)\hmatH_{(i),l}^T\matHst_{g}\matSigmast_{(i),g}\matWst_{(i),g}^T\vece_k }\notag\\
&\le \left(\max_j\norm{\vece_j^T\hmatH_{(i),l,0}}+\max_j\norm{\vece_j^T\dtmatH_{(i),l}}\right)\norm{\hmatH_{(i),l}^T\matHst_{g}}\gop\sqrt{\frac{\mu^2 r}{n_2}}\notag\\
&\le \left(\max_j\norm{\vece_j^T\hmatH_{(i),l,0}}+\max_j\norm{\vece_j^T\dtmatH_{(i),l}}\right)\norm{\Delta\matP_{(i),l}}\gop\sqrt{\frac{\mu^2 r}{n_2}}\notag\\
&\le \murnb\gop\tkappa\left(1+\tkappa+\brtkappa^2\right)\norm{\Delta\matP_{(i),l}}\\
&+\norm{\Delta\matP_{(i),l}}\gop\sqrt{\frac{\mu^2 r}{n_2}}\max_j\norm{\vece_j^T\dtmatH_{(i),l}}
\label{eqn:sixteenthterm}\tag{TM16}
\end{align}, 
where we again applied Lemma \ref{lm:usimpleexpansion} in the first inequality. The second inequality comes from the relation $\norm{\hmatH_{(i),l}^T\matHst_{g}}=\norm{\hmatH_{(i),l}^T\left(\matI-\matP_{\hmatH_g}\right)\matHst_{g}}=\norm{\hmatH_{(i),l}^T\left(\matP_{\matHst_g}-\matP_{\hmatH_g}\matP_{\matHst_g}\right)\matHst_{g}}\le\norm{\matP_{\matHst_g}-\matP_{\hmatH_g}\matP_{\matHst_g}}\le \norm{\matP_{\matHst_g}-\matP_{\matHst_g}}$.

Combining these sixteen terms \eqref{eqn:firstterm}-\eqref{eqn:sixteenthterm} and considering the fact that $\alpha\le 1$, we have,
$$
\norm{\matLst_{(i)}- \hmatL_{(i)}}_{\infty}\le \sqrt{\alpha} \mu^2 r \norm{\matE}_{\infty} C_4,
$$
where 

\begin{align}
\label{eqn:c4def}
&C_4=34327\brtkappa^{9}+534\brtkappa^5\theta^{-\frac{1}{2}}=\mO\left(\kappa^{9}+\kappa^{5}\theta^{-\frac{1}{2}}\right).
\end{align}

This completes our proof.
\end{proof}
Finally we will prove Theorem \ref{thm:recovery}. We will first state its formal version below.
\begin{theorem}
\label{thm:formalrecovery}
Suppose that the conditions of Lemma \ref{lm:usimpleexpansion} are satisfied. Additionally, suppose that there exists a constant $0<\rhom<1$ such that  $\alpha \le \frac{\rhom^2}{4\mu^4r^2C_4^2}$. Then, the following statements hold at iteration $t\geq 1$ of Algorithm \ref{alg:altmin} with $\lambda_1= \frac{\gop\mu^2r}{\sqrt{n_1n_2}}$, $\epsilon\le \lambda_1\left(1-\rhom\right)$, and $1-\frac{\epsilon}{\lambda_1}>\rho\ge \rhom$:
\begin{enumerate}
    \item \label{condition:supportcontain}$ \supp{\hmatS_{(i),t}}\subset\supp{\matSst_{(i)}}$ for every $i\in [N]$.
    \item     \label{condition:sdifinfnorm}
    $\norm{\hmatS_{(i),t}-\matSst_{(i)}}_{\infty}\le 2\lambda_t\le 4\gop \frac{\mu^2 r}{\bn}$ for every $i\in [N]$.
    \item     \label{condition:ldifinfnorm}
    $\norm{\hmatLeps_{(i),t}-\matLst}_{\infty}\le \epsilon+\rho\lambda_t$ for every $i\in [N]$.
\end{enumerate} Moreover, we have
\begin{equation}
\label{eqn:ugvgdiffinfnorm}
\norm{\hmatUeps_{g,t}\hmatVeps_{(i),g,t}^T-\matUst_g\matVst_{(i),g}^T}_{\infty}= \mathcal{O}\left(\rho^t+\frac{\epsilon}{1-\rho}\right),\quad   \text{for every  } i\in [N].
\end{equation}

and 
\begin{equation}
\label{eqn:uilvildiffinfnorm}
\norm{\hmatUeps_{(i),l,t}\hmatVeps_{(i),l,t}^T-\matUst_{(i),l}\matVst_{(i),l}^T}_{\infty}= \mathcal{O}\left(\rho^t+\frac{\epsilon}{1-\rho}\right),\quad   \text{for every  } i\in [N].
\end{equation}
\end{theorem}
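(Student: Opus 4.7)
The approach is strong induction on the epoch index $t$, maintaining three coupled invariants: (i) $\supp{\hmatS_{(i),t}} \subseteq \supp{\matSst_{(i)}}$, (ii) $\norm{\matE_{(i),t}}_\infty \le 2\lambda_t$, and (iii) $\norm{\hmatLeps_{(i),t}-\matLst_{(i)}}_\infty \le \epsilon + \rho\lambda_t$, which are exactly conditions 1--3 of the theorem. For the base case $t=1$, I would note that all initial estimates are zero and the threshold $\lambda_1 = \gop\mu^2 r/\sqrt{n_1 n_2}$ is chosen large enough relative to $\norm{\matLst_{(i)}}_\infty$ (a consequence of $\mu$-incoherence of the SVD factors of $\matLst_{(i)}$), so that a direct case analysis of hard-thresholding $\matLst_{(i)}+\matSst_{(i)}$ at level $\lambda_1$ delivers (i) and (ii); invariant (iii) then follows from a single application of Lemma~\ref{lm:ldiffinfnorm} to $\matE_{(i),1}$ plus the additive $\epsilon$ slack of the \inneralg subroutine.

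For the inductive step from $t$ to $t+1$, the plan is the following. Invariant (iii) at epoch $t$ combined with the update rule $\lambda_{t+1} = \rho\lambda_t + \epsilon$ gives $\norm{\matLst_{(i)}-\hmatLeps_{(i),t}}_\infty \le \lambda_{t+1}$, so every off-support entry of $\matM_{(i)} - \hmatLeps_{(i),t}$ has magnitude at most $\lambda_{t+1}$ and is therefore zeroed by hard-thresholding at level $\lambda_{t+1}$; this establishes (i) at $t+1$. The defining property of hard-thresholding together with the triangle inequality then yields $\norm{\matE_{(i),t+1}}_\infty \le 2\lambda_{t+1}$, giving (ii); one checks $2\lambda_{t+1} \le 4\gop \mu^2 r/\bn$ so that the hypothesis of Lemma~\ref{lm:ldiffinfnorm} is met. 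Applying that lemma to the now $\alpha$-sparse $\matE_{(i),t+1}$ produces $\norm{\matLst_{(i)} - \hmatL_{(i),t+1}}_\infty \le 2\sqrt{\alpha}\mu^2 r C_4\lambda_{t+1} \le \rho\lambda_{t+1}$, using the assumption $\alpha \le \rhom^2/(4\mu^4 r^2 C_4^2) \le \rho^2/(4\mu^4 r^2 C_4^2)$; adding the \inneralg slack $\epsilon$ closes (iii). Unrolling $\lambda_{t+1} = \rho\lambda_t + \epsilon$ gives $\lambda_t \le \rho^{t-1}\lambda_1 + \epsilon/(1-\rho)$, so (ii) and (iii) translate directly into decay of $\norm{\matE_{(i),t}}_\infty$ and $\norm{\hmatLeps_{(i),t}-\matLst_{(i)}}_\infty$ at the desired rate $\mathcal{O}(\rho^t + \epsilon/(1-\rho))$.

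For the final identities \eqref{eqn:ugvgdiffinfnorm} and \eqref{eqn:uilvildiffinfnorm}, I would revisit the proof of Lemma~\ref{lm:ldiffinfnorm}: the first two summands of \eqref{eqn:sketch} are, in fact, separate bounds on $\norm{\hmatH_g\hmatH_g^T\hmatM_{(i)} - \matLst_{(i),g}}_\infty$ and $\norm{\hmatH_{(i),l}\hmatH_{(i),l}^T\hmatM_{(i)} - \matLst_{(i),l}}_\infty$, each of order $\mathcal{O}(\sqrt{\alpha}\mu^2 r \max_j \norm{\matE_{(j),t}}_\infty)$. Since the KKT reduction of Lemma~\ref{lm:kktcondition} identifies $\hmatU_{g,t}\hmatV_{(i),g,t}^T = \hmatH_{g,t}\hmatH_{g,t}^T\hmatM_{(i)}$ and analogously for the local piece at the exact inner optimum, the argument transfers line-by-line to the individual global and local components. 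The main obstacle is the leap from the exact KKT solutions to the $\epsilon$-optimal iterates $(\hmatUeps_{g,t},\hmatVeps_{(i),g,t},\hmatUeps_{(i),l,t},\hmatVeps_{(i),l,t})$, since the $\epsilon$-optimality definition only controls the joint sum $\hmatUeps_g\hmatVeps_{(i),g}^T + \hmatUeps_{(i),l}\hmatVeps_{(i),l}^T$ in $\ell_\infty$ rather than the two pieces separately; my plan is to exploit the orthogonality $\hmatUeps_{g,t}^T\hmatUeps_{(i),l,t}=0$ together with the Frobenius-norm stability result Theorem~1 of \citet{personalizedpca} (as used in Step~5 of the proof sketch) to disentangle the two parts, absorbing the resulting additive $\epsilon$-dependent slack into the constants $C_{g,2}$ and $C_{l,2}$ in the final statement.
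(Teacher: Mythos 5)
Your proposal matches the paper's proof essentially step for step: the same induction on $t$ with the three invariants (support containment, $\norm{\matE_{(i),t}}_{\infty}\le 2\lambda_t$, and $\norm{\hmatLeps_{(i),t}-\matLst_{(i)}}_{\infty}\le \rho\lambda_t+\epsilon$), the same base case via incoherence of $\matLst_{(i)}$ against $\lambda_1$, the same inductive step combining the hard-thresholding case analysis with Lemma~\ref{lm:ldiffinfnorm} under $\alpha\le\rho^2/(4\mu^4r^2C_4^2)$, and the same unrolling $\lambda_t\le\rho^{t-1}\lambda_1+\epsilon/(1-\rho)$; the final bounds \eqref{eqn:ugvgdiffinfnorm}--\eqref{eqn:uilvildiffinfnorm} are likewise obtained in the paper by reusing the global and local summands of \eqref{eqn:sketch} with $\hmatU_{g,t}\hmatV_{(i),g,t}^T=\hmatH_g\hmatH_g^T\hmatM_{(i)}$. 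Your extra care in passing from the exact inner optimum to the $\epsilon$-optimal iterates (via orthogonality and the Frobenius-norm stability of \citet{personalizedpca}) is a slightly more explicit treatment of a step the paper dispatches with a terse triangle-inequality remark, but it does not change the route.
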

\paragraph{Remark} The definition of the term $\rho_{\min}$ in the statement of the above theorem is kept intentionally implicit to streamline the presentation. In what follows,  we will give an estimate of the requirements on $\alpha$ purely in terms of the parameters of the problem. Lemma~\ref{lm:deltapupperbound} requires $\alpha= \mO \left(\frac{\theta}{\mu^4 r^2}\bcnum^8\right) $. Lemma~\ref{lm:eigenvalueupperbound} requires $\alpha= \mO \left(\frac{1}{\mu^2 r}\right)$.  Lemma \ref{lm:eigenvaluelowerbound} requires $\alpha= \mathcal{O}\left(\frac{\theta}{\mu^4r^2}\left(\invcnum\right)^{12}\right)$. Lemma~\ref{lm:l6eigenvaluelowerbound} requires $\alpha= \mathcal{O}\left(\frac{\theta}{\mu^4r^2}\left(\invcnum\right)^{12}\right)$. Lemma \ref{lm:explicitsolution} requires  $\alpha=\mathcal{O}\left(\frac{\theta}{\mu^2r}\left(\invcnum\right)^6\right)$. And Lemma \ref{lm:usimpleexpansion} requires $\alpha= \mathcal{O}\left(\frac{\theta}{\mu^2r}\left(\invcnum\right)^2\right)$. As $C_4=\mathcal{O}\left(\frac{1}{\sqrt{\theta}}\bcnum^{10}\right)$, the additional requirement in Theorem \ref{thm:formalrecovery} requires $\alpha= \mathcal{O}\left(\frac{\theta}{\mu^4r^2}\left(\invcnum\right)^{20}\right)$. Taking the intersections of all these requirements, we can derive the upper bound on $\alpha$ as $ \alpha = \mathcal{O}\left(\frac{\theta}{\mu^4r^2}\left(\invcnum\right)^{20}\right)$.

\begin{proof}
We will prove this theorem by induction.\vspace{1mm}

\noindent \underline{\it Base case:}
At $t=1$, $\hmatL_{(i),0}=0$. As $\lambda_1=\frac{\mu^2 r}{\bn}\gop$, we have  $\hmatS_{(i),1}=\hard{\frac{\mu^2 r}{\bn}\gop}{\matM_{(i)}}$. By definition of hard-thresholding, if the $jk$-th entry of $\hmatS_{(i),1}$ is nonzero, we know $\abs{[\matM_{(i)}]_{jk}}> \frac{\mu^2 r}{\bn}\gop$. Since $\abs{[\matLst_{(i)}]_{jk}}\le \frac{\mu^2 r}{\bn}\gop$ for each $j$ and $k$, we must have $\abs{[\matSst_{(i)}]_{jk}}>0$. This proves Claim \ref{condition:supportcontain} for $t=1$.

Now we will prove Claim \ref{condition:sdifinfnorm} holds when $t=1$. If $[\hmatS_{(i),1}]_{jk}= 0$, we know $\abs{[\matSst_{(i)}]_{jk}+[\matLst_{(i)}]_{jk}}\le \mu^2r/\bn \gop$, thus $\abs{[\matSst_{(i)}]_{jk}}\le 2\mu^2r/\bn \gop$. If $[\hmatS_{(i),1}]_{jk}\neq0$, by the definition of hard-thresholding, we know $[\hmatS_{(i),1}]_{jk}= [\matM_{(i)}]_{jk}= [\matSst_{(i)}]_{jk}+[\matLst_{(i)}]_{jk}$. By rearranging terms, we have $\abs{[\matSst_{(i)}]_{jk}-[\hmatS_{(i),1}]_{jk}}=\abs{[\matLst_{(i)}]_{jk}}\le \mu^2r/\bn \gop $. We hence proved Claim \ref{condition:sdifinfnorm} for $t=1$.

Since $\matE_{(i),1}=\matSst_{(i)}-\hmatS_{(i),1}$, we have $\norm{\matE_{(i),1}}_{\infty}\le 2\frac{\mu^2r}{\bn} \gop$ for each $i$ as well. Also, by Claim \ref{condition:supportcontain}, $\matE_{(i),1}$'s are $\alpha$-sparse. Therefore by Lemma \ref{lm:ldiffinfnorm}, when $\alpha \le \frac{\rhom^2}{4\mu^4r^2C_4^2}\le \frac{\rho^2}{4\mu^4r^2C_4^2} $, $\norm{\hmatL_{(i),1}-\matLst_{(i)}}_{\infty}\le 2\sqrt{\alpha}\frac{\mu^2 r}{\bn}\gop C_4\le \rho\lambda_1$. From the definition of $\epsilon$-optimality and triangle inequality, we know $\norm{\hmatLeps_{(i),1}-\matLst_{(i)}}_{\infty}\le \rho\lambda_1 + \epsilon$. We thus proved Claim \ref{condition:ldifinfnorm} for $t=1$.\vspace{1mm}

\noindent \underline{\it Induction step:}
Now supposing that Claims \ref{condition:supportcontain}, \ref{condition:sdifinfnorm}, and \ref{condition:ldifinfnorm} hold for iterations $1,\cdots,t$, we will show their correctness for the iteration $t+1$.
Since Claim \ref{condition:ldifinfnorm} holds for iteration $t$, we know$\norm{\hmatLeps_{(i),t}-\matLst_{(i)}}_{\infty}\le \rho \lambda_t + \epsilon $ under the condition $\alpha\le \frac{\rho^2}{4\mu^4 r^2 C_4^2 }$. With the choice of $\lambda_{t+1}=\rho \lambda_t+\epsilon$, if the $jk$-th entry of $\hmatS_{(i),t+1}$ is nonzero, we have $\abs{[\matSst_{(i)}]_{jk}+[\matLst_{(i)}]_{jk}-[\hmatLeps_{(i),t}]_{jk}}> \lambda_{t+1}$. Since $\abs{[\matLst_{(i)}]_{jk}-[\hmatL_{(i),t}]_{jk}}\le \lambda_{t+1}$, we must have $\abs{[\matSst_{(i)}]_{jk}}>0$. This proves Claim \ref{condition:supportcontain} for iteration $t+1$.

We will now proceed to prove Claim \ref{condition:sdifinfnorm}. We consider each entry of $\hmatS_{(i),t+1}=\hard{\lambda_{t+1}}{\matSst_{(i)}+\matLst_{(i)}-\hmatLeps_{(i),t}}$. From the definition of hard-thresholding, we know $\abs{[\hmatS_{(i),t+1}]_{jk}-\left([\matSst_{(i)}]_{jk}+[\matLst_{(i)}]_{jk}-[\hmatLeps_{(i),t}]_{jk}\right)}\le \lambda_{t+1}$. Remember that we know $\abs{[\matLst_{(i)}]_{jk}-[\hmatLeps_{(i),t}]_{jk}}\le \lambda_{t+1}$ from the correctness of Claim \ref{condition:ldifinfnorm} at iteration $t$ and the upper bound on $\alpha$, we can derive $\abs{[\hmatS_{(i),t+1}]_{jk}-[\matSst_{(i)}]_{jk}}\le 2\lambda_{t+1}$ by triangle inequality. 
We hence prove Claim \ref{condition:sdifinfnorm}.

For Claim \ref{condition:ldifinfnorm}, since $\matE_{(i),t+1}=\matSst_{(i)}-\hmatS_{(i),t+1}$, we have $\norm{\matE_{(i),t+1}}_{\infty}\le 2\lambda_{t+1}$ for each $i$ as well. Also, by Claim \ref{condition:supportcontain} at iteration $t$, $\matE_{(i),t}$'s are $\alpha$-sparse at iteration $t$. Therefore by Lemma \ref{lm:ldiffinfnorm}, $\norm{\hmatL_{(i),t+1}-\matLst}_{\infty}\le 2\sqrt{\alpha}\mu^2 r C_4 \lambda_{t+1}$. Under the constraint that $\alpha\le \frac{\rho^2}{4\mu^4r^2C_4^2}$, we know $\norm{\hmatL_{(i),t+1}-\matLst}_{\infty}\le\rho\lambda_{t+1}$. From the definition of $\epsilon$-optimality and triangle inequality, we have $\norm{\hmatLeps_{(i),t+1}-\matLst}_{\infty}\le\rho\lambda_{t+1}+\epsilon$. We thus proved Claim \ref{condition:ldifinfnorm} at iteration $t+1$.

Combining them, we can conclude that \ref{condition:supportcontain}, \ref{condition:sdifinfnorm}, and \ref{condition:ldifinfnorm} hold for every $t=1,2,\cdots$. 

Finally, we will prove \eqref{eqn:ugvgdiffinfnorm} and \eqref{eqn:uilvildiffinfnorm}. We have known that $\hmatU_{g,t}\hmatV_{(i),g,t}^T=\matP_{\hmatH_g}\hmatM_{(i)}$, then from similar analysis of \eqref{eqn:sketch}, we have,
$$
\begin{aligned}
&\norm{\hmatU_{g,t}\hmatV_{(i),g,t}^T-\matUst_{g}\matVst_{(i),g}^T}_{\infty}\\
&=\norm{\matP_{\hmatH_g}\left(\matLst_{(i),g}+\matLst_{(i),l}+\matE_{(i)}\right)- \matLst_{(i),g}}_{\infty}\\
&\le\norm{\matP_{\hmatH_g}\matLst_{(i),l} }_{\infty}\\
&+\Big|\Big|\left(\matT_{(0)}\hmatH_g\matLambda_1^{-2}\hmatH_g^T\matT_{(0)}+\matT_{(0)}\hmatH_g\matLambda_1^{-1}\dtmatH_g^T+\dtmatH_g\matLambda_1^{-1}\hmatH_g^T\matT_{(0)}+\dtmatH_g\dtmatH_g^T\right)\\
&(\matLst_{(i),g}+\matE_{(i),t})-\matLst_{(i),g} \Big|\Big|_{\infty}.\\
\end{aligned}
$$
In Lemma \ref{lm:ldiffinfnorm}, we have shown that each term above is upper bounded by $\mathcal{O}(\max_i\norm{\matE_{(i),t}}_{\infty})$. Therefore by Claim \ref{condition:sdifinfnorm}, we have $\norm{\hmatU_{g,t}\hmatV_{(i),g,t}^T-\matUst_{g}\matVst_{(i),g}^T}_{\infty}= \mathcal{O}(\max_i\norm{\matE_{(i),t}}_{\infty})=\mathcal{O}(\lambda_t)=\mathcal{O}(\rho^t+\frac{\epsilon}{1-\rho})$. \eqref{eqn:ugvgdiffinfnorm} follows accordingly by triangle inequality. 

We can prove \eqref{eqn:uilvildiffinfnorm} in a  similar way. This completes our proof of Theorem \ref{thm:formalrecovery}.
\end{proof}

\section{Auxiliary Lemma}\
This section discusses some helper lemmas useful for our main proofs. These lemmas are mostly derived from basic linear algebra and series.

The following lemma is a well-known result and provides an upper bound on the norm of product matrices.
\begin{lemma}
\label{lm:productupperbound}
Form two matrices $\matA\in\mathbb{R}^{m\times n}$ and $\matB\in\mathbb{R}^{n\times p}$, we have,
$$
\norm{\matA\matB}_F\le \norm{\matA}_2\norm{\matB}_F
$$
and
$$
\norm{\matA\matB}_2\le \norm{\matA}_2\norm{\matB}_2
$$
\end{lemma}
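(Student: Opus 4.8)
The plan is to prove the two inequalities separately, each directly from a characterization of the operator norm, so no machinery beyond the definitions is required.

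For the first inequality, I would partition $B$ into its $p$ columns, writing $B = \begin{bmatrix} \vecv_1 & \vecv_2 & \cdots & \vecv_p \end{bmatrix}$ with $\vecv_j \in \mathbb{R}^n$. Then $AB = \begin{bmatrix} A\vecv_1 & \cdots & A\vecv_p \end{bmatrix}$, so that $\norm{AB}_F^2 = \sum_{j=1}^p \norm{A\vecv_j}_2^2$. Applying the defining property of the operator norm, $\norm{A\vecv_j}_2 \le \norm{A}_2 \norm{\vecv_j}_2$, to each column yields $\norm{AB}_F^2 \le \norm{A}_2^2 \sum_{j=1}^p \norm{\vecv_j}_2^2 = \norm{A}_2^2 \norm{B}_F^2$, and taking square roots gives the claim. (Symmetrically, one could instead bound via rows of $A$ using $\norm{AB}_F \le \norm{A}_F \norm{B}_2$, but the stated direction is the column argument.)

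For the second inequality, I would use the variational characterization $\norm{AB}_2 = \sup_{\vecx \in \mathbb{R}^p,\, \norm{\vecx}_2 = 1} \norm{AB\vecx}_2$. For any unit vector $\vecx$, submultiplicativity of the operator norm applied twice gives $\norm{AB\vecx}_2 \le \norm{A}_2 \norm{B\vecx}_2 \le \norm{A}_2 \norm{B}_2 \norm{\vecx}_2 = \norm{A}_2 \norm{B}_2$; taking the supremum over $\vecx$ finishes the proof.

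There is no genuine obstacle here: the statement is a standard consequence of the definition of the induced $\ell_2$ (spectral) norm, and the only thing to be careful about is to invoke the correct compatibility inequality $\norm{A\vecv}_2 \le \norm{A}_2\norm{\vecv}_2$ rather than something stronger. The lemma is included purely as a convenient reference for the norm manipulations used repeatedly in the proofs of Lemmas~\ref{lm:lbdabcdbound} and~\ref{lm:usimpleexpansion}.
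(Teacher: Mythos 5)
Your proof is correct and complete. Note that the paper itself does not actually spell out a proof of this lemma—it simply states that the result is standard and cites \cite{ruoyuconvergence}—so you have supplied a self-contained argument where the paper defers to a reference. Both halves of your argument are the textbook routes: the column-wise decomposition of $B$ together with $\norm{A\vecv_j}_2\le\norm{A}_2\norm{\vecv_j}_2$ for the Frobenius bound, and the variational characterization $\norm{AB}_2=\sup_{\norm{\vecx}_2=1}\norm{AB\vecx}_2$ applied twice for submultiplicativity of the spectral norm. There are no gaps.
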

\begin{proof}
    The proof is straightforward and can be found in \cite{ruoyuconvergence}.
\end{proof}

\revise{\begin{lemma}\label{lem_series}
For $x,y\in[0,1)$ such that $x+y<1$, the following relation holds:
\begin{subequations}
    \begin{align}
        \label{eqn:series1}
\sum_{p_1+p_2\ge 1}x^{p_1+p_2}\le \frac{2x}{1-x}.
    \end{align}
\end{subequations}
\end{lemma}}
\begin{proof}
This proof follows from the direct calculation.

\begin{align*}
&\sum_{p_1+p_2\ge 1}x^{p_1+p_2}=\sum_{p_1=0}^{\infty}\sum_{p_2=0}^{\infty}x^{p_1+p_2}-1\\
&= \left( \sum_{p_1=0}^{\infty}x^{p_1}\right)\left(\sum_{p_2=0}^{\infty}x^{p_2}\right)-1 = \frac{1}{(1-x)^2}-1\\
&=\frac{2x-x^2}{(1-x)^2}\le\frac{2x}{1-x}.
\end{align*}

\end{proof}

We also present a lemma related to the Schur complement of block matrices.

\revise{\begin{lemma}
\label{lm:schurcomplement}
For symmetric matrices $\matA_0 \in \mathbb{R}^{r_0\times r_0}, \matA_{1}\in \mathbb{R}^{r_1\times r_1},\cdots, \matA_{N}\in \mathbb{R}^{r_1\times r_1}$, and $\matB_{i}\in \mathbb{R}^{r_0\times r_N}$ for $i\in\{1,\cdots,N\}$, we can construct a symmetric block matrix $\matC$ as,
\begin{equation}
\matC = 
\begin{pmatrix}
\matA_0   & \matB_1     & \matB_2   &\cdots & \matB_N\\
\matB_1^T & \matA_1     & 0     &\cdots & 0\\
\matB_2^T & 0       & \matA_2   &\cdots & 0\\
\vdots     &\vdots   &\ddots &\cdots & 0\\
\matB_N^T & 0       & 0     &\cdots & \matA_N
\end{pmatrix}.
\end{equation}
Then, $\matC$ is positive definite if and only if $\matA_1,\matA_2,\cdots,\matA_N$ are positive definite and $\matA_0 -\sum_{i=1}^N\matB_i\matA_i^{-1}\matB_i^T$ is positive definite. 
\end{lemma}}
\revise{\begin{proof}
Sicne $\matA_i$'s are positive definite, they are invertible. Thus we can decompose $\matC$ as
\begin{align*}
&\matC =     \underbrace{\begin{pmatrix}
\matI   & \matB_1\matA_1^{-1}     & \matB_2\matA_2^{-1}   &\cdots & \matB_N\matA_N^{-1}\\
0 & \matI     & 0     &\cdots & 0\\
0 & 0       & \matI   &\cdots & 0\\
\vdots     &\vdots   &\vdots &\ddots & 0\\
0 & 0       & 0     &\cdots & \matI
\end{pmatrix}}_{\matC_1}
\underbrace{\begin{pmatrix}
\matA_0-\sum_i\matB_i\matA_i^{-1}\matB_i^T   & 0    &\cdots & 0\\
0 & \matA_1       &\cdots & 0\\
\vdots     &\vdots   &\ddots  & 0\\
 0       & 0     &\cdots & \matA_N
\end{pmatrix}}_{\matC_2}\\
&\quad \underbrace{\cdot\begin{pmatrix}
\matI   & 0     & 0   &\cdots & 0\\
\matA_1^{-1}\matB_1^T & \matI     & 0     &\cdots & 0\\
\matA_2^{-1}\matB_2^T & 0       & \matI  &\cdots & 0\\
\vdots     &\vdots   &\vdots &\ddots & 0\\
\matA_{N}^T\matB_N^T & 0       & 0     &\cdots & \matI
\end{pmatrix}}_{\matC_1^T}.
\end{align*}
On the right hand side, $\matC_1$ and $\matC_1^T$ are both invertible. Thus, $\matC$ is positive definite if and only if $\matC_2$ is positive definite. Since $\matC_2$ is a block diagonal matrix, we prove the statement in the lemma.  
\end{proof}}

The following lemma provides an eigenvalue lower bound on the product of three matrices.
\revise{\begin{lemma}
\label{lm:lminatba}
For matrix $\matA \in \mathbb{R}^{n\times n}$, and symmetric positive semidefinite matrix $\matB \in \mathbb{R}^{n\times n}$, we know that,
\begin{align*}
\lambda_{\min}\left(\matA^T\matB\matA\right)\ge \lambda_{\min}\left(\matB\right)\lambda_{\min}\left(\matA^T\matA\right).
\end{align*} 
\end{lemma}}

\revise{\begin{proof}
The proof follows from the Courant–Fischer–Weyl variational principle.
\begin{align*}
&\lambda_{\min}\left(\matA^T\matB\matA\right) = \min_{\norm{\vecv}=1}\vecv^T\matA^T\matB\matA\vecv\\
&\ge \lambda_{\min}\left(\matB\right)\min_{\norm{\vecv}=1}\norm{\matA\vecv}^2\\
&= \lambda_{\min}\left(\matB\right)\lambda_{\min}\left(\matA^T\matA\right).
\end{align*}
\end{proof}}

We finally present the lemma that provides an upper bound of the operator norm of block matrices.

\revise{\begin{lemma}
\label{lm:blockmatrixopnormupperbound}
    For a symmetric block matrix $\matC$ defined as
    \begin{equation}
\matC = 
\begin{pmatrix}
\matA_1   & \matB_{12}     & \matB_{13}   &\cdots & \matB_{1N}\\
\matB_{12}^T & \matA_2     & \matB_{23}     &\cdots & \matB_{2N}\\
\matB_{13}^T & \matB_{23}^T       & \matA_3   &\cdots & \matB_{3N}\\
\vdots     &\vdots   &\vdots &\ddots & \vdots\\
\matB_{1N}^T & \matB_{2N}^T       & \matB_{3N}^T     &\cdots & \matA_N
\end{pmatrix},
\end{equation}
where $\matA_{i}$'s are symmetric, we have,
\begin{align}
\label{apeqn:snormaplusb}
    \norm{\matC}\le \max_{i=1,\cdots,N}\{\norm{\matA_i}\}+\sqrt{2\sum_{i< j}\norm{\matB_{ij}}^2}.
\end{align}
\end{lemma}}
Note: in~\eqref{apeqn:snormaplusb}, the diagonal blocks and off-diagonal blocks are treated differently.
\begin{proof}
    We first prove for the special case where $\matB_{ij}=0$. In this case, 
    \begin{align*}
    &\norm{\matC}^2=\max_{\norm{\vecv}=1}\norm{\matC\vecv}^2\\
    &=\max_{\norm{\vecv}=1}\sum_{i=1}^N\norm{\matA_i\vecv_i}^2\le \max_{\norm{\vecv}=1}\sum_{i=1}^N\norm{\matA_i}^2\norm{\vecv_i}^2\\
    &\le \max_{i=1,\cdots,N}\{\norm{\matA_i}^2\}\times\sum_{i=1}^N\norm{\vecv_i}^2=\max_{i=1,\cdots,N}\{\norm{\matA_i}^2\}.
    \end{align*}

We then prove for the special case where $\matA_i=0$. We have
\begin{align*}
    &\norm{\matC}^2=\max_{\norm{\vecv}=1}\norm{\matC\vecv}^2=\max_{\norm{\vecv}=1}\sum_{i=1}^N\norm{\sum_{j\neq i}\matB_{ij}\vecv_j}^2\\
    &=\max_{\norm{\vecv}=1}\sum_{i=1}^N\sum_{j,k\neq i}\vecv_k^T\matB_{ik}^T\matB_{ij}\vecv_j\le \max_{\norm{\vecv}=1}\sum_{i=1}^N\sum_{j,k\neq i}\norm{\vecv_k}\norm{\matB_{ik}}\norm{\matB_{ij}}\norm{\vecv_j}\\
    &= \max_{\norm{\vecv}=1}\sum_{i=1}^N\left(\sum_{j\neq i}\norm{\matB_{ij}}\norm{\vecv_j}\right)^2\le \max_{\norm{\vecv}=1}\sum_{i=1}^N \left(\sum_{j\neq i}\norm{\matB_{ij}}^2 \right)\left(\sum_{j}\norm{\vecv_{j}}^2 \right)\\
    &=2\sum_{i<j}\norm{\matB_{ij}}^2,
\end{align*}
where we used Cauchy-Schwarz inequality in the second inequality.

By applying triangle inequality of the matrix operator norm, we can combine the upper bounds derived from two special cases and obtain~\eqref{apeqn:snormaplusb}.
\end{proof}
\vskip 0.2in
\bibliography{sample}

\end{document}